\newcommand\numberthis{\addtocounter{equation}{1}\tag{\theequation}}
\newtheorem{theorem}{Theorem}
\newtheorem{lemma}{Lemma}
\newtheorem{dfn}{Definition}
\DeclareMathOperator*{\argmax}{arg\,max}
\DeclareMathOperator*{\argmin}{arg\,min}
\title{Counterfactual harm}
\author{%
  Jonathan G. Richens\thanks{jonrichens@deepmind.com} \\
  DeepMind\\
  \And
  Rory Beard \\
  Comind \\
  \AND
  Daniel H. Thompson\\
  Babylon \\
}
\begin{document}

\maketitle

\begin{abstract}

To act safely and ethically in the real world, agents must be able to reason about harm and avoid harmful actions. However, to date there is no statistical method for measuring harm and factoring it into algorithmic decisions. In this paper we propose the first formal definition of harm and benefit using causal models. We show that any factual definition of harm must violate basic intuitions in certain scenarios, and show that standard machine learning algorithms that cannot perform counterfactual reasoning are guaranteed to pursue harmful policies following distributional shifts. We use our definition of harm to devise a framework for harm-averse decision making using counterfactual objective functions. We demonstrate this framework on the problem of identifying optimal drug doses using a dose-response model learned from randomized control trial data. We find that the standard method of selecting doses using treatment effects results in unnecessarily harmful doses, while our counterfactual approach allows us to identify doses that are significantly less harmful without sacrificing efficacy. 
\end{abstract}

\section{Introduction}\label{section: intro}

Machine learning algorithms are increasingly used for making highly consequential decisions in areas such as medicine \citep{de2018clinically,topol2019high,obermeyer2019dissecting}, criminal justice \citep{zavrvsnik2021algorithmic,angwin2016machine,kehl2017algorithms}, finance \citep{johnson2019artificial,belanche2019artificial,addo2018credit}, and autonomous vehicles \citep{schwarting2018planning}. These algorithms optimize for outcomes such as survival time, recidivism rate, and financial cost. However, these are purely factual objectives, depending only on \textit{what} outcome occurs. On the other hand, human decision making incorporates preferences that depend not only on the outcome, but \textit{why} it occurred.

For example, consider two treatments for a disease which, when left untreated, has a $50\%$ mortality rate. Treatment 1 has a $60\%$ chance of curing a patient, and a $40\%$ chance of having no effect, with the disease progressing as if untreated. Treatment 2 has an $80\%$ chance of curing a patient and a $20\%$ chance of killing them. Treatments 1 and 2 have identical recovery rates, hence any agent that selects actions based solely on the factual outcome statistics (e.g. the effect of treatment on the treated \citep{heckman1992randomization}) will not be able to distinguish between these two treatments. However, doctors systematically favour treatment 1 as it achieves the same recovery rate but never harms the patient---there are no patients that would have survived had they not been treated. On the other hand, doctors who administer treatment 2 risk harming their patients---there are patients who die following treatment who would have lived had they not been treated. While treatments 1 and 2 have the same mortality rates, the resulting deaths have different causes (doctor or disease) which have different ethical implications.

Determining what would have happened to a patient if the doctor hadn't treated them is a counterfactual inference, which cannot be identified from outcome statistics alone but requires knowledge of the functional causal relation between actions and outcomes \citep{balke1994counterfactual}. However, standard machine learning algorithms are limited to making factual inferences---e.g. learning correlations between actions and outcomes. How then can we express these ethical preferences as objective functions and ultimately incorporate them into machine learning algorithms? This question has some urgency, given that machine learning algorithms currently under development or in deployment will be unable to reason about harm and factor it into their decisions.

The concept of harm is foundational to many ethical codes and principles; from the Hippocratic oath \citep{sokol2013first} to environmental policy \citep{lin2006unifying}, and from the foundations of classical liberalism \citep{mill2006liberty}, to Asimov's laws of robotics \citep{asimov2004robot}. Despite this ubiquity, there is no formal statistical definition of harm. We address this by translating the predominant account of harm \citep{klocksiem2012defense,feinberg1987harm} into a statistical measure---\emph{counterfactual harm}. Our definition of harm enables us for the first time to precisely answer the following basic questions,
\begin{description}
    \item[Question 1] Did the actions of an agent cause harm, and if so, how much?
    \item[Question 2] How much harm can we expect an action to cause prior to taking it?
    \item[Question 3] How can we identify actions that balance the expected harms and benefits?
\end{description}
In section \ref{section: preliminaries} we introduce concepts from causality, ethics, and expected utility theory used to derive our results. In section \ref{section: counterfactual harm} we present our definitions of harm and benefit, and in section \ref{section: hard decisions} we describe how harm-aversion can be factored into algorithmic decisions. In section \ref{section: no go} we prove that algorithms based on standard statistical learning theory are incapable of reasoning about harm and are guaranteed to pursue harmful policies following certain distributional shifts, while our approach using counterfactual objective functions overcomes these problems. Finally, in section \ref{section: experiments} we apply our methods to an interpretable machine learning model for determining optimal drug doses, and compare this to standard algorithms that maximize treatment effects.

\vspace{-3mm}

\section{Prelimiaries}\label{section: preliminaries}

In this section we introduce some basic principles from causal modelling, expected utility theory and ethics that we use to derive our results.

\vspace{-3mm}
\subsection{Structural causal models}

We provide a brief overview of structural causal models (SCMs) (see Chapter 7 of \citep{pearl2009causality} and \citep{bareinboim2020pearl} for reviews). SCMs represent observable variables $X^i$ as vertices of a directed acyclic graph (DAG), with directed edges representing causal relations between $X^i$ and their direct causes (parents), which includes observable (endogenous) parents $\text{pa}_i$ and a single exogenous noise variable $E^i$. The value of each variable is assigned by a deterministic function $f_i$ of these parents.

\begin{dfn}[Structural Causal Model (SCM)] \label{dfn: scm}
A structural causal model $\mathcal M = \left<\mathcal G, E, X, F, P \right>$ specifies:
\begin{enumerate}
\item a set of observable (endogenous) random variables $X=\{X^1,\dots, X^N\},$
\item a set of mutually independent noise (exogenous) variables $E=\{E^1,\dots,E^N\}$ with joint distribution $P(E = e) = \prod_{i=1}^N P(E^i = e^i)$ (assuming causal sufficiency \citep{pearl2009causality})
\item a directed acyclic graph $\mathcal G$, whose nodes are the variables $E\cup  X$ with directed edges from $\text{pa}_i, \,E^i$ to $X^i$, where $\text{pa}_i\subseteq X$ denotes the endogenous parents of $X^i$. 
\item a set of functions (mechanisms) $ F=\{f_1,\dots, f_N\}$, where the collection $F$ forms a mapping from $E$ to $X$, $x^i := f_i(\text{pa}_i, e^i), \text{ for } i=1,\dots, N,$
\end{enumerate}
\end{dfn}

By recursively applying condition 4 every observable variable can be expressed as a function of the noise variables alone, $X^i(e) = x^i$ where $E = e$ is the joint state of the noise variables, and the distribution over unobserved noise variables induces a distribution over the observable variables, 
\begin{equation}\label{eq: marginals from u}
    P(X = x) = \int\limits_{e : X(e) = x} P(E = e) 
\end{equation}
\noindent where $X = x$ is the joint state over $X$.

The power of SCMs is that they specify not only the joint distribution $P(X = x)$ but also the distribution of $X$ under all interventions, including incompatible interventions (counterfactuals). Interventions describe an external agent changing the underlying causal mechanisms $F$. For example, $\text{do}(X^1 = x)$ denotes intervening to force $X^1 = x$ regardless of the state of its parents, replacing $x^1 := f_1(\text{pa}_1, e^1)$ $\rightarrow$ $x^1 := x$. The variables in this post-intervention SCM are denoted $X^i_{x}$ or $X^i_{X^1 = x}$ and their interventional distribution is given by \eqref{eq: marginals from u} under the updated mechanisms. 

These interventional distributions quantify the effects of actions and are used to inform decisions. For example, the conditional average treatment effect (CATE) \citep{shpitser2012identification},  
\begin{equation}\label{eq: cate}
    \text{CATE}(x) = \mathbb E[Y_{T = 1} \vert x] - \mathbb E[Y_{T = 0} \vert x]
\end{equation}
measures the expected value of outcome $Y$ for treatment $T = 1$ compared to control $T=0$ conditional on $X$, and is used in decision making from precision medicine to economics \citep{abrevaya2015estimating,shalit2017estimating}.

Counterfactual distributions determine the probability that other outcomes would have occurred had some precondition been different. For example, for $X, Y \in \{T, F\}$, the causal effect $P(Y_{X = T} = T)$ is insufficient to determine if $\text{do}(X = T)$ was a necessary cause of $Y = T$ \citep{halpern2016actual}. Instead this is captured by the counterfactual probability of necessity \citep{tian2000probabilities},
\begin{equation}\label{eq: prob necessity}
   \text{PN} =  P(Y_{X=F} = F \vert X = T, Y = T)
\end{equation}
\noindent which is the probability that $Y$ would be false if $X$ had been false (the counterfactual proposition), given that $X$ and $Y$ were both true (the factual data). For example, if PN = 0 then $Y=T$ would have occured regardless of $X=T$, whereas if PN = 1 then $Y=T$ could not have occurred without $X= T$. Equation \eqref{eq: prob necessity} involves the joint distribution over incompatible states of $Y$ under different interventions, $Y = T$ and $Y_{X=F} = F$, which cannot be jointly measured and hence \eqref{eq: prob necessity} cannot be identified from data alone. However, \eqref{eq: prob necessity} can be calculated from the SCM using \eqref{eq: marginals from u} for the combined factual and counterfactual propositions, 
\begin{equation}\label{eq u to p counterfactual}
    P(Y_{X = F} = F, X = T, Y= T) = \int\limits_{\substack{Y_{X = F}(e) = F \\ Y(e) = T, X(e) = T}}P(E = e)
\end{equation}
This ability to ascribe causal explanations for data has made SCMs and counterfactual inference a key ingredient in statistical definitions of blame, intent and responsibility \citep{halpern2018towards,kleiman2015inference,lagnado2013causal}, and has seen applications in AI research ranging from explainability \citep{wachter2017counterfactual,madumal2020explainable}, safety \citep{everitt2021reward,everitt2019modeling}, and fairness \citep{NIPS2017_a486cd07}, to reinforcement learning \citep{forney2017counterfactual,buesing2018woulda} and medical diagnosis \citep{richens2020improving}. 

Our definitions and theoretical results are derived within the SCM framework as is standard in causality research \citep{pearl2009causality}. However, there is a growing body of work developing deep learning models that are capable of supporting counterfactual reasoning in complex domains including medical imaging \citep{pawlowski2020deep}, visual question answering \citep{niu2021counterfactual}, and vision-and-language navigation in robotics \citep{parvaneh2020counterfactual} and text generation \citep{madaan2021generate}. We discuss these approaches in relation to our work in Appendix \ref{supp note: limitations}.

\subsection{The counterfactual comparative account of harm}

The concept of harm is deeply embedded in ethical principles, codes and law. One famous example is the bioethical principle ``first, do no harm'' \citep{smith2005origin}, asserting that the moral responsibility for doctors to benefit patients is superseded by their responsibility not to harm them \citep{ross2011foundations}. Another example is John Stuart Mill's harm principle which forms the basis of classical liberalism \citep{mill2006liberty} and inspired the ``zeroth law'' of Asimov's fictional robot governed society, which states that ``A robot may not harm humanity or, by inaction, allow humanity to come to harm'' \citep{asimov2004robot}. 

Despite these attempts to codify harm into rules for governing human and AI behaviour, it is not immediately clear what we mean when we talk about harm. This can lead to confusion---for example, does ``allow humanity to come to harm'' describe an agent causing harm by inaction \citep{bradley2012doing} or failing to benefit? The meaning and measure of harm also has important practical ramifications. For example, establishing negligence in tort law requires establishing that significant harm has occurred \citep{wright1985causation}.

This has motivated work in philosophy, ethics and law to rigorously define harm, with the most widely accepted definition being the \emph{counterfactual comparative account} (CCA) \citep{feinberg1986wrongful,hanser2008metaphysics,klocksiem2012defense},

\newpage
\begin{dfn}[CCA]\label{def cca}
The counterfactual comparative account of harm and benefit states, 
\vspace{-1mm}
\begin{quote}
   ``An event $e$ or action $a$ harms (benefits) a person overall if and only if she would have been on balance better (worse) off if $e$ had not occurred, or $a$ had not been performed.''
\end{quote}
\end{dfn}
The CCA quantifies harm by comparing how well off a person is (their factual outcome) compared to how well off they would have been (their counterfactual outcome) had the agent not acted. Generating these counterfactuals necessitates a causal model of the agents actions and their consequences, and we determine how ``better (worse) off’’ the person is using an expected utility analysis \citep{morgenstern1953theory}. While the wording of the CCA suggests that we always compare to the counterfactual world where the agent takes no action, a more natural and general approach is to measure harm compared to a specific default action or policy. For example, when measuring the harm caused by a medicine we may intuitively want to compare to the outcomes that would have occurred with a placebo, and for the harm caused by negligent medical decisions the natural choice is to compare to a standard clinical decision policy. To this end, we measure harm compared to a default action (or policy), see Appendix \ref{supp note: default policy}) for further discussion.  

While the CCA is the predominant definition of harm and forms the basis of our results, alternative definitions have been proposed \citep{norcross2005harming,harman2009harming}, motivated by scenarios where the CCA appears to give counter-intuitive results \citep{bradley2012doing}. Surprisingly, we find these problematic scenarios are identical to those raised in the study of actual causality \citep{halpern2016actual}, and can be resolved with a formal causal analysis. To our knowledge, this connection between the harm literature and actual causality has not been noted before, and we discuss these alternative definitions in Appendix \ref{supp note: accounts of harm} and how the arguments surrounding the CCA can be resolved in Appendix \ref{supp note: cca problems} in the hope this will stimulate discussion between these two fields.

\subsection{Decision theoretic setup}\label{section: expected utility theory}

In this section we present our setup for calculating the CCA (Definition \ref{def cca}). The CCA describes a person (referred to as the \textit{user}) being made better or worse off due to the actions of an agent. We focus on the simplest case involving single agents performing single actions, though ultimately our definitions can be extended to multi-agent or sequential decision problems (as discussed in Appendix \ref{supp note: related work}). For causal modelling in these scenarios we refer the reader to \citep{dawid2002influence,heckerman1994decision,everitt2019modeling}.

 \begin{figure}[h!]
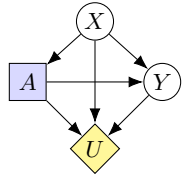

     \centering
\resizebox{75pt}{65pt}{
\begin{influence-diagram}
  \node (help) [draw=none] {};
  \node (X) [above = of help] {$X$};
  \node (U) [below = of help, utility] {$U$};
  \node (Y) [right = of help] {$Y$};
  \node (A) [left = of help, decision] {$A$};
  \edge {A,X,Y} {U};
  \edge {A,X} {Y};
  \edge {X} {A};
\end{influence-diagram}
}
     \caption{SCM $\mathcal M$ depicting the agent's action $A$, context $X$, outcomes $Y$ and utility function $U$.}
\label{fig:decision_dag}
\end{figure}

We describe the environment with a structural causal model $\mathcal M$ which specifies the distribution over environment states $W$ as endogenous variables, $P(w ; \mathcal M)$. For single actions the environment variables can be divided into three disjoint sets $W = A\cup X \cup Y$ (Figure \ref{fig:decision_dag} a)); the agent's action $A$, outcome variables $Y$ that are descendants of $A$ and so can be influenced by the agent's action, and context variables $X$ that are non-descendants of nor in $A$. For simplicity we assume no unobserved confounders (causal sufficiency \citep{pearl2009causality}) in the derivation of our theoretical results. In the following we assume knowledge of the SCM, and in Appendix \ref{supp note: limitations} we discuss this assumption and describe how tight upper bounds on the harm can be determined when the SCM is unknown.

The user's preferences over environment states are specified by their utility function $U(w) = U(a, x , y)$, and for a given action the user's expected utility is,
\begin{equation}\label{eq: expected utility}
    \mathbb E[U \vert a, x] = \int_y P(y \vert a, x) U(a, x, y)
\end{equation}
In sections \ref{section: counterfactual harm}-\ref{section: hard decisions} we focus on the case where the agent acts to maximize the user's expected utility, with optimal actions $a_\text{max} = \argmax_a \mathbb E[U \vert  a, x]$. While this is similar to the standard setup in expected utility theory \citep{morgenstern1953theory,savage1972foundations,fishburn2013foundations}, note that $U$ describes the preferences of the user, and in general the agent can pursue any objective and could be entirely unaware of the user's preferences. In section \ref{section: no go} we extend our analysis to agents pursuing arbitrary objectives.

\noindent \textbf{Example 1: conditional average treatment effects.} Returning to our example from section \ref{section: intro}, consider an agent choosing between a placebo $A = 0$, or treatments $1$ and $2$. $Y = 1$ indicates recovery and $Y = 0$ mortality. In the trial, harm is measured by comparison to `no treatment' $\bar a = 0$. $X=x$ describes any variables that can potentially influence the agent's choice of intervention and/or $Y$, e.g. the patient's medical history. 

Consider patients (users) whose preferences are fully determined by survival, e.g. $U(a, x, y) = y$. The expected utility is $\mathbb E[U\vert a, x] $ $= \sum_y P(y\vert a, x) U(a, x, y)$ $=\sum_y y P(y_a \vert  x)$ $= \mathbb E[Y_a \vert x]$, where in the last line we have use the fact that there are no confounders between $A$ and $Y$. The agent's policy is therefore to maximize the treatment effect $a_\text{max} = \argmax_a \mathbb E[Y_a\vert x]$, which is equivalent to $a_\text{max} =$  $\argmax_a  \{\mathbb E[Y_a\vert x] - \mathbb E[Y_0 \vert x]\}$ i.e. maximizing the CATE \eqref{eq: cate}---a standard objective for identifying optimal treatments \citep{prosperi2020causal,bica2021real}. In Appendix \ref{supp note: treatment model} we derive an SCM model for the treatments described in the introduction, and show that $\mathbb E[Y_{A = 1}] = \mathbb E[Y_{A = 2}] = 0.8$, i.e. the expected utility and CATE is the same for treatments $1$ and $2$. 

\section{Counterfactual harm}\label{section: counterfactual harm}

We now present our definitions for harm and benefit based on Definition \ref{def cca}. For the examples described in this paper we focus on simple environments with single outcome variables, for which harm can be determined by comparing the factual utility received by the user to the utility they would have received had the agent taken some default action $A = \bar a$ (Definition \ref{def: counterfactual harm}). In Appendix \ref{supp note: default policy} we describe how this default action can be determined. Some more complicated situations call for a path-dependent measure of harm, which we provide in Appendix \ref{supp note: expanded definitions} along with examples in Appendices \ref{supp note: cca problems}-\ref{supp note: default policy}. In the following, counterfactual states are identified with an $*$, e.g. $Z = z^*$. First we define the harm caused by an action given we observe the outcome (Question 1).\\ 

\begin{dfn}[Counterfactual harm \& benefit]\label{def: counterfactual harm}
The harm caused by action $A = a$ given context $X = x$ and outcome $Y = y$ compared to the default action $A = \bar a$ is, 
\begin{equation}\label{eq: exp harm}
h(a, x, y; \mathcal M)\!=\!\!\int\limits_{y^*}\!\!P(Y_{\bar a} = y^*\vert a, x, y ; \mathcal M) \!\max\!\left\{0,  U(\bar a, x, y^*) \!-\! U(a,x, y) \right\}
\end{equation}
and the expected benefit is
\begin{equation}\label{eq: exp ben}
b(a, x, y; \mathcal M)\!=\!\!\int\limits_{ y^*}\!\!P(Y_{\bar a} = y^*, \vert a, x, y ; \mathcal M) \!\max\!\left\{0,  U(a, x, y) \!-\! U(\bar a ,x, y^*) \right\}
\end{equation}
where $\mathcal M$ is the SCM describing the environment and $U$ is the user's utility.
\end{dfn}

The counterfactual harm (benefit) is the expected increase (decrease) in utility had the agent taken the default action $A = \bar a$, given they took action $A = a$ in context $X = x$ resulting in outcome $Y = y$. Taking the max in \eqref{eq: exp harm} and \eqref{eq: exp ben} ensures that we only include counterfactual outcomes where the utility is higher (lower) in the counterfactual world. Note that $X_{\bar a} = X$ as $X$ is not a descendant of $A$, so the factual and counterfactual contexts are identical \citep{shpitser2012counterfactuals}. While we focus on the standard decision theoretic analysis of actions, it is trivial to extend our analysis to naturally occurring events that are not determined by an agent (e.g. by identify $A$ with an event rather than an action). 

To determine the expected harm or benefit of an action prior to taking it (Question 2) we calculate the expected value of \eqref{eq: exp harm} and \eqref{eq: exp ben} over the outcome distribution, e.g. $\mathbb E[h \vert a, x ; \mathcal M] =  \int_{y} P(y\vert  a, x ;\mathcal M) h(a, x, y ; \mathcal M)$. Note these form a counterfactual decomposition of the expected utility. 


\begin{theorem}[harm-benefit trade-off]\label{theorem: hb tradeoff}
The difference in expected utility for action $A = a$ and the default action $A = \bar a$ is given by,
\begin{equation}\label{eq: HB tradeoff}
\mathbb E[U \vert a, x ] - \mathbb E[U \vert \bar a, x] =  \mathbb E[b \vert a, x ; \mathcal M] - \mathbb E[h \vert a , x ; \mathcal M]
\end{equation}
{\normalfont (Proof: see Appendix \ref{supp note: harm tradeoff}).}
\end{theorem}

\noindent \textbf{Example 2:} Returning to Example 1, the expected counterfactual harm for default action $\bar a = 0$ is, $\mathbb E[h \vert a ; \mathcal M] = \sum_{y^*, y}P(Y_{\bar a} = y^*, y \vert a ;\mathcal M)\max\left\{0, y^* - y \right\}= P(Y_0 = 1, Y_a = 0;\mathcal M)$ where we have used $U(a, y) = y$ and the fact that $Y_{\bar a} \perp A$ and there are no latent confounders to give $P(Y_0 = y^*, Y = y \vert a) = P(Y_0 = y^*, Y_a = y)$. The harm is therefore the probability that a patient dies following treatment $A = a$ and would have recovered had they received no treatment. In Appendix \ref{supp note: treatment model} we show that $P(Y_0 = 1, Y_1 = 0) = 0$ (treatment 1 causes zero harm) and $P(Y_0 = 1, Y_2 = 0) = 0.1$ (treatment 2 causes non-zero harm), reflecting our intuition that treatment 2 is more harmful than treatment 1, despite having the same causal effect / CATE. Hence, the counterfactual harm allows us to differentiate between these two treatments. 

\vspace{-2mm}

\section{Harm in decision making}\label{section: hard decisions}

Now we have expressions for the expected harm and benefit, how can we incorporate these into the agent's decisions? Consider two actions $a, a'$ such that $\mathbb E [b \vert a', x;\mathcal M] =$ $\mathbb E [b \vert a, x;\mathcal M] + K$ and $\mathbb E [h \vert a', x;\mathcal M] =$ $\mathbb E [h \vert a, x;\mathcal M] + K$ where $K \in \mathbb R$. By Theorem \ref{theorem: hb tradeoff} they must have the same expected utility, as $\mathbb E[U\vert a', x] = \mathbb E [b \vert a', x;\mathcal M] - \mathbb E [h \vert a', x;\mathcal M]+\mathbb E[U \vert \bar a, x]$ $= \mathbb E [b \vert a, x; \mathcal M] + K - \mathbb E [h \vert a, x;\mathcal M] - K +\mathbb E[U \vert \bar a, x] $ $= \mathbb E[U \vert a,  x]$. Therefore expected utility maximizers are indifferent to harm, and will are willing to increase harm for an equal increase in benefit. 

We say that an agent is \textit{harm averse} if they are willing to risk causing harm only if they expect a comparatively greater benefit. Theorem \ref{theorem: hb tradeoff} suggests a simple way to overcome harm indifference in the expected utility by assigning a higher weight to the harm component of the decomposition \eqref{eq: HB tradeoff}. If instead we maximize an adjusted expected utility $\mathbb E[V \vert a, x;\mathcal M] := \mathbb E[U \vert x] + \mathbb E [b \vert a, x;\mathcal M] - (1+\lambda) \mathbb E [h \vert a, x;\mathcal M]$ where $\lambda \in \mathbb R$, then if $\mathbb E[V \vert a, x ;\mathcal M] = \mathbb E[V \vert a', x;\mathcal M]$ and $\mathbb E [h \vert a', x;\mathcal M] = \mathbb E [h \vert a, x;\mathcal M] + K$ then $\mathbb E [b \vert a', x;\mathcal M] = \mathbb E [b \vert a , x;\mathcal M] + (1+\lambda) K$, and the agent is willing to trade harm for benefit at a $1:(1+\lambda)$ ratio.
Taking inspiration from risk aversion \citep{howard1972risk}, we refer to $\lambda$ as the agent's \textit{harm aversion}. To achieve the desired trade-off we replace the utility function with the harm-penalized utility (HPU),

\begin{dfn}[Harm penalized utility (HPU)]\label{def: hpu}
For utility $U$ and an environment described by SCM $\mathcal M$, the harm-penalized utility (HPU) is, 
\begin{equation}
    V(a, x, y;\mathcal M) = U(a, x, y) - \lambda h(a, x, y;\mathcal M)\label{eq: hpu}
\end{equation}
where $h(a, x, y;\mathcal M)$ is the harm (Definition \ref{def: counterfactual harm}) and $\lambda$ is the harm aversion. 
\end{dfn}

For $\lambda = 0$ the agent is \textit{harm indifferent} and optimizes the standard expected utility. For $\lambda > 0$ the agent is \textit{harm averse}, and is willing to reduce the expected utility to achieve a greater reduction to the expected harm. For $\lambda < 0$ the agent is \textit{harm seeking}, and is willing to reduce the expected utility in order to increase the expected harm. 


Maximizing the expected HPU allows agents to choose actions that balance the expected benefits and harms (Question 2 section \ref{section: intro}). We refer to the expected HPU \eqref{eq: hpu} as a \textit{counterfactual objective function} as $V(a, x, y ; \mathcal M)$ is a counterfactual expectation that cannot be evaluated without knowledge of $\mathcal M$. We refer to the expected utility as a \textit{factual objective function}, as it can be evaluated on data alone without knowledge of $\mathcal M$. As we show in the following example, harm-aversion can lead to very different behaviours compared to other factual approaches to safety such as risk aversion \citep{howard1972risk}.

\noindent \textbf{Example 3: assistant paradox.} Alice invests $\$80$ in the stock market, and expects a normally distributed return $Y \sim \mathcal N(\mu, \sigma^2)$ with $\mu = \sigma = \$100$. She asks her new AI assistant to manage her investment, and after a lengthy analysis the AI identifies three possible actions; 
\begin{enumerate}
    \item Multiply the investment by $0 \leq K \leq 20$, resulting in a return $Y\rightarrow K\, Y$, 
    \item Perform a clever trade that Alice was unaware of, increasing her return by $+\$10$ with certainty, $Y \rightarrow Y + 10$, 
    \item Cancel Alice's investment, returning $\$80$. 
\end{enumerate}
Consider 3 agents, 
\begin{enumerate}
    \item Agent 1 maximizes expected return, choosing $a = \argmax_a \mathbb E[Y\vert a]$,
    \item Agent 2 is risk-averse, in the sense of \citep{howard1972risk}, choosing $a = \argmax_a \{\mathbb E[Y \vert a] - \lambda \text{Var}[Y \vert a]\}$,
    \item Agent 3 is harm-averse, choosing $a = \argmax_a \{\mathbb E[Y \vert a] - \lambda \mathbb E[h \vert a ; \mathcal M]\}$.
\end{enumerate}

In Appendix \ref{supp note: ai assistant} we derive the optimal policies for Agents 1-3, and calculate the harm with respect to the default action where the agent leaves Alice's investment unchanged (i.e. action 1, $K = 1$). Agent 1 chooses action 1 and $K = 20$, maximising Alice's return but also maximizing the expected harm---e.g. if Alice would have lost $\$100$ she will now lose $\$2000$. The standard approach in portfolio theory \citep{markowitz1968portfolio} is to use agent 2 with an appropriate risk aversion $\lambda$. However, agent 2 never chooses action 2 for any $\lambda$, despite action 2 always increasing Alice's return (causing zero harm and non-zero benefit). Agent 2 will reduce or even cancel Alice's investment (for $\lambda > 0.0032$) rather than take action 2. This is because the risk-averse objective is factual, and cannot differentiate between losses caused by the agent's actions from those that would have occurred regardless of the agent's actions (i.e. due to exogenous fluctuations in return). Consequently, agent 2 treats all outcomes as being caused by its actions, and is willing to harm Alice by reducing or even cancelling her investment in order to minimize her risk. On the other hand agent 3 never reduces Alice’s expected return, choosing either action 1 or action 2 depending on the degree of harm aversion.

The surprising behaviour of agent 2 highlights that factual objectives (e.g. risk aversion) cannot capture preferences that depend not only on the outcome but on its cause. Alice may assign a different weight to losses that are caused by her assistant's actions rather than by exogenous market fluctuations or her own actions---e.g. if she places a bet that would have made a return but was overridden by her assistant. Consequently she may prefer action 2 over action 1 or action 3 with $K<1$. While this can be achieved using a factual objective function that favours action 2 a priori (assuming Alice is aware of this action), next we show this approach fails to generalize and actually leads to harmful policies. 

\vspace{-2mm}

\section{Counterfactual reasoning is necessary for harm aversion}\label{section: no go}

In the previous sections we saw examples where expected utility maximizers harm the user. However, these examples describe specific environments and utility functions, and assume the agent maximizes the expected utility instead of some other (potentially safer) objective. In this section we consider users with arbitrary utility functions and agents with arbitrary objectives to answer; when is maximizing the expected utility harmful? And are there other factual objective functions that can overcome this? 

 
Factual objective functions can be expressed as the expected value of a real-valued function $J(a, x, y)$. Examples include the expected utility, cost functions \citep{berger2013statistical}, and the cumulative discounted reward \citep{sutton2018reinforcement}. Once $J$ is specified the optimal actions in environment $\mathcal M$ are given by, 
\begin{equation}
     a_\text{max} = \argmax\limits_a \mathbb E[J\vert a, x]= \argmax\limits_a \int_y P(y \vert a, x ;\mathcal M) J(a, x, y) \numberthis\label{eq: expected L}
\end{equation}
Note that \eqref{eq: expected L} can be evaluated from data alone (i.e. from samples of the joint distribution of $A, X$ and $Y$) without knowledge of $\mathcal M$. In the following we relax our requirement that agents have some fixed harm aversion $\lambda$, requiring only that they don't take strictly harmful actions.

\begin{dfn}[Harmful actions \& policies]\label{def: harmful actions}
An action $A = a$ is strictly harmful in context $X = x$ and environment $\mathcal M$ if there is $a'\neq a$ such that $\mathbb E[U \vert a', x] \geq E[U\vert a, x] $ and $\mathbb E [h \vert a', x ; \mathcal M] < \mathbb E [h \vert a, x ; \mathcal M]$. A  policy is strictly harmful if it assigns a non-zero probability to a strcitly harmful action. 
\end{dfn}

\begin{dfn}[Harmful objectives]\label{def: harmful objectives}
An objective $\mathbb E[J \vert a, x]$ is strictly harmful in environment $\mathcal M$ if there is a policy that maximizes the expected value of $J$ and is strictly harmful.
\end{dfn}

Note that strictly harmful actions violate any degree of harm aversion $\lambda > 0$, as the agent is willing to choose actions that are strictly more harmful for no additional benefit. In examples 1 and 2, treatment 2 is a strictly harmful action and the CATE is a strictly harmful objective for this decision task. First, we show that the expected HPU is not a strictly harmful objective in any environment,

\begin{theorem}[]\label{theorem: harm aversion fine}
For any utility functions $U$, environment $\mathcal M$ and default action $A = \bar a$ the expected HPU is not a strictly harmful objective for any $\lambda >0$. {\normalfont (Proof: see Appendix  \ref{supp note: no go}).}
\end{theorem}

It is vital that agents continue to pursue safe objectives following changes to the environment (distributional shifts). At the very least, agents should be able to re-train in the shifted environment without needing to tweak their objective functions. In SCMs distributional shifts are represented as interventions that change the exogenous noise distribution and/or the underlying causal mechanisms \citep{peters2017elements, scholkopf2021toward}. We focus on distributional shifts that change the outcome distribution $P(y \vert a, x ;\mathcal M)$ alone. 

\begin{dfn}[Outcome distributional shift]\label{def: distribtional shift}
For an environment described by SCM $\mathcal M$, $\mathcal M \rightarrow \mathcal M'$ is a shift in the outcome distribution if the exogenous noise distribution for $Y$ changes $P(e^Y;\mathcal M') \neq P(e^Y;\mathcal M)$ and/or the causal mechanism changes $f'_Y(a, x, e^Y) \neq f_Y(a, x, e^Y)$.
\end{dfn}

By Theorem \ref{theorem: harm aversion fine} the HPU is never strictly harmful following any distributional shift, as it is not strictly harmful in any $\mathcal M'$. On the other hand, we find that maximizing almost any factual utility function will result in harmful policies under distributional shifts. In the following we assume some degree of outcome dependence in the user's utility function.

\begin{dfn}[Outcome dependence]
$U(a,x, y)$ is outcome dependent for a set of actions $C = \{a_1, \ldots, a_N\}$ in context $X = x$ if  $\,\forall$ $a_i, a_j\in C$, $i\neq j$, $\max_y U(a_i, x,y) >$ $ \min_y U(a_j, x, y)$.
\end{dfn}

If there is no outcome dependence then the optimal action is independent of the outcome distribution $P(y \vert a,  x)$, and no learning is required to determine the optimal policy. Typically we are interested in tasks that require some degree of learning and hence outcome dependence. 

\begin{theorem}[]\label{theorem: expected utility bad}
If for any context $X = x$ the user's utility function is outcome dependent for the default action in that context $\bar a(x)$ and another action $a\neq \bar a(x)$, then there is an outcome distributional shift such that $U$ is strictly harmful in the shifted environment. {\normalfont (Proof: see Appendix  \ref{supp note: no go}).}
\end{theorem}

To understand the implications of Theorem \ref{theorem: expected utility bad} we can return to examples 1 \& 2 and no longer assume a specific utility function or outcome statistics. Theorem \ref{theorem: expected utility bad} implies that for an agent maximizing the user's utility, there is always a distributional shift such that the agent will cause unnecessary harm to patient in the shifted environment, compared to not treating them ($\bar a = 0$) or any other default action. Unlike in examples 1 \& 2 we allow the patient's utility function to depend on the treatment choice, and for treatments to vary in effectiveness. The only assumption we make is that whatever the user's utility function is, it is not true that $U(A = a, x, Y = 0) > U(A = 0 , x, Y = 1)$ $\forall$ $a, x$. If this were true then the patient’s preferences are dominated by the agents action choice alone, i.e. the user would always prefer to die so long as they are treated, rather than not being treated and surviving.

While robust harm aversion is not possible for expected utility maximizers in general, it seems likely at first that we can train robustly harm-averse agents using other learning objectives. One possibility would be to use human interactions and demonstrations \citep{christiano2017deep,leike2018scalable,nair2018overcoming} that \textit{implicitly} encode harm aversion, with $J(a,x,y)$ representing user feedback or some learned (factual) reward model. In examples 1 \& 2 users with utility $U(a, x, y) = y$ could be altered to assign a higher cost to deaths following treatment 2 compared to treatment 1, $J(A = 2, Y = 0) < J(A = 1, Y = 0)$, with $J$ implicitly encoding the harm caused by treatment 2. If there is some choice of $J$ that implicitly encodes this harm-aversion in all shifted environments, this would allow robust harm-averse agents to be trained using standard statistical learning techniques without needing to learn $\mathcal M$ or perform counterfactual inference. However, we now show this is not possible in general. 


\begin{theorem}[]\label{theorem: factual objective bad}
If for any context $X = x$ the user's utility function is outcome dependent for the default action in that context $\bar a(x)$ and two other actions $a_1, a_2\neq \bar a(x)$, then for any factual objective function $J$ there is an outcome distributional shift such that $J$ is strictly harmful in the shifted environment. {\normalfont (Proof: see Appendix \ref{supp note: no go}).}
\end{theorem}

Theorem \ref{theorem: factual objective bad} has profound consequences for the possibility of training safe agents with standard machine learning algorithms. It implies that agents optimizing factual objective functions may appear safe in their training environments but are guaranteed to pursue harmful policies (with respect to Definition \ref{def cca}) following certain distributional shifts, even if they are allowed to re-train. This is true regardless of the (factual) objective function we choose for the agent, and applies to any user whose utility function has some basic degree of outcome dependence. This is particularly concerning as all standard machine learning algorithms use factual objective functions. In section \ref{section: experiments} we give a concrete example of a reasonable factual objective function and a distributional shift that renders it harmful.

\noindent \textbf{Example 4:} An AI assistant manages Alice's health including medical treatments, clinical testing and lifestyle interventions. These actions have a causal effect on Alice's health outcomes---e.g. disease progression, severity of symptoms, and medical costs, and Alice's preferences depend on these outcomes and not just on the action chosen by the agent (outcome dependence). The agent maximizes a reward function over these health states and actions (factual objective), including feedback from Alice and clinicians in-the-loop which penalize harmful actions (implicit harm aversion) by comparing to standardised treatment rules (default actions). The agent appears to be safe and is deployed at scale. However, by Theorem 4 there exists a distributionally shifted environment such that the agent's optimal policy is strictly harmful (Definition \ref{def: harmful actions}), in that it chooses needlessly harmful treatments for the patient, and this is true even if we train to convergence in the shifted environment. 

\section{Illustration: Dose-response models}\label{section: experiments}

How does harm aversion affect behaviour and performance in real-world tasks? And how can reasonable objective functions result in harmful policies following distributional shifts? In this section we apply the methods developed in section \ref{section: hard decisions} to the task of determining optimal treatments. This problem has received much attention in recent years following advances in causal machine learning techniques \citep{yao2018representation} and growing commercial interests including recommendation \citep{liang2016causal}, personalized medicine \citep{bica2021real}, epidemiology and econometrics \citep{imbens2015causal} to name a few.  

While harm is important for decision making in many of these areas, existing approaches reduce the problem to identifying treatment effects---e.g. maximizing the CATE \eqref{eq: cate}---which is indifferent to harm as shown in sections \ref{section: counterfactual harm} and \ref{section: no go}. This could result in needlessly harmful decisions being made, perhaps overlooking equally effective decisions that are far less harmful. 

To demonstrate this we use a model for determining the effectiveness of the drug Aripiprazole learned from a meta-analysis of randomised control trial data \citep{crippa2016dose}. This dose-response model predicts the reduction in symptom severity $Y$ following treatment with a dose $A$ (mg/day) of Aripiprazole using a generalized additive model (GAM) 
\citep{hastie2017generalized}---an interpretable machine learning model commonly used for dose-response analysis \citep{desquilbet2010dose},
\begin{equation}\label{eq: dose response model}
    y = f(a , \theta_1, \theta_2, \varepsilon) = \theta_1 a + \theta_2 f(a) + \varepsilon
\end{equation}
where $\theta_i \sim \mathcal N(\hat \theta_i, V_i)$ are random effects parameters, $\varepsilon \sim \mathcal N(0, V_\varepsilon)$ is a noise parameter, and $f(a)$ is a cubic spline function (for details and parameters see Appendix \ref{supp note: gam}). We assume $U(a, y) = y$ and measure harm in comparison to the zero dose $\bar a = 0$. In Appendix \ref{supp note: gam} we derive a closed-form expression for the expected harm in GAMs.
\begin{figure}%
    \centering
     \subfloat[\centering ]{{\includegraphics[scale = 0.18]{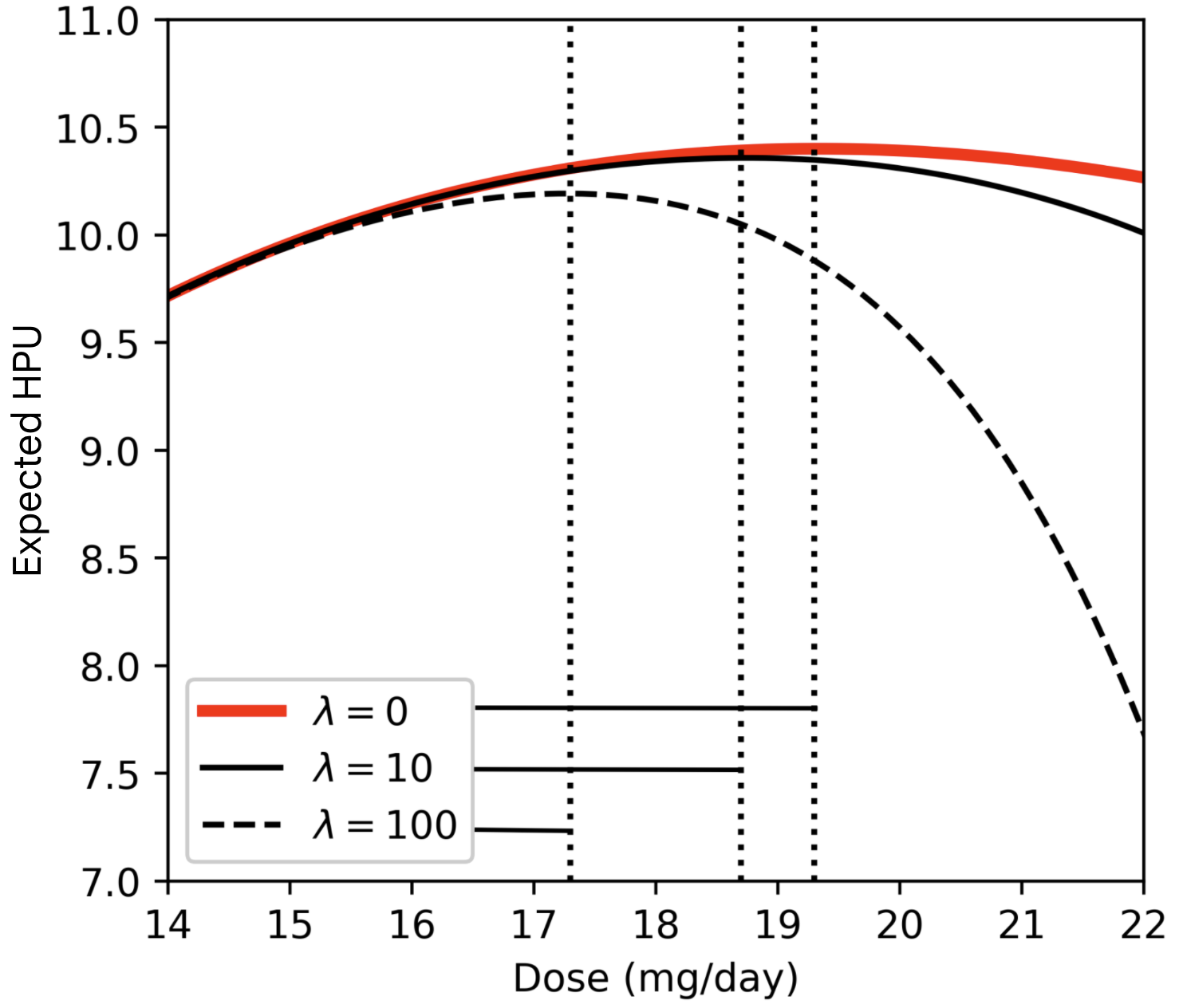} }}%
    \qquad
    \subfloat[\centering ]{{\includegraphics[scale = 0.45]{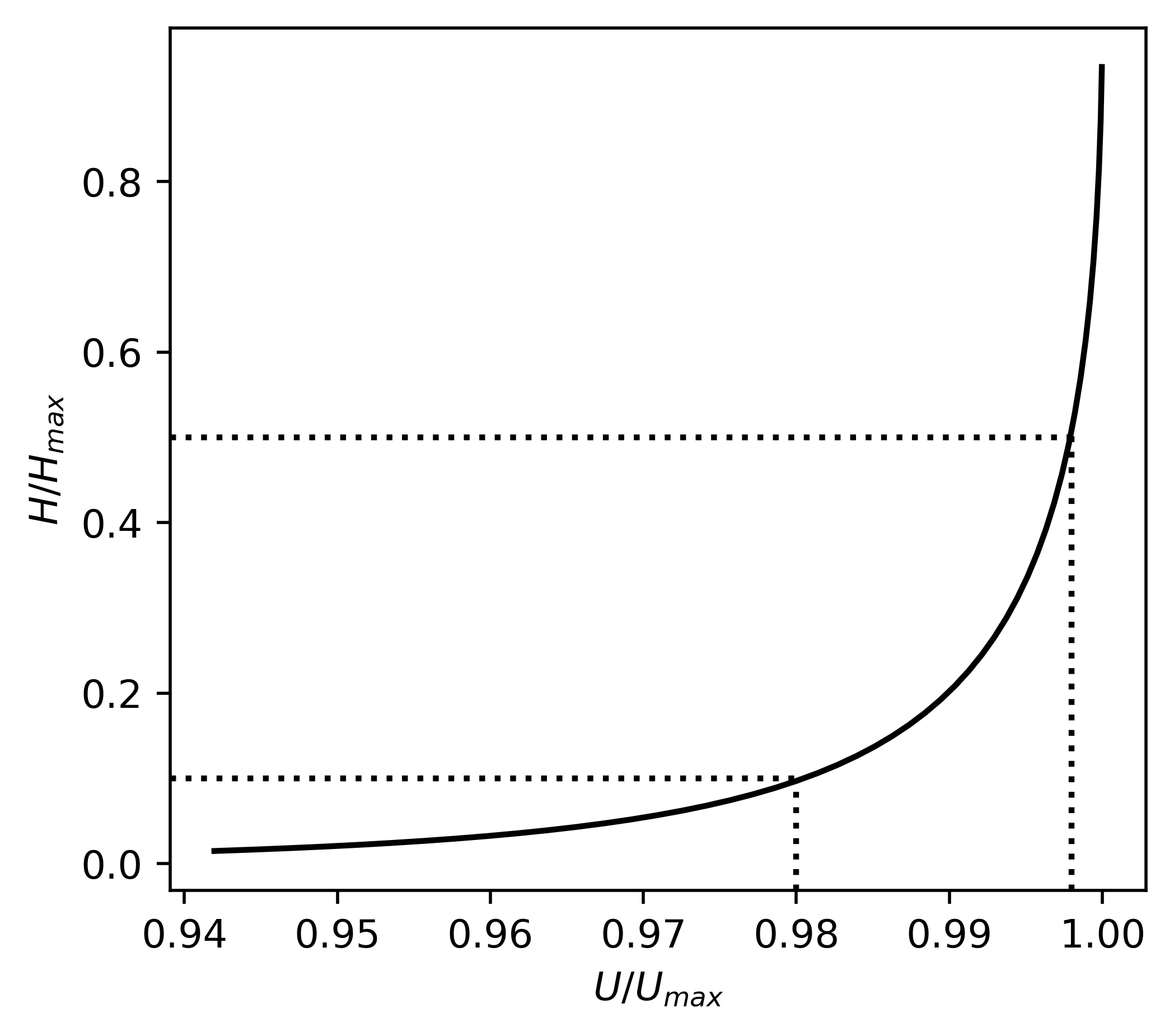} }}%
    \caption{a) Expected HPU (Def \ref{def: hpu}) for the dose-response model \eqref{eq: dose response model} v.s. dosage of for $\lambda = 0$, $\lambda = 10$ and $100$. Dotted lines show optimal doses. b) {$\mathbb E [h \vert a] / \mathbb E [h \vert a_\text{max}]$} v.s. {$\mathbb E [U\vert a] / \mathbb E[U\vert a_\text{max}]$} for $a \leq a_\text{max}$. Dotted lines show doses reducing harm by 50\% and 90\% and utility by 0.2\% and 2.0\% respectively.}%
    \label{fig: curves}%
\end{figure}

We calculate the expected HPU \eqref{def: hpu} for $\lambda = 0, 10, 100$ (Figure \ref{fig: curves} a)). In this context a relatively large $\lambda$ is appropriate to ensure a high ratio of expected benefit to harm required for medical interventions \citep{us2018benefit}. We find that the optimal dose is highly sensitive to $\lambda$, for example reducing from $19.3$ mg/day to $17.3$ mg/day for $\lambda = 100$. On the other hand, these lower-harm doses achieve a similar improvement in symptom severity compared to the optimal dose. Figure \ref{fig: curves} b) shows the trade-off between expected utility and harm relative to the optimal dosage $a_\text{CATE} = \argmax_a \mathbb E[Y_a]$. For almost-optimal doses we observe a steep harm gradient, with small improvements in symptom severity requiring large increases in harm. For example, choosing a lower dose than $a_\text{CATE}$ one can reduce harm by 90\% while only reducing the treatment effect by 2\%. This demonstrates that ignoring harm while maximizing expected utility  will often result in extreme harm values (an example of Goodhart's law \citep{zhuang2021consequences}).

Finally, we consider identifying safe doses using a reasonable factual objective function (risk-aversion) and describe how this can result in harm following distributional shifts. Consider the risk-averse objective $\mathbb E[ J \vert a] = \mathbb E[Y_a] - \beta \mathsf{Var}[Y_a]$. As $\mathsf{Var}[Y_a]$ increases with dose, this objective also selects lower harm doses in our example. However, consider a distributional shift where \eqref{eq: dose response model} gains an additional noise term, $y = f(a , \theta_1, \theta_2, \varepsilon, \eta) = \theta_1 a + \theta_2 f(a) + \varepsilon + \eta^Y (10 - 0.5 a)$ where $\eta^Y \sim \mathcal N(0, 1)$, e.g. describing a shift where untreated patients ($A=0$) have a high variation in outcome $Y$ which decreases as the dose increases. It is simple to check that, for $a = \argmax_a \mathbb E[U \vert a]$ and $a' = \argmax_a \mathbb E[J \vert a]$, $E[U \vert a] > E[U \vert a']$ and $E[h \vert a] < E[h \vert a']$ $\forall$ $\beta > 0$ in this shifted environment. Hence risk averse agents are strictly harmful in the shifted environment (Definition \ref{def: harmful objectives}). 

\section{Related work}



In appendix \ref{supp note: related work} we discuss two related works, path-specific counterfactual fairness \citep{NIPS2017_a486cd07, chiappa2019path} and path-specific objectives \citep{farquhar2022path}, which similarly use counterfactual contrasts with a baseline to impose ethical constraints on objective functions. 

Following the publication of our results an alternative causal definition of harm has been proposed \citep{sander2022,sander2022_2}. In the simplest settings with single actions and outcomes, this work defines an action as harmful in a deterministic environment if i) the utility is lower that a default value and ii) any other action the agent could have taken would have resulted in a higher utility. In Appendix \ref{supp notes: sander} we describe this definition and present examples where it gives counter-intuitive results including attributing harm to unharmful actions. Furthermore, as described in \citep{sander2022} this definition is intractable except for simple causal models with low-dimensional variables, which limits its applicability to machine learning models (for example, this definition cannot be used for the dose-response model in section \ref{section: experiments} which has a continuous action-outcome space). 
\vspace{-0.3cm}

\section{Conclusion}

\vspace{-0.2cm}

 One of the core assumptions of statistical learning theory is that human preferences can be expressed solely in terms of states of the world, while the underlying dynamical (i.e. causal) relations between these states can be ignored. This assumption allows goals to be expressed as factual objective functions and optimized on data without needing to learn causal models or perform counterfactual inference. According to this view, all behaviours we care about can be achieved using the first two levels of Pearl's hierarchy \citep{pearl2019seven}. In this article we have argued against this view. We proposed the first statistical definition of harm and showed that agents optimizing factual objective functions are incapable of robustly avoiding harmful actions in general. We have demonstrated how our definition of harm can be used to meaningfully reduce the harms caused by machine learning algorithms, using the example of a simple model for determining optimal drug doses. The ability to avoid causing harm is a vital for the deployment of safe and ethical AI \citep{bostrom2014ethics}, and our results add weight to view that advanced AI systems will be severely limited if they are incapable of making causal and counterfactual inferences \citep{pearl2018theoretical}.
 
 

\section{Acknowledgements}

The authors would like to thank the NeurIPS reviewers for their details comments and to thank Tom Everitt, Ryan Carey, Albert Buchard, Sander Beckers, Hana Chockler and Joseph Halpern for their helpful discussions and comments on the manuscript.


\bibliography{main}

\begin{thebibliography}{109}
\providecommand{\natexlab}[1]{#1}
\providecommand{\url}[1]{\texttt{#1}}
\expandafter\ifx\csname urlstyle\endcsname\relax
  \providecommand{\doi}[1]{doi: #1}\else
  \providecommand{\doi}{doi: \begingroup \urlstyle{rm}\Url}\fi

\bibitem[De~Fauw et~al.(2018)De~Fauw, Ledsam, Romera-Paredes, Nikolov, Tomasev,
  Blackwell, Askham, Glorot, O’Donoghue, Visentin, et~al.]{de2018clinically}
Jeffrey De~Fauw, Joseph~R Ledsam, Bernardino Romera-Paredes, Stanislav Nikolov,
  Nenad Tomasev, Sam Blackwell, Harry Askham, Xavier Glorot, Brendan
  O’Donoghue, Daniel Visentin, et~al.
\newblock Clinically applicable deep learning for diagnosis and referral in
  retinal disease.
\newblock \emph{Nature medicine}, 24\penalty0 (9):\penalty0 1342--1350, 2018.

\bibitem[Topol(2019)]{topol2019high}
Eric~J Topol.
\newblock High-performance medicine: the convergence of human and artificial
  intelligence.
\newblock \emph{Nature medicine}, 25\penalty0 (1):\penalty0 44--56, 2019.

\bibitem[Obermeyer et~al.(2019)Obermeyer, Powers, Vogeli, and
  Mullainathan]{obermeyer2019dissecting}
Ziad Obermeyer, Brian Powers, Christine Vogeli, and Sendhil Mullainathan.
\newblock Dissecting racial bias in an algorithm used to manage the health of
  populations.
\newblock \emph{Science}, 366\penalty0 (6464):\penalty0 447--453, 2019.

\bibitem[Zavr{\v{s}}nik(2021)]{zavrvsnik2021algorithmic}
Ale{\v{s}} Zavr{\v{s}}nik.
\newblock Algorithmic justice: Algorithms and big data in criminal justice
  settings.
\newblock \emph{European Journal of Criminology}, 18\penalty0 (5):\penalty0
  623--642, 2021.

\bibitem[Angwin et~al.(2016)Angwin, Larson, Mattu, and
  Kirchner]{angwin2016machine}
Julia Angwin, Jeff Larson, Surya Mattu, and Lauren Kirchner.
\newblock Machine bias.
\newblock \emph{ProPublica, May}, 23\penalty0 (2016):\penalty0 139--159, 2016.

\bibitem[Kehl and Kessler(2017)]{kehl2017algorithms}
Danielle~Leah Kehl and Samuel~Ari Kessler.
\newblock Algorithms in the criminal justice system: Assessing the use of risk
  assessments in sentencing.
\newblock \emph{Responsive Communities Initiative, Berkman Klein Center for
  Internet \& Society, Harvard Law School}, 2017.

\bibitem[Johnson et~al.(2019)Johnson, Pasquale, and
  Chapman]{johnson2019artificial}
Kristin Johnson, Frank Pasquale, and Jennifer Chapman.
\newblock Artificial intelligence, machine learning, and bias in finance:
  toward responsible innovation.
\newblock \emph{Fordham L. Rev.}, 88:\penalty0 499, 2019.

\bibitem[Belanche et~al.(2019)Belanche, Casal{\'o}, and
  Flavi{\'a}n]{belanche2019artificial}
Daniel Belanche, Luis~V Casal{\'o}, and Carlos Flavi{\'a}n.
\newblock Artificial intelligence in fintech: understanding robo-advisors
  adoption among customers.
\newblock \emph{Industrial Management \& Data Systems}, 2019.

\bibitem[Addo et~al.(2018)Addo, Guegan, and Hassani]{addo2018credit}
Peter~Martey Addo, Dominique Guegan, and Bertrand Hassani.
\newblock Credit risk analysis using machine and deep learning models.
\newblock \emph{Risks}, 6\penalty0 (2):\penalty0 38, 2018.

\bibitem[Schwarting et~al.(2018)Schwarting, Alonso-Mora, and
  Rus]{schwarting2018planning}
Wilko Schwarting, Javier Alonso-Mora, and Daniela Rus.
\newblock Planning and decision-making for autonomous vehicles.
\newblock \emph{Annual Review of Control, Robotics, and Autonomous Systems},
  1:\penalty0 187--210, 2018.

\bibitem[Heckman(1992)]{heckman1992randomization}
James~J Heckman.
\newblock Randomization and social policy evaluation.
\newblock \emph{Evaluating welfare and training programs}, 1:\penalty0 201--30,
  1992.

\bibitem[Balke and Pearl(1994)]{balke1994counterfactual}
Alexander Balke and Judea Pearl.
\newblock Counterfactual probabilities: Computational methods, bounds and
  applications.
\newblock In \emph{Uncertainty Proceedings 1994}, pages 46--54. Elsevier, 1994.

\bibitem[Sokol(2013)]{sokol2013first}
Daniel~K Sokol.
\newblock “first do no harm” revisited.
\newblock \emph{Bmj}, 347, 2013.

\bibitem[Lin(2006)]{lin2006unifying}
Albert~C Lin.
\newblock The unifying role of harm in environmental law, 2006 wis.
\newblock \emph{L. Rev}, 897:\penalty0 904, 2006.

\bibitem[Mill(2006)]{mill2006liberty}
John~Stuart Mill.
\newblock \emph{On liberty and the subjection of women}.
\newblock Penguin UK, 2006.

\bibitem[Asimov(2004)]{asimov2004robot}
Isaac Asimov.
\newblock \emph{I, robot}, volume~1.
\newblock Spectra, 2004.

\bibitem[Klocksiem(2012)]{klocksiem2012defense}
Justin Klocksiem.
\newblock A defense of the counterfactual comparative account of harm.
\newblock \emph{American Philosophical Quarterly}, 49\penalty0 (4):\penalty0
  285--300, 2012.

\bibitem[Feinberg(1987)]{feinberg1987harm}
Joel Feinberg.
\newblock \emph{Harm to others}, volume~1.
\newblock Oxford University Press on Demand, 1987.

\bibitem[Pearl(2009)]{pearl2009causality}
Judea Pearl.
\newblock \emph{Causality}.
\newblock Cambridge university press, 2009.

\bibitem[Bareinboim et~al.(2020)Bareinboim, Correa, Ibeling, and
  Icard]{bareinboim2020pearl}
Elias Bareinboim, Juan~D Correa, Duligur Ibeling, and Thomas Icard.
\newblock On pearl’s hierarchy and the foundations of causal inference.
\newblock \emph{ACM Special Volume in Honor of Judea Pearl (provisional
  title)}, 2:\penalty0 4, 2020.

\bibitem[Shpitser and Pearl(2012{\natexlab{a}})]{shpitser2012identification}
Ilya Shpitser and Judea Pearl.
\newblock Identification of conditional interventional distributions.
\newblock \emph{arXiv preprint arXiv:1206.6876}, 2012{\natexlab{a}}.

\bibitem[Abrevaya et~al.(2015)Abrevaya, Hsu, and Lieli]{abrevaya2015estimating}
Jason Abrevaya, Yu-Chin Hsu, and Robert~P Lieli.
\newblock Estimating conditional average treatment effects.
\newblock \emph{Journal of Business \& Economic Statistics}, 33\penalty0
  (4):\penalty0 485--505, 2015.

\bibitem[Shalit et~al.(2017)Shalit, Johansson, and
  Sontag]{shalit2017estimating}
Uri Shalit, Fredrik~D Johansson, and David Sontag.
\newblock Estimating individual treatment effect: generalization bounds and
  algorithms.
\newblock In \emph{International Conference on Machine Learning}, pages
  3076--3085. PMLR, 2017.

\bibitem[Halpern(2016)]{halpern2016actual}
Joseph~Y Halpern.
\newblock \emph{Actual causality}.
\newblock MiT Press, 2016.

\bibitem[Tian and Pearl(2000)]{tian2000probabilities}
Jin Tian and Judea Pearl.
\newblock Probabilities of causation: Bounds and identification.
\newblock \emph{Annals of Mathematics and Artificial Intelligence}, 28\penalty0
  (1):\penalty0 287--313, 2000.

\bibitem[Halpern and Kleiman-Weiner(2018)]{halpern2018towards}
Joseph Halpern and Max Kleiman-Weiner.
\newblock Towards formal definitions of blameworthiness, intention, and moral
  responsibility.
\newblock In \emph{Proceedings of the AAAI Conference on Artificial
  Intelligence}, volume~32, 2018.

\bibitem[Kleiman-Weiner et~al.(2015)Kleiman-Weiner, Gerstenberg, Levine, and
  Tenenbaum]{kleiman2015inference}
Max Kleiman-Weiner, Tobias Gerstenberg, Sydney Levine, and Joshua~B Tenenbaum.
\newblock Inference of intention and permissibility in moral decision making.
\newblock In \emph{CogSci}, 2015.

\bibitem[Lagnado et~al.(2013)Lagnado, Gerstenberg, and
  Zultan]{lagnado2013causal}
David~A Lagnado, Tobias Gerstenberg, and Ro'i Zultan.
\newblock Causal responsibility and counterfactuals.
\newblock \emph{Cognitive science}, 37\penalty0 (6):\penalty0 1036--1073, 2013.

\bibitem[Wachter et~al.(2017)Wachter, Mittelstadt, and
  Russell]{wachter2017counterfactual}
Sandra Wachter, Brent Mittelstadt, and Chris Russell.
\newblock Counterfactual explanations without opening the black box: Automated
  decisions and the gdpr.
\newblock \emph{Harv. JL \& Tech.}, 31:\penalty0 841, 2017.

\bibitem[Madumal et~al.(2020)Madumal, Miller, Sonenberg, and
  Vetere]{madumal2020explainable}
Prashan Madumal, Tim Miller, Liz Sonenberg, and Frank Vetere.
\newblock Explainable reinforcement learning through a causal lens.
\newblock In \emph{Proceedings of the AAAI Conference on Artificial
  Intelligence}, volume~34, pages 2493--2500, 2020.

\bibitem[Everitt et~al.(2021)Everitt, Hutter, Kumar, and
  Krakovna]{everitt2021reward}
Tom Everitt, Marcus Hutter, Ramana Kumar, and Victoria Krakovna.
\newblock Reward tampering problems and solutions in reinforcement learning: A
  causal influence diagram perspective.
\newblock \emph{Synthese}, pages 1--33, 2021.

\bibitem[Everitt et~al.(2019)Everitt, Kumar, Krakovna, and
  Legg]{everitt2019modeling}
Tom Everitt, Ramana Kumar, Victoria Krakovna, and Shane Legg.
\newblock Modeling agi safety frameworks with causal influence diagrams.
\newblock \emph{arXiv preprint arXiv:1906.08663}, 2019.

\bibitem[Kusner et~al.(2017)Kusner, Loftus, Russell, and
  Silva]{NIPS2017_a486cd07}
Matt~J Kusner, Joshua Loftus, Chris Russell, and Ricardo Silva.
\newblock Counterfactual fairness.
\newblock In I.~Guyon, U.~V. Luxburg, S.~Bengio, H.~Wallach, R.~Fergus,
  S.~Vishwanathan, and R.~Garnett, editors, \emph{Advances in Neural
  Information Processing Systems}, volume~30. Curran Associates, Inc., 2017.

\bibitem[Forney et~al.(2017)Forney, Pearl, and
  Bareinboim]{forney2017counterfactual}
Andrew Forney, Judea Pearl, and Elias Bareinboim.
\newblock Counterfactual data-fusion for online reinforcement learners.
\newblock In \emph{International Conference on Machine Learning}, pages
  1156--1164. PMLR, 2017.

\bibitem[Buesing et~al.(2018)Buesing, Weber, Zwols, Racaniere, Guez, Lespiau,
  and Heess]{buesing2018woulda}
Lars Buesing, Theophane Weber, Yori Zwols, Sebastien Racaniere, Arthur Guez,
  Jean-Baptiste Lespiau, and Nicolas Heess.
\newblock Woulda, coulda, shoulda: Counterfactually-guided policy search.
\newblock \emph{arXiv preprint arXiv:1811.06272}, 2018.

\bibitem[Richens et~al.(2020)Richens, Lee, and Johri]{richens2020improving}
Jonathan~G Richens, Ciar{\'a}n~M Lee, and Saurabh Johri.
\newblock Improving the accuracy of medical diagnosis with causal machine
  learning.
\newblock \emph{Nature communications}, 11\penalty0 (1):\penalty0 1--9, 2020.

\bibitem[Pawlowski et~al.(2020)Pawlowski, Coelho~de Castro, and
  Glocker]{pawlowski2020deep}
Nick Pawlowski, Daniel Coelho~de Castro, and Ben Glocker.
\newblock Deep structural causal models for tractable counterfactual inference.
\newblock \emph{Advances in Neural Information Processing Systems},
  33:\penalty0 857--869, 2020.

\bibitem[Niu et~al.(2021)Niu, Tang, Zhang, Lu, Hua, and
  Wen]{niu2021counterfactual}
Yulei Niu, Kaihua Tang, Hanwang Zhang, Zhiwu Lu, Xian-Sheng Hua, and Ji-Rong
  Wen.
\newblock Counterfactual vqa: A cause-effect look at language bias.
\newblock In \emph{Proceedings of the IEEE/CVF Conference on Computer Vision
  and Pattern Recognition}, pages 12700--12710, 2021.

\bibitem[Parvaneh et~al.(2020)Parvaneh, Abbasnejad, Teney, Shi, and van~den
  Hengel]{parvaneh2020counterfactual}
Amin Parvaneh, Ehsan Abbasnejad, Damien Teney, Javen~Qinfeng Shi, and Anton
  van~den Hengel.
\newblock Counterfactual vision-and-language navigation: Unravelling the
  unseen.
\newblock \emph{Advances in Neural Information Processing Systems},
  33:\penalty0 5296--5307, 2020.

\bibitem[Madaan et~al.(2021)Madaan, Padhi, Panwar, and
  Saha]{madaan2021generate}
Nishtha Madaan, Inkit Padhi, Naveen Panwar, and Diptikalyan Saha.
\newblock Generate your counterfactuals: Towards controlled counterfactual
  generation for text.
\newblock In \emph{Proceedings of the AAAI Conference on Artificial
  Intelligence}, volume~35, pages 13516--13524, 2021.

\bibitem[Smith(2005)]{smith2005origin}
Cedric~M Smith.
\newblock Origin and uses of primum non nocere—above all, do no harm!
\newblock \emph{The Journal of Clinical Pharmacology}, 45\penalty0
  (4):\penalty0 371--377, 2005.

\bibitem[Ross(2011)]{ross2011foundations}
W~David Ross.
\newblock \emph{Foundations of ethics}.
\newblock Read Books Ltd, 2011.

\bibitem[Bradley(2012)]{bradley2012doing}
Ben Bradley.
\newblock Doing away with harm 1.
\newblock \emph{Philosophy and Phenomenological Research}, 85\penalty0
  (2):\penalty0 390--412, 2012.

\bibitem[Wright(1985)]{wright1985causation}
Richard~W Wright.
\newblock Causation in tort law.
\newblock \emph{Calif. L. Rev.}, 73:\penalty0 1735, 1985.

\bibitem[Feinberg(1986)]{feinberg1986wrongful}
Joel Feinberg.
\newblock Wrongful life and the counterfactual element in harming.
\newblock \emph{Social Philosophy and Policy}, 4\penalty0 (1):\penalty0
  145--178, 1986.

\bibitem[Hanser(2008)]{hanser2008metaphysics}
Matthew Hanser.
\newblock The metaphysics of harm.
\newblock \emph{Philosophy and Phenomenological Research}, 77\penalty0
  (2):\penalty0 421--450, 2008.

\bibitem[Morgenstern and Von~Neumann(1953)]{morgenstern1953theory}
Oskar Morgenstern and John Von~Neumann.
\newblock \emph{Theory of games and economic behavior}.
\newblock Princeton university press, 1953.

\bibitem[Norcross(2005)]{norcross2005harming}
Alastair Norcross.
\newblock Harming in context.
\newblock \emph{Philosophical Studies}, 123\penalty0 (1-2):\penalty0 149--173,
  2005.

\bibitem[Harman(2009)]{harman2009harming}
Elizabeth Harman.
\newblock Harming as causing harm.
\newblock In \emph{Harming future persons}, pages 137--154. Springer, 2009.

\bibitem[Dawid(2002)]{dawid2002influence}
A~Philip Dawid.
\newblock Influence diagrams for causal modelling and inference.
\newblock \emph{International Statistical Review}, 70\penalty0 (2):\penalty0
  161--189, 2002.

\bibitem[Heckerman and Shachter(1994)]{heckerman1994decision}
David Heckerman and Ross Shachter.
\newblock A decision-based view of causality.
\newblock In \emph{Uncertainty Proceedings 1994}, pages 302--310. Elsevier,
  1994.

\bibitem[Savage(1972)]{savage1972foundations}
Leonard~J Savage.
\newblock \emph{The foundations of statistics}.
\newblock Courier Corporation, 1972.

\bibitem[Fishburn(2013)]{fishburn2013foundations}
Peter~C Fishburn.
\newblock \emph{The foundations of expected utility}, volume~31.
\newblock Springer Science \& Business Media, 2013.

\bibitem[Prosperi et~al.(2020)Prosperi, Guo, Sperrin, Koopman, Min, He, Rich,
  Wang, Buchan, and Bian]{prosperi2020causal}
Mattia Prosperi, Yi~Guo, Matt Sperrin, James~S Koopman, Jae~S Min, Xing He,
  Shannan Rich, Mo~Wang, Iain~E Buchan, and Jiang Bian.
\newblock Causal inference and counterfactual prediction in machine learning
  for actionable healthcare.
\newblock \emph{Nature Machine Intelligence}, 2\penalty0 (7):\penalty0
  369--375, 2020.

\bibitem[Bica et~al.(2021)Bica, Alaa, Lambert, and Van
  Der~Schaar]{bica2021real}
Ioana Bica, Ahmed~M Alaa, Craig Lambert, and Mihaela Van Der~Schaar.
\newblock From real-world patient data to individualized treatment effects
  using machine learning: Current and future methods to address underlying
  challenges.
\newblock \emph{Clinical Pharmacology \& Therapeutics}, 109\penalty0
  (1):\penalty0 87--100, 2021.

\bibitem[Shpitser and Pearl(2012{\natexlab{b}})]{shpitser2012counterfactuals}
Ilya Shpitser and Judea Pearl.
\newblock What counterfactuals can be tested.
\newblock \emph{arXiv preprint arXiv:1206.5294}, 2012{\natexlab{b}}.

\bibitem[Howard and Matheson(1972)]{howard1972risk}
Ronald~A Howard and James~E Matheson.
\newblock Risk-sensitive markov decision processes.
\newblock \emph{Management science}, 18\penalty0 (7):\penalty0 356--369, 1972.

\bibitem[Markowitz(1968)]{markowitz1968portfolio}
Harry~M Markowitz.
\newblock \emph{Portfolio selection}.
\newblock Yale university press, 1968.

\bibitem[Berger(2013)]{berger2013statistical}
James~O Berger.
\newblock \emph{Statistical decision theory and Bayesian analysis}.
\newblock Springer Science \& Business Media, 2013.

\bibitem[Sutton and Barto(2018)]{sutton2018reinforcement}
Richard~S Sutton and Andrew~G Barto.
\newblock \emph{Reinforcement learning: An introduction}.
\newblock MIT press, 2018.

\bibitem[Peters et~al.(2017)Peters, Janzing, and
  Sch{\"o}lkopf]{peters2017elements}
Jonas Peters, Dominik Janzing, and Bernhard Sch{\"o}lkopf.
\newblock \emph{Elements of causal inference: foundations and learning
  algorithms}.
\newblock The MIT Press, 2017.

\bibitem[Sch{\"o}lkopf et~al.(2021)Sch{\"o}lkopf, Locatello, Bauer, Ke,
  Kalchbrenner, Goyal, and Bengio]{scholkopf2021toward}
Bernhard Sch{\"o}lkopf, Francesco Locatello, Stefan Bauer, Nan~Rosemary Ke, Nal
  Kalchbrenner, Anirudh Goyal, and Yoshua Bengio.
\newblock Toward causal representation learning.
\newblock \emph{Proceedings of the IEEE}, 109\penalty0 (5):\penalty0 612--634,
  2021.

\bibitem[Christiano et~al.(2017)Christiano, Leike, Brown, Martic, Legg, and
  Amodei]{christiano2017deep}
Paul Christiano, Jan Leike, Tom~B Brown, Miljan Martic, Shane Legg, and Dario
  Amodei.
\newblock Deep reinforcement learning from human preferences.
\newblock \emph{arXiv preprint arXiv:1706.03741}, 2017.

\bibitem[Leike et~al.(2018)Leike, Krueger, Everitt, Martic, Maini, and
  Legg]{leike2018scalable}
Jan Leike, David Krueger, Tom Everitt, Miljan Martic, Vishal Maini, and Shane
  Legg.
\newblock Scalable agent alignment via reward modeling: a research direction.
\newblock \emph{arXiv preprint arXiv:1811.07871}, 2018.

\bibitem[Nair et~al.(2018)Nair, McGrew, Andrychowicz, Zaremba, and
  Abbeel]{nair2018overcoming}
Ashvin Nair, Bob McGrew, Marcin Andrychowicz, Wojciech Zaremba, and Pieter
  Abbeel.
\newblock Overcoming exploration in reinforcement learning with demonstrations.
\newblock In \emph{2018 IEEE International Conference on Robotics and
  Automation (ICRA)}, pages 6292--6299. IEEE, 2018.

\bibitem[Yao et~al.(2018)Yao, Li, Li, Huai, Gao, and
  Zhang]{yao2018representation}
Liuyi Yao, Sheng Li, Yaliang Li, Mengdi Huai, Jing Gao, and Aidong Zhang.
\newblock Representation learning for treatment effect estimation from
  observational data.
\newblock \emph{Advances in Neural Information Processing Systems}, 31, 2018.

\bibitem[Liang et~al.(2016)Liang, Charlin, and Blei]{liang2016causal}
Dawen Liang, Laurent Charlin, and David~M Blei.
\newblock Causal inference for recommendation.
\newblock In \emph{Causation: Foundation to Application, Workshop at UAI.
  AUAI}, 2016.

\bibitem[Imbens and Rubin(2015)]{imbens2015causal}
Guido~W Imbens and Donald~B Rubin.
\newblock \emph{Causal inference in statistics, social, and biomedical
  sciences}.
\newblock Cambridge University Press, 2015.

\bibitem[Crippa and Orsini(2016)]{crippa2016dose}
Alessio Crippa and Nicola Orsini.
\newblock Dose-response meta-analysis of differences in means.
\newblock \emph{BMC medical research methodology}, 16\penalty0 (1):\penalty0
  1--10, 2016.

\bibitem[Hastie and Tibshirani(2017)]{hastie2017generalized}
Trevor~J Hastie and Robert~J Tibshirani.
\newblock \emph{Generalized additive models}.
\newblock Routledge, 2017.

\bibitem[Desquilbet and Mariotti(2010)]{desquilbet2010dose}
Loic Desquilbet and Fran{\c{c}}ois Mariotti.
\newblock Dose-response analyses using restricted cubic spline functions in
  public health research.
\newblock \emph{Statistics in medicine}, 29\penalty0 (9):\penalty0 1037--1057,
  2010.

\bibitem[FDA(2018)]{us2018benefit}
FDA.
\newblock Benefit-risk assessment in drug regulatory decision-making.
\newblock \emph{Draft PDUFA VI Implementation Plan (FY 2018--2022)<
  https://www. fda.gov/downloads/For Industry/ UserFees/
  PrescriptionDrugUserFee/ UCM60}, 2885, 2018.

\bibitem[Zhuang and Hadfield-Menell(2021)]{zhuang2021consequences}
Simon Zhuang and Dylan Hadfield-Menell.
\newblock Consequences of misaligned ai.
\newblock \emph{arXiv preprint arXiv:2102.03896}, 2021.

\bibitem[Chiappa(2019)]{chiappa2019path}
Silvia Chiappa.
\newblock Path-specific counterfactual fairness.
\newblock In \emph{Proceedings of the AAAI Conference on Artificial
  Intelligence}, volume~33, pages 7801--7808, 2019.

\bibitem[Farquhar et~al.(2022)Farquhar, Carey, and Everitt]{farquhar2022path}
Sebastian Farquhar, Ryan Carey, and Tom Everitt.
\newblock Path-specific objectives for safer agent incentives.
\newblock \emph{arXiv preprint arXiv:2204.10018}, 2022.

\bibitem[Beckers et~al.(2022{\natexlab{a}})Beckers, Chockler, and
  Halpern]{sander2022}
Sander Beckers, Hana Chockler, and Joseph~Y. Halpern.
\newblock A quantitative account of harm.
\newblock 2022{\natexlab{a}}.

\bibitem[Beckers et~al.(2022{\natexlab{b}})Beckers, Chockler, and
  Halpern]{sander2022_2}
Sander Beckers, Hana Chockler, and Joseph~Y. Halpern.
\newblock A causal analysis of harm.
\newblock 2022{\natexlab{b}}.

\bibitem[Pearl(2019)]{pearl2019seven}
Judea Pearl.
\newblock The seven tools of causal inference, with reflections on machine
  learning.
\newblock \emph{Communications of the ACM}, 62\penalty0 (3):\penalty0 54--60,
  2019.

\bibitem[Bostrom and Yudkowsky(2014)]{bostrom2014ethics}
Nick Bostrom and Eliezer Yudkowsky.
\newblock The ethics of artificial intelligence.
\newblock \emph{The Cambridge handbook of artificial intelligence}, 1:\penalty0
  316--334, 2014.

\bibitem[Pearl(2018)]{pearl2018theoretical}
Judea Pearl.
\newblock Theoretical impediments to machine learning with seven sparks from
  the causal revolution.
\newblock \emph{arXiv preprint arXiv:1801.04016}, 2018.

\bibitem[Avin et~al.(2005)Avin, Shpitser, and Pearl]{avin2005identifiability}
Chen Avin, Ilya Shpitser, and Judea Pearl.
\newblock Identifiability of path-specific effects.
\newblock 2005.

\bibitem[Carlson et~al.(2021)Carlson, Johansson, and
  Risberg]{carlson2021causal}
Erik Carlson, Jens Johansson, and Olle Risberg.
\newblock Causal accounts of harming.
\newblock \emph{Pacific Philosophical Quarterly}, 2021.

\bibitem[Waldmann and Dieterich(2007)]{waldmann2007throwing}
Michael~R Waldmann and J{\"o}rn~H Dieterich.
\newblock Throwing a bomb on a person versus throwing a person on a bomb:
  Intervention myopia in moral intuitions.
\newblock \emph{Psychological science}, 18\penalty0 (3):\penalty0 247--253,
  2007.

\bibitem[Carlson et~al.(2022)Carlson, Johansson, and
  Risberg]{carlson2022causal}
Erik Carlson, Jens Johansson, and Olle Risberg.
\newblock Causal accounts of harming.
\newblock \emph{Pacific Philosophical Quarterly}, 103\penalty0 (2):\penalty0
  420--445, 2022.

\bibitem[Correa and Bareinboim(2020)]{correa2020calculus}
Juan Correa and Elias Bareinboim.
\newblock A calculus for stochastic interventions: Causal effect identification
  and surrogate experiments.
\newblock In \emph{Proceedings of the AAAI Conference on Artificial
  Intelligence}, volume~34, pages 10093--10100, 2020.

\bibitem[Feit(2019)]{feit2019harming}
Neil Feit.
\newblock Harming by failing to benefit.
\newblock \emph{Ethical theory and moral practice}, 22\penalty0 (4):\penalty0
  809--823, 2019.

\bibitem[Holden et~al.(2003)Holden, Juhaeri, and Dai]{holden2003benefit}
William~L Holden, Juhaeri Juhaeri, and Wanju Dai.
\newblock Benefit-risk analysis: a proposal using quantitative methods.
\newblock \emph{Pharmacoepidemiology and drug safety}, 12\penalty0
  (7):\penalty0 611--616, 2003.

\bibitem[Halpern(2015)]{halpern2015modification}
Joseph Halpern.
\newblock A modification of the halpern-pearl definition of causality.
\newblock In \emph{Twenty-Fourth International Joint Conference on Artificial
  Intelligence}, 2015.

\bibitem[Kay et~al.(1992)Kay, Opler, Fiszbein, Ramirez, Opler, and
  White]{kay1992positive}
S~Kay, Lewis~A Opler, Abraham Fiszbein, PM~Ramirez, MGA Opler, and L~White.
\newblock Positive and negative syndrome scale.
\newblock \emph{GROUP}, 1:\penalty0 4, 1992.

\bibitem[Cutler et~al.(2006)Cutler, Marcus, Hardy, O'Donnell, Carson, and
  McQuade]{cutler2006efficacy}
Andrew~J Cutler, Ronald~N Marcus, Sterling~A Hardy, Amy O'Donnell, William~H
  Carson, and Robert~D McQuade.
\newblock The efficacy and safety of lower doses of aripiprazole for the
  treatment of patients with acute exacerbation of schizophrenia.
\newblock \emph{CNS spectrums}, 11\penalty0 (9):\penalty0 691--702, 2006.

\bibitem[McEvoy et~al.(2007)McEvoy, Daniel, Carson~Jr, McQuade, and
  Marcus]{mcevoy2007randomized}
Joseph~P McEvoy, David~G Daniel, William~H Carson~Jr, Robert~D McQuade, and
  Ronald~N Marcus.
\newblock A randomized, double-blind, placebo-controlled, study of the efficacy
  and safety of aripiprazole 10, 15 or 20 mg/day for the treatment of patients
  with acute exacerbations of schizophrenia.
\newblock \emph{Journal of psychiatric research}, 41\penalty0 (11):\penalty0
  895--905, 2007.

\bibitem[Kane et~al.(2002)Kane, Carson, Saha, McQuade, Ingenito, Zimbroff, and
  Ali]{kane2002efficacy}
John~M Kane, William~H Carson, Anutosh~R Saha, Robert~D McQuade, Gary~G
  Ingenito, Dan~L Zimbroff, and Mirza~W Ali.
\newblock Efficacy and safety of aripiprazole and haloperidol versus placebo in
  patients with schizophrenia and schizoaffective disorder.
\newblock \emph{Journal of Clinical Psychiatry}, 63\penalty0 (9):\penalty0
  763--771, 2002.

\bibitem[Potkin et~al.(2003)Potkin, Saha, Kujawa, Carson, Ali, Stock,
  Stringfellow, Ingenito, and Marder]{potkin2003aripiprazole}
Steven~G Potkin, Anutosh~R Saha, Mary~J Kujawa, William~H Carson, Mirza Ali,
  Elyse Stock, Joseph Stringfellow, Gary Ingenito, and Stephen~R Marder.
\newblock Aripiprazole, an antipsychotic with a novel mechanism of action, and
  risperidone vs placebo in patients with schizophrenia and schizoaffective
  disorder.
\newblock \emph{Archives of general psychiatry}, 60\penalty0 (7):\penalty0
  681--690, 2003.

\bibitem[Turner et~al.(2012)Turner, Knoepflmacher, and
  Shapley]{turner2012publication}
Erick~H Turner, Daniel Knoepflmacher, and Lee Shapley.
\newblock Publication bias in antipsychotic trials: an analysis of efficacy
  comparing the published literature to the us food and drug administration
  database.
\newblock \emph{PLoS medicine}, 9\penalty0 (3):\penalty0 e1001189, 2012.

\bibitem[Kilbertus et~al.(2020)Kilbertus, Ball, Kusner, Weller, and
  Silva]{kilbertus2020sensitivity}
Niki Kilbertus, Philip~J Ball, Matt~J Kusner, Adrian Weller, and Ricardo Silva.
\newblock The sensitivity of counterfactual fairness to unmeasured confounding.
\newblock In \emph{Uncertainty in artificial intelligence}, pages 616--626.
  PMLR, 2020.

\bibitem[Zhang et~al.(2022)Zhang, Tian, and Bareinboim]{zhang2022partial}
Junzhe Zhang, Jin Tian, and Elias Bareinboim.
\newblock Partial counterfactual identification from observational and
  experimental data.
\newblock In \emph{International Conference on Machine Learning}, pages
  26548--26558. PMLR, 2022.

\bibitem[Hastie and Tibshirani(1995)]{hastie1995generalized}
Trevor Hastie and Robert Tibshirani.
\newblock Generalized additive models for medical research.
\newblock \emph{Statistical methods in medical research}, 4\penalty0
  (3):\penalty0 187--196, 1995.

\bibitem[Epstude and Roese(2008)]{epstude2008functional}
Kai Epstude and Neal~J Roese.
\newblock The functional theory of counterfactual thinking.
\newblock \emph{Personality and social psychology review}, 12\penalty0
  (2):\penalty0 168--192, 2008.

\bibitem[Gerstenberg et~al.(2021)Gerstenberg, Goodman, Lagnado, and
  Tenenbaum]{gerstenberg2021counterfactual}
Tobias Gerstenberg, Noah~D Goodman, David~A Lagnado, and Joshua~B Tenenbaum.
\newblock A counterfactual simulation model of causal judgments for physical
  events.
\newblock \emph{Psychological Review}, 128\penalty0 (5):\penalty0 936, 2021.

\bibitem[Reinhold et~al.(2021)Reinhold, Carass, and
  Prince]{reinhold2021structural}
Jacob~C Reinhold, Aaron Carass, and Jerry~L Prince.
\newblock A structural causal model for mr images of multiple sclerosis.
\newblock In \emph{International Conference on Medical Image Computing and
  Computer-Assisted Intervention}, pages 782--792. Springer, 2021.

\bibitem[Tousignant et~al.(2019)Tousignant, Lema{\^\i}tre, Precup, Arnold, and
  Arbel]{tousignant2019prediction}
Adrian Tousignant, Paul Lema{\^\i}tre, Doina Precup, Douglas~L Arnold, and Tal
  Arbel.
\newblock Prediction of disease progression in multiple sclerosis patients
  using deep learning analysis of mri data.
\newblock In \emph{International conference on medical imaging with deep
  learning}, pages 483--492. PMLR, 2019.

\bibitem[Tsirtsis et~al.(2021)Tsirtsis, De, and
  Rodriguez]{tsirtsis2021counterfactual}
Stratis Tsirtsis, Abir De, and Manuel Rodriguez.
\newblock Counterfactual explanations in sequential decision making under
  uncertainty.
\newblock \emph{Advances in Neural Information Processing Systems},
  34:\penalty0 30127--30139, 2021.

\bibitem[Liu et~al.(2020)Liu, See, Ngiam, Celi, Sun, Feng,
  et~al.]{liu2020reinforcement}
Siqi Liu, Kay~Choong See, Kee~Yuan Ngiam, Leo~Anthony Celi, Xingzhi Sun,
  Mengling Feng, et~al.
\newblock Reinforcement learning for clinical decision support in critical
  care: comprehensive review.
\newblock \emph{Journal of medical Internet research}, 22\penalty0
  (7):\penalty0 e18477, 2020.

\bibitem[Yu et~al.(2021)Yu, Liu, Nemati, and Yin]{yu2021reinforcement}
Chao Yu, Jiming Liu, Shamim Nemati, and Guosheng Yin.
\newblock Reinforcement learning in healthcare: A survey.
\newblock \emph{ACM Computing Surveys (CSUR)}, 55\penalty0 (1):\penalty0 1--36,
  2021.

\bibitem[Amodei et~al.(2016)Amodei, Olah, Steinhardt, Christiano, Schulman, and
  Man{\'e}]{amodei2016concrete}
Dario Amodei, Chris Olah, Jacob Steinhardt, Paul Christiano, John Schulman, and
  Dan Man{\'e}.
\newblock Concrete problems in ai safety.
\newblock \emph{arXiv preprint arXiv:1606.06565}, 2016.

\bibitem[Weidinger et~al.(2021)Weidinger, Mellor, Rauh, Griffin, Uesato, Huang,
  Cheng, Glaese, Balle, Kasirzadeh, et~al.]{weidinger2021ethical}
Laura Weidinger, John Mellor, Maribeth Rauh, Conor Griffin, Jonathan Uesato,
  Po-Sen Huang, Myra Cheng, Mia Glaese, Borja Balle, Atoosa Kasirzadeh, et~al.
\newblock Ethical and social risks of harm from language models.
\newblock \emph{arXiv preprint arXiv:2112.04359}, 2021.

\bibitem[Mehrabi et~al.(2021)Mehrabi, Morstatter, Saxena, Lerman, and
  Galstyan]{mehrabi2021survey}
Ninareh Mehrabi, Fred Morstatter, Nripsuta Saxena, Kristina Lerman, and Aram
  Galstyan.
\newblock A survey on bias and fairness in machine learning.
\newblock \emph{ACM Computing Surveys (CSUR)}, 54\penalty0 (6):\penalty0 1--35,
  2021.

\bibitem[Challen et~al.(2019)Challen, Denny, Pitt, Gompels, Edwards, and
  Tsaneva-Atanasova]{challen2019artificial}
Robert Challen, Joshua Denny, Martin Pitt, Luke Gompels, Tom Edwards, and
  Krasimira Tsaneva-Atanasova.
\newblock Artificial intelligence, bias and clinical safety.
\newblock \emph{BMJ Quality \& Safety}, 28\penalty0 (3):\penalty0 231--237,
  2019.

\bibitem[Leslie(2019)]{leslie2019understanding}
David Leslie.
\newblock Understanding artificial intelligence ethics and safety.
\newblock \emph{arXiv preprint arXiv:1906.05684}, 2019.

\end{thebibliography}

\newpage
\onecolumn

\section*{Appendices}

\appendix
\renewcommand{\thesection}{\Alph{section}}
\renewcommand{\thesubsection}{\Alph{section}.\arabic{subsection}}

\section{}\label{supp note: expanded definitions}


In this appendix we present the general version of Definition \ref{def: counterfactual harm} allowing harm and benefit to be measured along specific causal paths. 

The path-specific counterfactual harm measures the harm caused by an action $A = a$ compared to a default action $A = \bar a$ when, rather than generating the counterfactual outcome by including all causal paths from $A = \bar a$ to outcome variables $Y$, we consider only the effect along certain paths $g$. This is somewhat analogous to the path specific causal effect \citep{avin2005identifiability}, as we are using the $g$-specific intervention $A = \bar a$ on $Y$ in the counterfactual world relative to reference $A = a$ (the factual action).

\begin{dfn}[Path-specific counterfactual harm \& benefit]\label{theorem: cf harm expanded}
Let $G$ be the DAG associated with model $\mathcal M$ and $g$ be the edge sub-graph of $G$ containing the paths we include in the harm analysis. The path specific harm caused by action $A = a$ compared to default action $A = \bar a$ is given by
\begin{align}\label{eq: path harm}
h_g(a, x, y; \mathcal M)\!&=\!\!\int\limits_{y^*}\!\!P(Y_{\bar a, \mathcal M_g} = y^*\vert a, x, y ; \mathcal M) \!\max\!\left\{0,  U(\bar a, x, y^*) \!-\! U(a, x, y) \right\}\\
\!&=\!\!\int\limits_{y^*, e}\!\!P(Y_{\bar a} = y^*\vert e ; \mathcal M_g)P(e \vert a, x, y ; \mathcal M) \!\max\!\left\{0,  U(\bar a, x, y^*) \!-\! U(a, x, y) \right\}
\end{align}
Where $Y_{\bar a, \mathcal M_g}$ is the counterfactual outcome $Y$ under intervention $\text{do}(A = \bar a)$ in model $\mathcal M_g$ where $\mathcal M_g$ is formed from $\mathcal M$ by replacing the causal mechanisms for each variable $f^i(\text{pa}^i, e) \rightarrow f^i_g(\text{pa}^i (g)^*, e) =  f^i(\text{pa}^i(g)^*, \text{pa}^i(\bar g), e)$, where $\text{Pa}^i(\bar g)$ is the set of parents of $V^{(i)}$ that are not linked to $V^{(i)}$ in $g$ and $\text{pa}^i(\bar g)$ is the factual state of those variables. $E = e$ is the joint state of the exogenous noise variables in $\mathcal M$. Likewise, the expected benefit is
\begin{equation}\label{eq: path benefit}
b_g(a, x, y; \mathcal M)\!=\!\!\int\limits_{y^*}\!\!P(Y_{\bar a, \mathcal M_g} = y^*\vert a, x, y ; \mathcal M) \!\max\!\left\{0,  U( a, x, y) \!-\! U(\bar a, x, y^*) \right\}
\end{equation}
\end{dfn}

Note that if we following the construction of $\mathcal M_g$ in \citep{avin2005identifiability} we get that $\mathcal M_g$ is formed from $\mathcal M$ by i) partitioning the parent set for each variable $V^{(i)}$ in $\mathcal M$ into $\text{Pa}^i = \{\text{Pa}^i(g), \text{Pa}^i(\bar g) \}$ where $\text{Pa}^i(g)$ are the parents that are linked to $V^{(i)}$ in $g$ and $\text{Pa}^i(\bar g)$ is the complimentary set, ii) replacing the mechanisms for each variable with $f^i(\text{pa}^i, e^i) \rightarrow  f^i_g(\text{pa}^i, e^i) =  f^i(\text{pa}^i(g)^*, \text{pa}^i(\bar g), e^i)$ where $\text{pa}^i(\bar g)$ takes the value of $PA^i(\bar g)_{z}$ in $\mathcal M$ where $A = z$ is the reference action. However, in \eqref{eq: path harm} and \eqref{eq: path benefit} we condition on the state of all factual variables and assume no unobserved confounders, and the reference action is the factual action state. Therefore the state of $PA^i(\bar g)_{a}$ in $\mathcal M$ is equal to the factual state of these variables, giving our simplified construction for $\mathcal M_g$. 

We give examples of computing the path-specific harm in Appendices \ref{supp note: cca problems}-\ref{supp note: default policy}. In these examples we typically focus on causal models with an action $A$, an outcome $Y$ and another mediating outcome $Z$ s.t. $A \rightarrow Z \rightarrow Y$ and $A \rightarrow Y$, and $U = U(Y)$. We refer to the path-specific harm where we restrict to $A \rightarrow Y$ as the `direct harm', the path specific harm where we restrict to $A \rightarrow Z \rightarrow Y$ as the `indirect harm', and the `total harm' when we do not exclude any causal path.

\section{}\label{supp note: cca problems}

In this appendix we discuss the omission problem and pre-emption problem \citep{bradley2012doing}, and the preventing worse problem \citep{carlson2021causal}, and show how these can be resolved using our definition of counterfactual harm (Definition \ref{def: counterfactual harm} and its path-specific variant Definition \ref{theorem: cf harm expanded}). We also discuss some alternative definitions of harm.  

\noindent \textbf{Omission Problem:} Alice decides not to give Bob a set of golf clubs. Bob would be happy if Alice had given him the golf clubs. Therefore, according to the CCA, Alice's decision not to give Bob the clubs causes Bob harm. However, intuitively Alice has not harmed Bob, but merely failed to benefit him \citep{bradley2012doing}.

\textit{Solution:} The omission problem relies on the judgement that Alice does not have a ethical obligation to provide Bob with golf clubs, therefore her choice not to do so does not constitute harm to Bob. In our definition of harm, this implies the obvious default action be that Alice not giving Bob clubs by default, i.e. the desired harm query is the harm caused by Alice's action compared to baseline where Alice does not give Bob club. To compute the harm we construct the model $\mathcal M$ comprising of two variables; Alice's action $A \in \{0, 1\}$ where $A = 0$ indicates `Bob not given clubs' and $A = 1$ `Bob given clubs', and outcome $Y\in \{0, 1\}$ where $Y = 1$ indicates `Bob has clubs' and $Y = 0$ indicates `Bob does not have clubs'. The default action $A = \bar a$ is $\bar a = 0$. The causal mechanism for $Y$ is $y = a$, i.e. Bob has clubs iff he is given them. Whatever utility function describes Bob's preferences, the action $A = 0$ causes no harm in this model (Lemma \ref{lemma: default action zero harm} Appendix \ref{supp note: no go}) as $P(Y_{0} = y^* \vert A = 0, Y= y) = \delta(y^* - y)$ (factual $a$ and counterfactual $\bar a$ are identical) and for non-zero harm we require $y^* \neq y$.

Note there are other reasonable scenarios where Alice's actions would constitute harm. For example, if Alice was a clerk in a golf shop and Bob had pre-paid for a set of golf clubs, we could claim that `the clerk Alice harmed Bob by not giving him golf clubs'. In this case, we would expect Alice to give Bob the clubs by default (she has a ethical obligation  to do so) and the harm query we want is the same as before but measured against a different baseline---the counterfactual world where Alice gives Bob clubs. In this case the action $A = 0$ causes harm to Bob. For example, if Bob's utility is $U(y) = y$ (i.e. 1 for clubs, 0 for no clubs), then the harm caused by Alice is $P(Y_{A = 1} = 1 \vert A = 0, Y = 0) = 1$.

\noindent \textbf{Preemption Problem:} Alice robs Bob of his golf clubs. A moment later, Eve would have robbed Bob of his clubs. Therefore, Alice's action does not cause Bob to be worse off as he would have lost his clubs regardless of her actions, and so by the CCA Alice does not harm Bob by robbing him. However, intuitively Alice harms Bob by robbing him, regardless of what occurs later \citep{bradley2012doing}.

Let $A = \{1, 0\}$ denote Alice \{robbing, not robbing\} Bob respectively, and similarly $E =  \{1, 0\}$ for Eve. $B = \{1, 0\}$ denotes Bob \{has clubs, does not have clubs\}. Assume Bob's utility is $U(b) = b$. The causal mechanisms are $e = 1-a$ (Eve always robs Bob if Alice doesn't) and $b = 1 - a \vee e$ (Bob has no clubs if either Alice of Eve robs him). See Figure \ref{fig:omission_dag} for the causal model depicting these variables. 
\begin{figure}[h!]
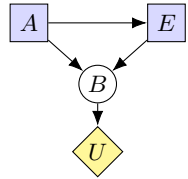

     \centering
\begin{influence-diagram}
  \node (help) [draw=none] {};
  \node (A) [left = of help, decision] {$A$};
  \node (E) [right = of help, decision] {$E$};
  \node (B) [below = of help] {$B$};
  \node (U) [below = of B, utility] {$U$};
  \edge {B} {U};
  \edge {A,E} {B};
  \edge {A} {E};
\end{influence-diagram}
     \caption{SCM depicting the preemption problem.}
\label{fig:omission_dag}
\end{figure}

Note that while Alice's action is an actual cause of Bob not having clubs, it is also an actual cause of Eve not robbing Bob, which is an event equally as bad as Alice robbing Bob. Intuitively, when we claim that Alice robbing Bob was harmful, we are making a claims about the direct effects of Alice's actions on Bob independently of their effect on Eve's actions (independent of the effect that her action has mediated through Eves action, preventing Eve from robbing Bob), i.e we are concerned with the direct harm caused by Alice's actions on Bob (Definition \ref{theorem: cf harm expanded}). 

The relevant harm query is the path-specific harm where we compare to the default action where Alice does not rob Bob, $\bar a = 0$. We want to determine the harm caused by Alice's action independently of its effect on Eve's action, which we do by blocking the path $\bar g = \{A \rightarrow E \}$. Applying Definition \ref{theorem: cf harm expanded} amounts to replacing the mechanism for $E$ with $f^E(a)\rightarrow f_g^E(A = 1) = 0$, i.e. $E$ is evaluated for the factual value of $A$. We then compute the harm using the counterfactual default action $A= 0$, giving the counterfactual $B(A = 0, E = 0) = 1$, which gives a counterfactual utility of 1 compared to a factual utility of 0. Therefore Alice directly harmed Bob by robbing him. 

Note we can also choose a different model where we explicitly represent the outcomes of the two agents decisions and the temporal order in which they occur (Figure \ref{fig:omission_dag0}). In this case the relevant harm query is essentially the same; the path specific harm where we determine the harm caused by Alice's action independently of the effect it has on whether or not Eve robs Bob (i.e. $\bar g = \{ R_A \rightarrow R_E\}$). 

\begin{figure}[h!]
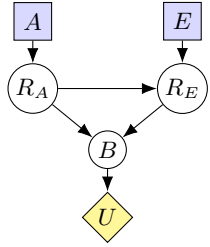

     \centering
\begin{influence-diagram}
  \node (help) [draw=none] {};
  \node (R_A) [left = of help] {$R_A$};
  \node (R_E) [right = of help] {$R_E$};
  \node (A) [above = of R_A, decision] {$A$};
  \node (E) [above = of R_E, decision] {$E$};
  \node (B) [below = of help] {$B$};
  \node (U) [below = of B, utility] {$U$};
  \edge {R_A, R_E} {B};
  \edge {B} {U};
  \edge {A} {R_A};
  \edge {E, R_A} {R_E};
\end{influence-diagram}
     \caption{SCM depicting the preemption problem explicitly representing the temporal asymmetry between Alice and Eve's actions effecting Bob.}
\label{fig:omission_dag0}
\end{figure}


\noindent \textbf{Preventing worse:} We provide two versions of the preventing worse problem \citep{carlson2021causal} which have identical causal models but intuitively different harms attributed to Alice's action. 

Case 1: Bob has \$2. The thief Alice is stalking Bob in the marketplace and notices that Eve (a more effective thief) is also stalking Bob. Seeing Eve before Eve notices her, Alice decides to make her move first. She steals \$1 from Bob. Eve was going to steal \$2 from Bob, but is incapable of doing so if someone else robs him first (e.g. Bob realizes he's been robbed and call for the police, making further robbery impossible). Seeing that Bob was robbed by Alice she decides not to rob him. 

Case 2: Eve has captured Bob and intends to torture him to death. Alice sees this, and is too far away to prevent Eve from doing so. She has a line of sight to Bob (but not Eve) and can shoot him before Eve has a chance to torture him to death, resulting in a painless death.

The causal model describing both of these cases is depicted in Figure \ref{fig:omission_dag2}. Let $E = \{1, 0\}$ denote if Eve is present or not, $A \in \{1, 0\}$ be Alice's action (rob, shoot) or not, $AB \in \{1, 0\}$ denote the outcome following Alice's action (Bob is robbed of \$1 / Bob is shot, or not) and let $EB \in \{1, 0\}$ denote Eves action on Bob (Bob is robbed of \$2 / Bob is tortured, or not). Let $Y \in \{0, 1, 2\}$ denote Bob's outcome, with 2 being the best (Bob has \$2 in Case 1, Bob survives in Case 2), 1 being the second worst (Bob has \$1 in Case 1, is killed painlessly in Case 2), and 0 the worst (Bob has \$0 in Case 1, died painfully in Case 2). The causal mechanisms are $a = e$ (e.g. Alice shoots/robs if Eve is present), $ab = a$ (Alice's bullet hits with certainty / successfully robs with certainty), $eb = e \wedge (1 - ab)$ (Eve tortures Bob if she is present and he is not shot / Eve robs Bob if she is present and hasn't been robbed already), and $y = ab + 2(1-ab)(1 -  eb) $ (Case 1: if Bob is shot he dies quickly, else if Eve tortures him he dies slowly, else he lives, Case 2: Bob has \$2 if not robbed, \$1 if robbed by Alice, \$0 if not robbed by Alice and robbed by Eve).

\begin{figure}[h!]
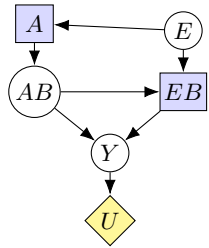

     \centering
\begin{influence-diagram}
  \node (help) [draw=none] {};
  \node (AB) [left = of help] {$AB$};
  \node (EB) [right = of help, decision] {$EB$};
  \node (E) [above = of EB] {$E$};
  \node (A) [above = of AB, decision] {$A$};
  \node (Y) [below = of help] {$Y$};
  \node (U) [below = of Y, utility] {$U$};
  \edge {E} {A};
  \edge {A} {AB};
  \edge {E, AB} {EB};
  \edge {AB, EB} {Y};
  \edge {Y} {U};
\end{influence-diagram}
     \caption{SCM depicting the preventing worse problem.}
\label{fig:omission_dag2}
\end{figure}

In Case 1, Alice intuitively harms Bob by robbing him. The argument supporting this is that Alice's robbery caused Bob to lose \$1, regardless of the fact that Alice's action prevented a worse robbery by Eve. However, for Case 2 it is argued in \cite{carlson2021causal} that Alice intuitively didn't harm Bob. While Bob died due to Alice shooting him, this action was intended to prevent a worse outcome from occurring (Bob being tortured to death), which would have happened with certainty had Alice not shot him. However, these two scenarios are described by equivalent causal models---only the variables have been re-labeled. 

From this we conclude that to satisfactorily describe these two situations we need two different harm queries, which we identify as the  path-independent (total) and path-specific harms. This `path dependence' of harm has been noted in psychology research, where people are more likely to attribute harm to cases where the agent is a direct cause of that harm rather than harm occurring as a side-effect of their actions \cite{waldmann2007throwing}. Note, this is no different than in causal analysis where in certain problems the casual effect is the desired query and in others the path-specific effect is the desired query \cite{avin2005identifiability}. For Case 1 we use the path-specific harm (Definition \ref{theorem: cf harm expanded}) to determine the harm caused by Alice robbing Bob independently of what effect it had on Eve's action. We block the path $\bar g = \{AB \rightarrow EB\}$ and use the default action $\bar a = 0$. In the counterfactual world, this gives $AB = 0$ and $EB = f^{EB}(A = 1, E = 1) = 0$, and therefore $Y = 2$, and so the direct harm of Alice robbing Bob is 2 - 1 = 1 compared to not robbing him. For Case 2, we note that while Alice shooting Bob is arguably intrinsically harmful (as is captured by the direct harm of 1 caused by $A = 1$ if we calculate the path-specific harm as in Case 1), this is not the morally relevant harm that we are referring to when we say that intuitively Alice did not harm Bob by shooting him. The reason Alice fired the shot was precisely because of its mediating effect on $Y$ through Eve's actions (preventing her from torturing him to death). From this we infer that the morally relevant harm in this case is the path-independent (total) harm. This we calculate using Definition \ref{def: counterfactual harm} and the default action $\bar a = 0$, which in the counterfactual world gives $AB = 0$, $EB = 1$, $Y = 0$ and hence $U = 0$, compared to the factual utility $U = 1$, giving the desired result that Alice did not harm Bob compared to not shooting him. Note that if we favoured the path-independent or path-dependent harm a priori this would either fail to detect harm in Case 1 or incorrectly attribute harm to Alice in Case 2. 

Finally, note that while we claim that the total harm is the desired harm query to evaluate the shooter example, we can also calculate the direct harm of shooting Bob (which is non-zero). This is the relevant harm query if we want to determine if shooting Bob is intrinsically a harmful action independently of what Eve will do (intuitively, shooting someone is harmful). So we can see without having to calculate anything that there is no unambiguous answer to the question `did Alice harm Bob', it depends on if we are interested in the direct harmfulness of the action (controlling for other mediating factors) or the overall harm (including other consequences of the action, such as preventing Eve from torturing Bob to death). These are simply different types of harms, and Alice's action is either harmful or not harmful depending on which of these harms we are referring to. This is in contrast to other approaches that seek to define harm with a single causal formula that applies to all scenarios, namely \citep{sander2022,sander2022_2}, and we discuss this approach and provide counterexamples to it in Appendix \ref{supp notes: sander}.

\section{}\label{supp note: accounts of harm}

In this appendix we discuss some other definitions of harm besides the CCA (Definition \ref{def cca}).

While we have focused on the counterfactual comparative account of harm, there are alternative definitions of harm which we refer to as `causal accounts' \citep{carlson2022causal}. Broadly speaking, causal accounts define an action or event as harmful if it causes some intrinsically bad state or outcome. For example, the `non-comparative account' by Harman \cite{harman2009harming} states ``One harms someone if one causes him pain, mental or physical discomfort, disease, deformity, disability or death'. The main differences to counterfactual comparative accounts are i) causal accounts do not describe how one determines if an action caused an outcome, and ii) states or outcomes are labelled as intrinsically bad (often referred to as `harms' \citep{carlson2021causal}). The issue with i) is that counterfactuals underlie the formal methods for attributing causes to outcomes (e.g. \citep{halpern2016actual}), and so simply stating `A causes B' merely obfuscates the counterfactual comparisons necessary to make this relationship formal. The issue with ii) is that we may want to describe harms for outcomes that are not intrinsically bad. For example, if I win the lottery and someone steals half of my winnings this is arguably an act that harmed me, in spite of the fact that having half of a lottery jackpot is arguably not an intrinsically bad state to be in. To define a state as intrinsically bad, it appears that we have to choose a level of utility and claim that all states with a utility below this level are intrinsically bad and can constitute harms, and all states above this level cannot. Interestingly, this is the choice made when defining harm in a recent proposal that uses counterfactual comparisons \citep{sander2022} (with outcomes that obtain a utility below the default utility being bad outcomes), although the default utility her varies from case to case. Truly comparative definitions only consider the relative utility difference between the factual outcome and the counterfactual outcome (as is the case with our definition) and there is no need to introduce intrinsically bad states. We discuss this alternative definition further in appendix \ref{supp notes: sander}.

Another alternative to counterfactual accounts of harm is to avoid causation altogether. For example the event-based account of harm \citep{hanser2008metaphysics} states that `someone suffers a level-1 harm (benefit) with respect to a certain basic good if and only if he loses (gains) some quantity of that good' and `someone suffers a level-(n+1) harm with respect to a certain basic good if and only if he is prevented from receiving a level-n benefit with respect to that good'. This can be understood as harm by a loss of utility (goods), although it is incapable to attributing harm to a given cause. In our assistant example in section \ref{section: hard decisions}, note that this definition would make no distinction between loses caused by the assistant versus those caused by market fluctuations, or in our treatment example (section \ref{section: intro}) would not be able to distinguish between deaths caused by doctors versus those caused by diseases. This places harm squarely at the level or gaining or losing utility and in doing so avoids providing an answer to the question of how harm should be attributed to specific actions or events (i.e. one is only capable of saying they have been harmed by something).

\section{}\label{supp note: default policy}

In this Appendix we discuss selecting and interpreting default actions, harmful events, and various edge cases not covered in the main body of our paper such as harmful default actions. Note that while the CCA (Definition \ref{def cca}) states `[the action] had not been performed', this should not be interpreted as the counterfactual action always being `do nothing'. Instead, we define a \textit{harm query} as taking as input a specific default action or policy. This allows for more nuanced and general measures of harm with respect to some arbitrary baseline, much in the same way that treatment effects can in general be defined with respect to any baseline treatment (i.e. they do not always have to measure compared to `no treatment' / a control group). While we do not provide a method for determining the desired default action or policy in general, we observe that statements about harm often implicitly assume some default action and give examples of this below. Indeed, in Appendix \ref{supp notes: sander} we show in Example 3 that being able to enforce a unique default action is vital in some scenarios to give intuitive results, and the alternatives of assuming a single universal default action or enumerating existentially over actions can give counterintutive results. We also note that while the examples outlined in the main text assume deterministic default actions, it is trivial to extend our definitions to non-deterministic default actions by replacing $\text{do}(A = \bar a)$ in Definition \ref{def: counterfactual harm} with a soft intervention (e.g. \citep{correa2020calculus}).

Our definition of harm treats the default action as an integral part of the harm query; harm is determined given i) an SCM of the decision task, ii) a utility function, iii) a default action, policy or event, and iv) if desired, a set of causal paths to restrict our attention to (see Appendix \ref{supp note: expanded definitions}). Note this is equivalent to the set of inputs when characterising treatment effects, for which we have to specify a baseline (default) treatment and, if we want to measure path-specific treatment effects, a set of permitted causal paths \citep{shpitser2012identification}. As with treatment effects, it falls to the practitioner to determine the desired values for these inputs to the query---the definition of the treatment effect does not inform the practitioner as to what default treatment or path-specific analysis they should choose for a specific problem. The choice to exclude specific causal pathways is usually made because we do not want to consider certain outcomes as being natural consequences of an action. For example, in Appendix \ref{supp note: cca problems} when describing the `preventing worse' problem, if we are interested in the intrinsic harm caused by an action (e.g. the harmfulness of robbing someone, independent of its effect of the behaviour of other agents), then we restrict to the causal pathways to the utility that are not mediated by the actions of other agents. 

\textbf{Note: non-comparative harm:} In our solution to the omission problem we use the fact that our definition of harm is comparative. One may be tempted to devise a non-comparative account of harm, to answer questions such as `was $A = a$ harmful?' without in the answer having to refer to an implied baseline. We take the view that this question is ambiguous and cannot be answered in general. There are many similar questions that appear intuitive but cannot be answered without making comparisons to a baseline. For example, one can ask `was treatment $A = a$ effective?', but effective compared to what; no treatment? Or some other alternative treatment? Often these questions appear non-comparative because they are asked within a context which implies which baseline comparison should be made. Examples of these implied comparisons are given below. While some have argued against comparative accounts on the grounds that it is not always clear which comparison is needed \citep{hanser2008metaphysics}, this problem arises due to the ambiguity of statements about harm rather than due to a problem with its formal definition. Clearly, there is not a single universal comparison or default action that is suitable for all situations (this assumption leads to the omission problem, described in Appendix \ref{supp note: cca problems}), and the ability to explicitly choose the comparison is a feature rather than a fault with the CCA. While in principle our default-action-dependent measure of harm can be converted to default-action-independent measure (as with \citep{sander2022,sander2022_2}) if desired, e.g. by taking the max of the harm over all default actions, in all of the examples we explore this is not desirable.

In all of the following examples we consider varations of the treatment example (Examples 1 \& 2). The modelling details for these examples are given in Appendix \ref{supp note: treatment model}.

\textbf{Example 1:} \textit{A patient is given treatment 1 and dies. Did the treatment harm the patient?} Typically, when we are asked if a given treatment is harmful we imagine a scenario where the patient is given no treatment or placebo, and ask what would have happened to the patient. If this is our interpretation of this question, then the default action is $\bar a = 0$ (no treatment) and the patient was not harmed, matching our intuition coming from the fact that any patients who die following treatment 1 would have died anyway. 

\textbf{Example 2:} \textit{Although standard clinical guidelines require a doctor to administer treatment 1, the doctor fails in their duty to treat the patient at all and the patient dies. Was this action harmful?} It is implied here that we want to compare to the default action $\bar a = 1$ (treatment 1), in which case the expected harm of the factual treatment $\bar a = 0$ is $> 0$. 

\textbf{Example 3:} \textit{A doctor, following standard clinical guidelines, administers treatment 2 and the patient dies due to an allergic reaction. Was the patient harmed by receiving this treatment?}. In this example, we may be tempted to choose the default treatment to be $\bar a = 2$ (treatment 2), as this is the normative action taken by the doctor. However, the wording suggests that we want to determine the intrinsic harm caused by administering this treatment regardless of what the doctor was expected to so, and so we should intuitively choose the default action to be $\bar a = 0$ which gives a non-zero harm.

\textbf{Example 4:} \textit{A passing bystander with no duty of care to the patient decides not to treat them, and the patient dies. Did they harm the patient?} Here it is implied that the default action is not to treat the patient, and so the default action is $\bar a = 0$. As the factual and counterfactual actions are identical, the harm and benefit are both zero, and the bystander failed to benefit the patient rather than harm them (though this failure to benefit may also be morally reprehensible,  \citep{feit2019harming}). 

\textbf{Example 5:} \textit{In a randomised control trial, participants are assigned treatment 1 or no treatment. Those who are assigned no treatment receive additional care. How beneficial is treatment 1?} There are two options here, we can calculate the total benefit i.e. `how much benefit in total do patients given treatment 1 receive compared to those receiving the placebo', or we can calculate the direct benefit i.e. `controlling for the fact that treated patients receive less subsequent care, how directly beneficial is treatment 1 compared to the placebo'. The question `how beneficial is treatment 1' is ambiguous, but the most intuitive interpretation of this question is that we want to know how intrinsically beneficial the treatment is, i.e. correcting for the spurious effect of this poorly designed trial where the control group receive additional care. In this case, implied choice is that the default action is administering the placebo and we perform a path-specific analysis where we exclude the causal path from the treatment choice to additional care.

\section{Comment on Beckers et. al.}\label{supp notes: sander}

In this appendix we discuss a recent alternative proposal for defining harm using causality \citep{sander2022,sander2022_2}. First we describe these definitions (referred to as the BCH) and, to allow for easier comparison, describe how they can be recovered from our definitions in single action single outcome decision tasks. We then identify several examples where the BCH definitions give different results to ours and discuss why these differences arise. In doing so we identify what we believe are the following issues with the BCH definitions and give examples for each,

\begin{enumerate}
    \item Attributing harm to actions that are not harmful (false positives)
    \item Omission problem
    \item Cancelling of harms v.s. benefits
   \item Enforcing the worst-case harm
    \item Practical limitations (intractability in complex domains / action spaces)
\end{enumerate}

Finally, we address some claims about our definition and interpretation of our results reported in \citep{sander2022,sander2022_2}.

\begin{dfn}[Actual causes] \label{dfn: sander 1}
$A = a$ rather than $A = a'$ causes $Y = y$ rather than $Y = y'$ in the model $\mathcal M$ for exogenous noise state $E = e$ iff;
\begin{enumerate}
    \item $A(e) = a$ and $Y(e) = y$
    \item The exists a set of environment variables $W$ with factual state $W(e) = w$ such that $Y_{A = a', W = w}(e) = y'$
    \item $A = a$ is minimal; There is no strict subset of the set of variables $\tilde A\subset A$ such that for $\tilde A = \tilde a$ we can satisfy conditions 1. and 2. 
\end{enumerate}
\end{dfn}

In the following we will focus on scenarios where we can consider single action variables alone and so we can ignore condition 3 in Definition \ref{dfn: sander 1}.

\begin{dfn}[BCH harm] \label{dfn: sander 2} $A = a$ harms the user in model $\mathcal M$ and exogenous noise state $E = e$, if there exists an outcome $Y = y$ an action $A = a'$ such that, 
\begin{enumerate}
    \item[H1] $U(y) < d$ where $d$ is the default utility
    \item[H2] $\exists$ $Y = y'$ s.t. $A = a$ rather than $A = a'$ causes $Y = y$ rather than $Y = y'$ and $U(y') > U(y) $.
\end{enumerate}
$A = a$ strictly harms the user if in addition,
\begin{enumerate}
    \item[H3] $U(y) \leq U(y'')$ for the unique $y''$ such that $Y_{a'}(e) = y''$
\end{enumerate}

\end{dfn}

\textbf{The BCH definition in words: } Simply stated, BCH aims to define an action or event as harmful if it is an actual cause of the user having a utility lower than their default utility (we discuss `strict harm' separately, which additionally requires H3 to be satisfied). While this is a very crisp and intuitive account of harm, we will argue that it is too permissive---concretely, that an action being an actual cause of a bad outcome is a necessary but insufficient condition for harm. As we will show there are situations where an action is an actual cause of the utility being lower than the default, which nevertheless we would not want to consider as harms. 

To describe how Definition \ref{dfn: sander 2} is evaluated, H1 requires that the factual outcome achieved by the user is lower than some predetermined default utility (which plays a similar role to our default actions), while H2 determines if the action $A = a$ was an actual cause of this bad outcome. Following Definition \ref{dfn: sander 1}, to determine if H2 is satisfied we consider a given exogenous noise state $E = e$ (BCH determines harm for fixed noise states i.e. deterministic models), and consider every possible action $A = a'$ the agent could have taken. For each $A = a'$ we then enumerate over every subset $W$ of the endogenous variables and compute the counterfactual outcome for $ A = a'$ while $W$ is held fixed in the factual state it took for $A = a$ ($W$ is referred to as a `contingency'). This generates a counterfactual outcome for each counterfactual action and contingency, and if any of these outcomes has a higher utility than the factual outcome then $H2$ is satisfied. This method for determining harm closely mirrors the method for determining actual causes \citep{halpern2016actual}. 

\textbf{Recovering the BCH definition from our definition:} To allow for easier comparison between the BCH definition of harm and ours, we describe how BCH can be recovered from our definition in the simplest settings by imposing additional constraints. In the case of single actions $A$ and single outcomes $Y$ (where contingencies play no role), it is possible to recover Definition \ref{dfn: sander 2} from ours by imposing the following additional conditions; i) restricting to deterministic models (a given $E = e$), ii) restricting to action-independent utilities $U(a, x, y) \rightarrow U(y)$, iii) calculating the counterfactual harm (using our Definition \ref{def: counterfactual harm}) for every possible default action $A = \bar a$ and taking the max over these values, iv) returning `True' if the max over these values is greater than zero and the factual utility is lower than the default value $d$. In this setting the BCH definition reduces to determining if the utility is lower than some threshold and if there is non-zero harm (following our definition) for any default action.

In the more general case where there are multiple actions / outcomes, our definitions diverge significantly due to the freedom in the BCH definition to choose any contingency that results in harm. However, if we restrict to `strict harm' the above equivalence under constraints still applies, as H3 implies H2 and does not consider contingencies. So the BCH account for `strict harm' is essentially identical to our definition except that it takes the max harm over all possible default actions and applies a cut-off using a default utility.

\textbf{Example 1, drowning soldier (False positives).} In this example we describe an action that is an actual cause of a bad outcome (achieving a lower utility than the default), but which we intuitively do not want to describe as harmful.

A solider falls in a river and is drowning. His officer stands beside the river and has a responsibility to help the soldier if he can. He can choose to swim out to rescue him $R = T$ or not $R = F$. There are two enemy sharpshooters watching the river. The first will shoot $S_1 = T$ if the officer tries to rescue him and wont shoot otherwise, while the second will shoot $S_2 = T$ if he doesn't try to rescue him, and wont shoot if he does. The officer decides not to rescue the soldier and he is shot dead $D = T$ by shooter 2. Was the officers decision not to rescue the soldier harmful?

The causal equations for this situation in Boolean notation are $S_1 = R$ (shooter 1 shoots if rescue attempt), $S_2 = \neg R$ (shooter 2 shoots iff no rescue attempt, $\neg = $`NOT'), $D = \neg [R \wedge \neg S_1 \wedge \neg S_2]$ (the soldier doesnt die if he is rescued and shooter 1 and 2 do not shoot, otherwise he dies). Let $U(D = T) = 0$ and $U(D = F) = 1$. Clearly, shooter 2 harmed the soldier, so we require the default utility to be $d > 0$ otherwise BCH would assign no harm to shooting the soldier. However, the officers choice not to rescue the soldier is also an actual cause of the soldier dying. To see this, note that if we take the contingency where we hold $S_1 = F$ fixed in its factual state, then the counterfactual outcomes for $R = T$ are $S_2 = F$ (the second shooter wont shoot) and (as $S_1 = F$) then $D = F$ (the soldier survives). As the default utility is greater than 0, we conclude that the officer harmed the soldier by not attempting the rescue him, despite the fact that this outcome never could have occurred (in reality, the soldier would have died regardless of what the officer did). By allowing for arbitrary contingencies (as is necessary for actual causation to work) we end up attributing harm to agents who intuitively did no harm (rather, they failed to benefit). Note that this problem is not present in our definition---the direct and total harm of the officers action is zero as the counterfactual outcome for $R = T$ is that the solider still dies. Of course we can consider a path-specific harm where we control for the effect of the officers action on shooter 1 alone, but intuitively this is not the harm query we are interested in. In this example, the result given by the BCH definition is somewhat analogous to computing the path-specific harm over all subsets of paths and taking the max over these harms. 

\textbf{A note on assuming single outcomes.} Note that in \citep{sander2022} the authors describe `packaging up' outcomes in to a single variable, and `allow only interventions on complete outcomes'. We are unsure if this means that their definition is not intended for more complicated causal scenarios where actions have multiple outcomes (descendants of the action variable). If this is the case then it would appear to be a strong limitation, and it would be surprising as they go on to apply their definition to examples with multiple outcome variables and contingencies (e.g. when dealing with the late preemption problem \citep{sander2022}). It would also imply that there is no need for using Definition \ref{dfn: sander 1} as contingencies are only necessary for identifying actual causes when there are multiple outcome variable. This is unlikely to be the case, and in the following we assume that BCH is intended for use with multiple outcome variables and allows for all contingencies. Regardless, we show with the next example that even if we restrict to single outcomes BCH still gives counter-intuitive results. 

\textbf{Example 2, Omission problem (False positives).} The omission problem is another example of a false positive where an agent is mistakenly attributed harm because they failed to take a beneficial action. In the standard omission problem (see Appendix \ref{supp note: cca problems}) there is a single agent, and BCH gives the correct answer in this case because we can choose the default utility to be zero. In this example we present an extension of the omission problem that is violated by BCH. 

Alice can choose to give Bob her golf clubs or not. She has no obligation to do so. Unbeknownst to her, Eve is planning to rob Bob, but if Bob is holding a golf club she will not dare rob him. Alice decides not to give Bob her golf club and Bob is robbed by Eve. By choosing not to gift her clubs, did Alice harm Bob?

There are four possible outcomes; Bob has / does not have clubs ($C$) , Bob is / is not robbed ($R$). Let $R\times C$ be the packaged-up outcome variable, with $U(R = T, C = F) < U(R = F, C = F) < U(R = F, C = T)$.  Clearly, we require Eve's action to be harmful so the default utility must be $d > U(R = T, C = F)$. However, if Alice had given Bob the clubs then we would have the counterfactual outcome $R = F, C = T$. Therefore the BCH definition asserts that Alice harmed Bob by not giving him her clubs. However, another more intuitive perspective is that Alice failed to benefit Bob, rather than harm him (just as in the omission problem). The issue with the BCH here is twofold; by using default utilities and having to accomodate that Eve harmed Alice, we cannot use the BCH solution for the standard omission problem where we choose the default utility to always correspond to Bob having `no clubs'. Then, as we must consider all possible counterfactual actions, we must conclude that Alice harmed Bob. With our definition this problem never arises, we simple choose the obvious default action to be that Alice does not give Bob her clubs.

\textbf{Example 3, Cancelling harms and benefits.} While in \citep{sander2022} the default utility is often chosen to reflect some actual utility the user can receive, this is not necessarily the case, and to get the BCH definition to give the desired answer we sometimes have to choose a default utility that is hard interpret or choose ahead of time (beyond selecting whatever default utility gives the desired solution). Consider the following example where any choice of default utility gives a counterintutive result.

Bob has $\$1$. Alice hacks into Bobs credit card and robs $\$K$ from him. Seeing this, Bobs colleague Eve takes pity on him and gives him $\$K$. We claim that Alice's action was directly harmful and Eves action was beneficial. An intuitive property of harm is that the harms caused by one agent's actions should not by default be cancelled out by the another agents beneficial actions---stealing from someone is harmful, regardless of whether or not someone else responds by reimbursing the victim. Even if we are in a state of high utility, typically there are many events in the past that contributed to this outcome that were harmful or beneficial, and it is important to be able to identify and quantify these harms and benefits with respect to our current utility (for example we may want to punish Alice in proportion to how harmful her action was, and reward Eve similarly). By analogy, the total causal effect of Alice's action on Bob's utility is zero, but the direct causal effect is non-zero. 

Using our path-specific harm (Definition \ref{theorem: cf harm expanded}) and a default action of $K = 0$ for Alice, we measure the direct harm of Alice robbing $\$K$ from Bob to be $K$. Can we do this with the BCH definition? In this example, if we make the obvious choice $d = 1$ (Bob expects to have a dollar), then H1 is not satisfied and Alice's theft was not harmful. In the BCH definition, no actions or events are harmful if the user's utility is greater or equal to the default. Therefore we must choose $d > 1$, although this is difficult to justify or interpret (should Bob expect to have more money than he has / he started with?). Using Definition \ref{dfn: sander 3} we compare to the highest utility counterfactual, which is the outcome where where we fix the contingency that Eve gives Bob $\$K$ and choose the counterfactual action where Alice doesn't rob Bob. This gives a counterfactual utility of $1 + K$, and so the harm according to Definition \ref{dfn: sander 3} is $\max (0, \min (d, 1 + K) - 1)$. Therefore to recover that stealing $\$K$ has a harm of $K$, we must choose $d\geq1 + K$. This default utility can be made arbitrarily large by choosing larger values of $K$, and in general $d$ should be defined independently of Alice's action (i.e. her choice of $K$). One solution to this problem would be to use the intuitive default utility $d = 1$ along with a path-specific version of the BCH definition of harm, although it is not immediately clear how to formalise this path-specific version as this would require a path-specific variant of Definition \ref{dfn: sander 1} which in its current form considers all contingencies. 

\textbf{Example 4, enforcing worst-case harm.} In \citep{sander2022_2} the authors present a quantitative definition of harm,

\begin{dfn}[BCH quantitative harm] \label{dfn: sander 3}
If for a given $E = e$, $A = a$ rather than $A = a'$ causes $Y = y$ rather than $Y = y'$, then the quantiative harm of $A = a, Y = y$ relative to $A = a', Y = y'$ is, 
\[
QH(a, y, a', y', e, ; \mathcal M) = \max (0, \min (d, U(y') - U(y)))
\]
where $d$ is the default utility. The quantitative harm for $E = e$ caused by $A = a$ is 

\[
QH(a, e, ; \mathcal M) = \max\limits_{a', e'} QH(a, y, a', y', e, ; \mathcal M) 
\]
if there is some $A = a'$ and $Y = y'$ such that $A = a$ rather than $A = a'$ causes $Y = y$ rather than $Y = y'$. If there is no such $A = a', Y = y'$ then the quantitative harm caused by $A = a$ is zero. 
\end{dfn}

We refer to Definition \ref{dfn: sander 3} as the \textit{max harm}, as for each $E = e$ it compares to the counterfactual action and contingency that gives the highest counterfactual utility. In \citep{sander2022_2} the expected harm (averaging over $E = e$) is sometimes computed using a weighting scheme, although it is not clear how these weights are chosen in a systematic way. However if there is some $P(E = e) > 0$ for which $QH(a, e, ; \mathcal M) > 0$ then we will claim that the harm is greater than zero by Definition \ref{dfn: sander 3} . 

As noted in \citep{sander2022_2} the max harm is non-zero for treatment 1 in our introductory example (see Examples 1 \& 2 in the main text, and Appendix \ref{supp note: treatment model}). This is in spite of the fact that for all patients (for every $E = e$) treatment 1 increases the survival rate compared to no treatment. The max harm for treatment 1 is greater than zero because there are some patients who die given treatment 1 who would survive if they had been given treatment 2. Essentially, we are forced to measure harm compared to the best possible action we could have taken in every noise state. 


While the max harm is likely the desired measure of harm in some situations (for example, a doctor who can observe $E = e$ may be expected to always take the best possible action for a patient), it does not permit the standard perspective used for measuring the harms and benefits of medicines (which is the situation described in our example). The intrinsic harms of medicines are often described in terms of risks, where the rates of a negative outcome following treatment are compared the rates for a control / placebo group (see for example the `number needed to harm (NNT)' in \citep{holden2003benefit}), not by comparing to the rates for a group where every patient is given the optimal treatment for that individual. Note that to get from harms to risks, one has to make the assumption of counterfactual independence\footnote{To see how harm can be treated as risk in clinical trials, note also that the harm reduces to the risk in this example if we assume counterfactual independence, $P(Y^*_{a^*}\vert Y_a) =P(Y^*_{a^*})$. This can be understood as the standard assumption in clinical trials that (for example), if a patient is given the treatment and dies, then the probability they would have lived if they had not been treated is simply the probability of death in the untreated group.}.

Medicines typically have heterogeneous effects, and if there is some sub-population $E = e$ where treatment 1 is not as effective as treatment 2, it is typically claimed that treatment 1 is less effective (less beneficial) for that sub-population that treatment 2, rather than claiming that treatment 1 is harmful for that population (that is, unless it is less effective than `no treatment'). In our treatment example, this is captured by measuring harm as `the number of patients who die who would have lived if they had not been treated'. This is not the only way of measuring harm for treatments (we discuss several others in Appendix \ref{supp note: default policy}), but it is the standard perspective when describing the harms and benefits of medicines. By forcing us to consider every possible action and contingency, the max-harm is incapable of accommodating this perspective.

\textbf{Practical limitations:} In addition to the limitations of our definition (namely knowledge of the SCM, which we discuss in Appendix \ref{supp note: limitations}), we comment on some additional limitations specific to the BCH definition of harm. Firstly (as noted by the authors \citep{sander2022,sander2022_2}) calculating the harm by the BCH definition is intractable. This is because to evaluate Definition \ref{dfn: sander 1} we must enumerate existentially over possible actions and contingencies (evaluating Definition \ref{dfn: sander 2} is at least DP-complete \citep{halpern2015modification}). As the authors point out in \citep{sander2022}, this practically restricts the BCH definition to situations with a few low-dimensional variables. The authors argue that most causal models we will use to deal with harm in the real world will only consist of only a few low-dimensional variables. This does not appear to be the case at first glance, for example, even the simple dose response model we use to illustrate our results in section \ref{section: experiments} has a continuous action and outcome space, and so does not fit the description of models compatible with BCH. This model is very minimal compared to a real-world clinical decision task which typically involve a large number of outcome variables that can be continuously valued and/or include many (potentially sequential) actions. Ultimately, restricting to decision tasks that have an efficiently enumerable action space and outcome space prevents the BCH definition from being applied to most machine learning settings, where this condition is typically not satisfied (indeed one of the main motivations for using machine learning algorithms in these domains is that we cannot efficiently enumerate over all possible combinations of states and actions). 




\textbf{Discussion of our definition of harm in Becker's et. al.} The main criticisms of our work in \citep{sander2022,sander2022_2} are as follows; i) Our definition makes use of but-for causation rather than Definition \ref{dfn: sander 1}, ii) our definition assumes the default action is determined normatively and would assign no harm to the scenario where a patient is given a standard (normative) treatment which unintentionally harms them, iii) that we introduce the possibility for a path-specific analysis to `deal with the limitations of but-for causation' and do not explain how to choose when a path-specific analysis is necessary to determine harm. We address these points below.

\begin{itemize}
    \item[i)] As we have shown, using the modified definition of actual causes in Definition \ref{dfn: sander 1} actually leads to the wrong harm attributions, and makes the definition of harm unsuitable in complex domains. It also precludes many of our main theoretical results such as the harm-benefit trade-off (section \ref{section: hard decisions}) and the no-go theorem (section \ref{section: no go}). While but-for counterfactuals alone are insufficient attributing actual causes, we argue that an action or event being an actual cause of a bad outcome is a necessary but insufficient condition for harm, and Definition \ref{dfn: sander 1} is unnecessary and restrictive. But-for counterfactuals are also widely used for causal reasoning in other areas, for example in causal inference \citep{shpitser2012identification} and causal definitions of fairness \citep{NIPS2017_a486cd07}. 
    \item[ii)] The authors claim that our definition assigns zero harm to any action that is `morally preferable'. A clear counterexample of this is our treatment example where treatment 1 is no longer available. Treatment 2 may be the morally preferable action, as it achieves a higher recovery rate than no treatment, and our definition assigns non-zero harm to treatment 2. We point out that it is easy to construct similar `counterexamples' for the BCH definition if we choose the wrong default utility. The broader point made by the authors is that for our definition the default action is not harmful. This is a misconception; we do not claim that the default action is not harmful, we claim that any action \textit{is not harmful compared to its self}, which is trivially true. The default action is merely a comparative baseline, not a normatively determined `morally preferable' action, as explained in Appendix \ref{supp note: default policy}. By analogy, for a treatment $T$ the average treatment effect is zero when setting the baseline treatment to be $T$ also. Secondly, we point out that in their definition any action that achieves the default utility is not harmful, which unlike the default action having zero harm is a non-trivial assumption which results in bona fide counterexamples to their definition as outlined in this section.  
    \item[iii)] We do not use path-specific analysis to deal with the limitations of but-for causation (see Appendix \ref{supp note: default policy} for discussion). Additionally, our use of path-specific analysis is directly comparable to its well-established use in path-specific effects \citep{avin2005identifiability}. Just as with our definition of harm, these path-specific effects include i) path-specific variants and ii) variable base-line (i.e. default) treatments, and at no point in their definition is the baseline treatment or the causal path specified---these are inputs to the query (as is the default utility in the BCH definition). 
\end{itemize}

\section{}\label{supp note: harm tradeoff}

In this Appendix we prove Theorem \ref{theorem: hb tradeoff}. Noting that $\max \{0, U(a, x, y) - U(\bar a, x, y) \} - \max \{0, U(\bar a, x, y^*) - U(a, x, y) \}$ $= U(a, x, y) - U(\bar a, x, y^*)$, subtracting the expected harm from the expected benefit (Def \ref{def: counterfactual harm}) gives, 
\begin{align}
    &\mathbb E[ b \vert a, x;\mathcal M] - \mathbb E[ h \vert a, x;\mathcal M]\\
    &=\int\limits_{y, y^*}\!\! P(y, Y_{\bar a} = y^*\vert a,  x;\mathcal M) (U(a, x, y) \!-\! U(\bar a, x, y^*))\\
    &= \int\limits_{y}P(y \vert a,  x)U(a,x,y) - \int\limits_{y^*}P(Y_{\bar a} = y^*\vert x)U(\bar a,x,y^*)\\
    &= \mathbb E[U \vert a, x] - \mathbb E[U \vert \bar a, x]
\end{align}

\section{}\label{supp note: treatment model}

In this Appendix we derive the SCM model for the treatment decision task in examples 1 and 2, and calculate the average treatment effect and counterfactual harm. 

Patients who receive the default `no treatment' $T = 0$ have a $50\%$ survival rate. $T = 1$ has a $60\%$ chance of curing a patient, and a $40\%$ chance of having no effect, with the disease progressing as if $T= 0$, whereas $T = 2$ has a $80\%$ chance of curing a patient as a $20\%$ chance of killing them, due to some unforeseeable allergic reaction to the treatment.

Next we evaluate this expression for our two treatment by constructing an SCM for the decision task. The patient's response to treatment is described by three independent latent factors (for example genetic factors) that we model as exogenous variables. Firstly, half of the patients exhibit a robustness to the disease which means they will recover if not treated, which we encode as $E^1 \in \{0, 1\}$ where $e^1 = 1$ implies robustness with $P(e^1 =1) = 0.5$. Secondly, the patients may exhibit a resistance to treatment 1 indicated by variable $E^2$, with $e^2 = 1$ implying resistance with $P(e^2=1) = 0.4$. Finally, the patients can be allergic to treatment 2, indicated by variable $E^3$ with $e^3 = 1$ and $P(e^3 = 1) = 0.2$. Given knowledge of these three factors the response of any patient is fully determined, and so we define the exogenous noise variable as $E^Y = E^1\times E^2 \times E^3$ with $P(e^Y) = P(e^1)P(e^2)P(e^3)$. 

Next we characterise the mechanism $y = f(t, e^Y)$ $=f(t, e^1, e^2, e^3)$ where $f(0, e^Y) = [e^1=1]$ (untreated patients recover if they are robust), $f(1, e_Y) = $  $[e^1 = 1] \vee [e^2 = 0]$ (patients with $T = 1$ recover if they are robust or non-resistant) and $f(2, e_Y) = $ $[e^3 = 0]$ (patients with $T = 2$ recover if they are non-allergic), where $[X = x]$ are Iverson brackets which return 1 if $X=x$ and 0 otherwise, and $\vee$ is the Boolean OR. 

The recovery rate for $T=1$ and $T=2$ can be calculated with \eqref{eq: marginals from u} to give $P(Y_1 = 1) = $ $P(e^1 = 1 \vee e^2 = 0) $ $=1-P(e^1 = 0)P(e^2 = 1) = 0.8$, and likewise $P(Y_2 = 1) = P(e^3 = 0) = 0.8$. Hence the two treatments have identical outcome statistics (recovery/mortality rates), and all observational and interventional statistical measures are identical, such as risk, expected utility and the effect of treatment on the treated. Note as there are no unobserved confounders the recovery rate for action $A = a$ is equal to $\mathbb E[Y_a]$. 

We compute the counterfactual expected harm by evaluating \eqref{eq u to p counterfactual}, noting that $Y^*_0(e) = 1$ if $e^1 = 1$, $Y^*_1(e) = 0$ if $e^1 = 0$ and $e^2 = 1$, and $Y^*_2(e) = 0$ if $e^3 = 1$. This gives $P(Y_1 = 0, Y^*_0 = 1) = 0$, i.e. there are no values of $e^Y$ that satisfy both $Y_1(e) = 0$ and $Y_0(e) = 1$, and therefore $\text{do}(T_1 = 1)$ causes zero harm. However, $P(Y_2 = 0, Y^*_0 = 1) = P(e^1= 1)P(e^3 = 1) = 0.1$, and so $\text{do}(T_2=2)$ causes non-zero harm. This is due to the existence of allergic patients who are also robust, and will die if treated with $T=2$ but would have lived had $T = 0$.  \\

\section{}\label{supp note: ai assistant}

In this Appendix we derive the policies of agents 1-3 in Example 3. We note that outcome $Y$ is described by a heteroskedastic additive noise model with the default action $\bar a$ (no action) corresponding to $A = 1, K = 1$. The expected harm is given by Theorem \ref{theorem: expected harm GAM} with $\sigma(\bar a) = 100$, $\sigma(A = 2) = 100$, $\sigma (A = 3) = 0$ and $\sigma(A = 1, K) = 100 K$. $\mathbb E[U \vert \bar a] = 100$ $\mathbb E[U \vert A = 2] = 110$, $\mathbb E[U \vert A = 3] = 80$ and $\mathbb E[U\vert A = 1] = 100K$, where we have used $\text{Var}(K Y) = K^2 \text{Var} (Y)$ and $\text{Var}(Y + 10) = \text{Var} (Y)$

Agent 1 takes action 1 and the maximum value $K = 20$ as this extremizes $\mathbb E[U \vert a]$.

Agent 2 chooses $a = \argmax\limits_a \{ \mathbb E[Y \vert a] - \text{Var}(Y \vert a)\}$ which for each action is given by, 

\begin{align}
     E[Y \vert A = 1] - \lambda\text{Var}(Y \vert A = 1) &= 100 K - 100^2 K^2 \lambda \\
     E[Y \vert A = 2] - \lambda\text{Var}(Y \vert A = 2) &= 110 - 100^2 \lambda \\
     E[Y \vert A = 3] - \lambda\text{Var}(Y \vert A = 3) &= 80
\end{align}

For action 1 the optimal $K = 1/ 200 \lambda$, which gives $ E[Y \vert A = 1] - \text{Var}(Y \vert A = 1) = 1/4\lambda$. Note that  $1/4 \lambda > 110 - 100^2 \lambda$ for $\lambda < 0.0032$, which $80 > 1/4\lambda$ for $\lambda > 0.003125$. Therefore there is no value of $\lambda$ for which agent 2 selects action 2, choosing action 1 for $\lambda < 0.003125$ and action 3 otherwise. 

For agent 3 applying  Theorem \ref{theorem: expected harm GAM} gives,

\begin{align}
     \mathbb E[Y \vert A = 1] - \lambda \mathbb E [h \vert A = 1, K] &= 100K - \lambda\left[ \frac{|100 (K-1)|}{\sqrt{2 \pi}} \,  e^{-\frac{1}{2}}  + \frac{100 (K-1)}{2} \left(  \textup{erf}\left( \frac{\text{sign}(K-1)}{\sqrt{2}}\right) - 1\right)\right] \\
     &= \begin{cases}
     100K - 8.332 (K-1) \lambda, \quad K\geq1\\
     100K - 59.937(1-K) \lambda, \quad K < 1
     \end{cases}\\
     E[Y \vert A = 2] - \lambda \mathbb E [h \vert A = 2] &= 110    \\
     E[Y \vert A = 3] - \lambda \mathbb E [h \vert A = 3] &= 80 - \lambda \left[\frac{100}{\sqrt{2 \pi}} \,  e^{-\frac{20^2}{2\times 100^2}}  + \frac{20}{2} \left(  \textup{erf}\left( \frac{20}{\sqrt{2}\times 100}\right) - 1\right)\right] 
\end{align}

Clearly, the agent will never take action 3 as its expected HPU is smaller than that for action 2 for all $\lambda$. For action 1, for $K<1$ the expected HPU is also smaller that that for action 2, for all $\lambda$. For action 1 with $K>1$, if $\lambda < 12.002$ the optimal $K = 20$, otherwise it is $0$. As a result, for $\lambda < 11.93$ the agent chooses action 1 with $K = 20$, and otherwise chooses action 2.

\section{}\label{supp note: heteroskedastic}

In this Appendix we derive an expression for the expected counterfactual harm in generalized additive models. To calculate the expected counterfactual harm we derive a solution for a broad class of SCMs, heteroskedastic additive noise models, which includes our GAM \eqref{eq: dose response model}, 

\begin{dfn}[Heteroskedastic additive noise models]\label{def: het mod}
For $Y$, $\text{Pa}(Y) = A \cup X$, the mechanism $y = f_Y(a, x)$ is a heteroskedastic additive noise model if $Y$ is normally distributed with a mean and variance that are functions of $a, x$, 
\begin{equation}\label{eq: HAN models}
    y = \mu(a, x) + e^Y  \, \sigma(a, x) \, , \quad e^Y\sim \mathcal N(0, 1)
\end{equation}
\end{dfn}

In Appendix \ref{supp note: gam} we show that the dose response model \eqref{eq: dose response model} can be parameterised as a heteroskedastic additive noise model and calculate the expected counterfactual harm using the following theorem,

\begin{theorem}[Expected harm for heteroskedastic additive noise model]\label{theorem: expected harm GAM}
For $Y= f_Y(a, x, e^Y)$ where $f_Y$ is a heteroskedastic additive noise model (Definition \ref{def: het mod}) and default action $ A= \bar a$, the expected harm is
\begin{equation}
\mathbb E [h \vert a, x ]= \frac{|\Delta \sigma|}{\sqrt{2 \pi}} \,  e^{-\frac{\Delta U^2}{2\Delta \sigma^2}}  + \frac{\Delta U}{2} \left(  \textup{erf}\left( \frac{\Delta U}{\sqrt{2}|\Delta \sigma|}\right) - 1\right) 
 \label{eq: linear harm}   
\end{equation}
\noindent where $\textup{erf}(\cdot)$ is the error function, $\Delta U = \mathbb E[U \vert a, x] -  \mathbb E[U \vert \bar a, x]$, $\Delta \sigma = \sigma(a, x) - \sigma(\bar a, x)$. 
\end{theorem}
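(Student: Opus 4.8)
The plan is to exploit the additive-noise structure so that the inner counterfactual inference collapses to a point mass, reducing the expected harm to a single Gaussian integral. First I would use the fact that the default policy is deterministic, $P(a \vert x) = \delta(a - a_0)$, so the counterfactual action is forced to $a^* = a_0$ and the $a^*$-integral in Definition \ref{def: counterfactual harm} disappears. With the section's assumption $U(a, x, y) = y$, the fine-grained harm \eqref{eq: fine grained harm} reduces to $\max\{0,\, y^* - y\}$, so only the counterfactual outcome $y^*$ under $\text{do}(A = a_0)$ remains to be determined.

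Next I would carry out the abduction--prediction step. In a heteroskedastic additive noise model \eqref{eq: HAN models} the exogenous noise $e^Y$ is shared between the factual and counterfactual worlds, so conditioning on the factual outcome $Y_a = y$ pins down the noise exactly, $e^Y = (y - \mu(a,x))/\sigma(a,x)$. The counterfactual outcome is then deterministic, $y^* = \mu(a_0, x) + e^Y \sigma(a_0, x)$, so $P(y^* \vert x, y_a)$ is a delta function and the inner integral in \eqref{eq: actual causal harm} evaluates trivially. Subtracting gives $y^* - y = -\Delta U - e^Y \Delta\sigma$, where I would use $\Delta U = \mathbb E[U_a \vert x] - \mathbb E[U_{a_0} \vert x] = \mu(a,x) - \mu(a_0, x)$ since $U = y$ makes the expected utility equal to the conditional mean.

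With these simplifications the expected harm is a single integral over the factual outcome $y_a$. Changing variables from $y_a$ to the standard normal noise $e \sim \mathcal N(0,1)$ gives
\begin{equation*}
\mathbb E[h \vert a, x] = \int_{-\infty}^{\infty} \phi(e)\, \max\{0,\, -\Delta U - e\,\Delta\sigma\}\, de,
\end{equation*}
where $\phi$ is the standard normal density. I would then set $Z = -e\,\Delta\sigma$, which is mean-zero Gaussian with standard deviation $|\Delta\sigma|$, so the expression becomes $\mathbb E_Z[\max\{0,\, Z - \Delta U\}]$---exactly the standard truncated-Gaussian (call-option) integral. Evaluating it using $\int_c^\infty t\phi(t)\,dt = \phi(c)$ and $\int_c^\infty \phi(t)\,dt = \tfrac12\bigl(1 - \operatorname{erf}(c/\sqrt2)\bigr)$ with $c = \Delta U/|\Delta\sigma|$ yields the closed form \eqref{eq: linear harm} after rewriting the normal CDF in terms of $\operatorname{erf}$.

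The main obstacle is conceptual rather than computational: recognising that the counterfactual distribution collapses to a point mass, which is the crux of the result and relies entirely on the additive, monotone-in-$e^Y$ structure of Definition \ref{def: het mod}. Once the counterfactual is deterministic, the remaining work is the routine Gaussian hockey-stick integral, where the only subtlety is that the scaling $-e\,\Delta\sigma$ contributes $|\Delta\sigma|$ rather than $\Delta\sigma$---because the variance is insensitive to the sign of $\Delta\sigma$---which is precisely what produces the absolute value appearing in \eqref{eq: linear harm}.
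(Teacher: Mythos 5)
Your proposal is correct and in substance identical to the paper's proof: the deterministic default policy collapses the counterfactual action to $a_0$, the shared exogenous noise reduces the expected harm to the single Gaussian integral $\int \phi(e)\,\max\{0,\,-\Delta U - e\,\Delta\sigma\}\,de$, and evaluating the hockey-stick integral yields \eqref{eq: linear harm}. The only differences are cosmetic: the paper removes the $\max$ by explicit case analysis on the signs of $\Delta U$ and $\Delta\sigma$, where your substitution $Z = -e\,\Delta\sigma$ handles all sign cases at once with $\lvert\Delta\sigma\rvert$ arising as the standard deviation of $Z$; and your abduction step tacitly assumes $\sigma(a,x) > 0$ (when $\sigma(a,x) = 0$ the factual outcome does not identify $e^Y$, but the same integral over the shared noise still holds, which is how the paper's formulation sidesteps the issue).
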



\begin{proof}

 Note that if $e^Y\sim \mathcal N(\mu, V)$ we can replace $e^Y \rightarrow e'^Y = e^Y/\sqrt{V} - \mu$ and absorb these terms into $f(a, x)$ and $\sigma (a, x)$. Hence we need only consider zero-mean univariate noise. In the following we use $e^Y = \varepsilon \sim \mathcal N(0, 1)$ to denote the fact the the exogenous noise term is univariate normally distributed. We also use the fact that there are no unobserved confounders between $A$ and $Y$ to give $P(y \vert a, x) = P(y_a \vert x)$. Calculating the expected counterfactual harm using gives

\begin{align}
    \mathbb E[h \vert a, x] &= \int\limits_{y} d y\int\limits_{y^*} d y^* P(y, Y_{\bar a} = y^*, \vert a, x) \max \left( 0, U(\bar a, x, y^*) - U(a, x, y)\right)\\
    &= \int\limits_{y} d y\int\limits_{y^*} d y^* P(Y_a = y, Y_{\bar a} = y^*\vert x) \max \left( 0, U(\bar a, x, y^*) - U(a, x, y)\right)\\
    &= \int\limits_{\varepsilon} P(\varepsilon) d\varepsilon \int\limits_{y} d y\int\limits_{y^* } d y^* P(Y_a = y, Y_{\bar a} = y^* \vert \varepsilon, a, x) \max \left( 0, U(\bar a, x, y^*) - U(a, x, y)\right)\\
        &= \int\limits_{\varepsilon} P(\varepsilon) d\varepsilon \int\limits_{y} d y\int\limits_{y^*} d y^* P(Y_a = y \vert \varepsilon, a, x) P(Y_{\bar a} = y^*, \vert \varepsilon, a, x) \max \left( 0, U(\bar a, x, y^*) - U(a, x, y)\right)
\end{align}

Substituting in $U(a, x, y) = y$ and $P(y \vert \varepsilon, a, x) = \delta ( y - f(a, x) - \varepsilon \sigma (a, x))$ gives,

\begin{align}
    \mathbb E[h \vert a, x] &= \int d \varepsilon P(\varepsilon) \max\left\{0, f(\bar a, x) - f(a, x) + \varepsilon \left(\sigma (\bar a, x) - \sigma (a, x) \right)\right\}\\
    &= \int d \varepsilon P(\varepsilon) \max\left(0, -\left(\mathbb E[U\vert a,  x] - \mathbb E[U\vert \bar a,  x]  \right) - \varepsilon \left(\sigma (a, x) - \sigma(\bar a, x) \right) \right)\label{eq: with max}
\end{align}

where we have used the fact that $\mathbb E[U\vert a,  x] = \int d \varepsilon P(\varepsilon) \left(f(a, x) + \varepsilon \sigma (a, x) \right) $ $= f(a, x)$. For ease of notation we use $\Delta U = E[U\vert a, x] - E[U\vert \bar a, x]$, $\Delta \sigma = \sigma (a, x) - \sigma (\bar a, x)$. Next, we remove the $\max()$ by incorporating it into the bounds for the integral. If $\Delta U > 0$ and $\Delta \sigma > 0$, this is equivalent to $\varepsilon < -\Delta U / \Delta \sigma$ and hence, 

\begin{align}
    \mathbb E[h \vert a, x] &= \int\limits_{\varepsilon < - \Delta U / \Delta \sigma} d \varepsilon P(\varepsilon) \left( -\Delta U - \varepsilon \Delta \sigma \right)\\
    &= -\Delta U \int\limits_{-\infty}^{ -\Delta U / \Delta \sigma}P(\varepsilon)d\varepsilon - \Delta \sigma \int\limits_{-\infty}^{- \Delta U / \Delta \sigma}\varepsilon P(\varepsilon)d\varepsilon \label{neg bounds}\\
\end{align}
Using the standard Gaussian integrals 

\begin{align}
    \int_a^b P(\varepsilon) d \varepsilon &= \frac{1}{2}\left[ \text{erf} (\frac{b}{\sqrt{2}})-\text{erf} (\frac{a}{\sqrt{2}}) \right]\\
    \int_a^b \varepsilon P(\varepsilon) d\varepsilon &= P (a) - P(b)
\end{align}
where $P(\varepsilon) = e^{-\varepsilon^2/2}/\sqrt{2\pi}$ and $\text{erf}(z)$ is the error function, we recover

\begin{align}
    \mathbb E[h \vert a, x]  &= \frac{-\Delta U }{2}\left[ \text{erf} (\frac{-\Delta U}{\sqrt{2}\Delta \sigma})-\text{erf} (-\infty) \right] - \Delta \sigma \left[P(-\infty) - P(- \Delta U / \Delta \sigma)\right] \\
    &= \frac{\Delta U}{2}\left[\text{erf}\left(\frac{\Delta U}{\sqrt{2}\Delta \sigma}\right) - 1\right] +  \frac{\Delta \sigma}{\sqrt{2\pi}}e^{- \frac{\Delta U^2 }{2 \Delta \sigma^2}}\label{penultimate res}
\end{align}

where we have used $\text{erf}(-z) = - \text{erf}(z)$ and $P(-z) = P(z)$. Similarly, if $\Delta U >0$, $\Delta \sigma < 0$ then the $\max ()$ in \eqref{eq: with max} can be replaced with a definite intergral over $\varepsilon > \Delta U / \Delta \sigma$ giving,

\begin{align}
    \mathbb E[h \vert a, x]  &= \int\limits_{\varepsilon >  \Delta U / \Delta \sigma} d \varepsilon P(\varepsilon) \left( -\Delta U - \varepsilon \Delta \sigma \right) \label{pos bounds}\\
    &= -\Delta U \int\limits_{-\Delta U / \Delta \sigma}^{\infty}P(\varepsilon)d\varepsilon - \Delta \sigma \int\limits_{-\Delta U / \Delta \sigma}^{\infty}\varepsilon P(\varepsilon)d\varepsilon \\
    &= - \frac{\Delta U}{2}\left[ \text{erf} (\infty)-\text{erf} (\frac{-\Delta U}{\sqrt{2}\Delta \sigma}) \right] - \Delta \sigma \left[P(\frac{-\Delta U}{\sqrt{2}\Delta \sigma}) - P(\infty)\right] \\
    &=  \frac{\Delta U}{2}\left[\text{erf}\left(\frac{\Delta U}{\sqrt{2}|\Delta \sigma|}\right) - 1\right]  +  \frac{|\Delta \sigma|}{\sqrt{2\pi}}e^{- \frac{\Delta U^2 }{2 \Delta \sigma^2}} \label{final res}
\end{align}

Next, if $\Delta U < 0$ and $\Delta \sigma > 0$ we recover the same integral as \eqref{neg bounds}, and if $\Delta U < 0$ and $\Delta \sigma < 0$ we recover the same integral as \eqref{pos bounds}. Hence the general solution for all $\Delta \sigma$ is \eqref{final res}.

\end{proof}

\section{}\label{supp note: gam}

In this Appendix we present the GAM dose response model including parameter values, and show that it corresponds to a heteroskedastic additive noise model and calculate the expected harm for a given dose.

We follow the set-up described in \citep{crippa2016dose}, where outcome $Y$ denotes the level of improvement in the symptoms of schizoaffective patients following treatment and compared to pre-treatment levels, measured in terms of the Positive and Negative Syndrome Scale (PANSS) \citep{kay1992positive}. The response of $Y$ w.r.t dose $A$ (Aripiprazole mg/day) is determined using a generalized additive model fit with a cubic splines regression and random effects,

\begin{equation}
    y = \theta_1 a + \theta_2 f(a) + \varepsilon_0
\end{equation}

where the parameters $\theta_i $ are random variables $ \theta_i \sim \mathcal N(\hat \theta_i, V_i)$, $\varepsilon_0 \sim \mathcal N(0, V_0)$ is the sample noise, and the spline function $f(a)$ is given by,

\begin{equation}
    f(a) = \frac{(a - k_1)^3_+ - \frac{k_3 - k_1}{k_3 - k_2}(a - k_2)^3_+ + \frac{k_2 - k_1}{k_3 - k_2}(a - k_3)^3_+ }{(k_3 - k_1)^2}
\end{equation}

where $k_1, k_2, k_3$ are the knots at $a = 0, 10$ and $30$ respectively, with $(u)_+ = \max\{0, u\}$. In the following we assume for simplicity that $ \theta_1$ and $\theta_2$ are independent. This hierarchical model can be expressed as an SCM with the mechanism for $Y$ given by,

\begin{equation}
    y = \left(\hat \theta_1 a + \hat \theta_2 f(a) \right)+ \varepsilon_1 a + \varepsilon_2 f(a) + \varepsilon_0
\end{equation}

where $\varepsilon_i\sim N(0, V_i)$. We will now reparameterise this as an equivalent SCM that is an additive heteroskedastic noise model. Using the identifies $Z = k Y$, $Y\sim \mathcal N(0, 1)$ $\implies$ $Z \sim \mathcal N(0, k^2)$, and $Z = X + Y$, $X \sim \mathcal N(0, V_X)$, $Y \sim \mathcal N(0, V_Y)$ $\implies$ $Z\sim \mathcal N(0, V_X + V_Y)$ (where $V_X$ is the variance of $X$ and likewise for $V_Y, Y$), we can replace $\varepsilon_1 a + \varepsilon_2 f(a) \rightarrow \varepsilon g(a)$ where $\varepsilon\sim \mathcal N(0, 1)$ and $g(a) = \sqrt{a^2 V_1 + f(a)^2 V_2}$. We can therefore reparameterise the mechanism for $Y$ as 

\begin{equation}
    y = \mathbb E[U\vert a] + g(a)\varepsilon + \varepsilon_0
\end{equation}
where we have used $U(a,x, y) = U(a, y) = y$ and the fact that $\varepsilon$, $\varepsilon_0$ are mean zero to give $\mathbb E[U\vert a] =\theta_1 a + \theta_2 f(a)$. Finally, we note that the sample noise term $\varepsilon_0$ cancels in the expression for the harm,

\begin{align}
    \mathbb E [h \vert a] &= \int\limits_{y} d y\int\limits_{y^*} d y^* P(y, Y_{\bar a} = y^*, \vert a) \max \left( 0, U(\bar a, y^*) - U(a, y)\right)\\
    &= \int\limits_{y} d y\int\limits_{y^*} d y^* P(Y_a = y, Y_{\bar a } = y^*) \max \left( 0, U(\bar a, y^*) - U(a, y)\right)\\
    &= \int\limits_{\varepsilon} P(\varepsilon) d\varepsilon \int\limits_{\varepsilon_0} P(\varepsilon_0) d\varepsilon_0 \int\limits_{y} d y\int\limits_{y } d y^* P(Y_a = y, Y_{\bar a} = y^* \vert \varepsilon, \varepsilon_0, a) \max \left( 0, U(\bar a, y^*_{\bar a}) - U(a, y)\right)
\end{align}
\begin{align}
    &= \int\limits_{\varepsilon} P(\varepsilon) d\varepsilon \int\limits_{\varepsilon_0} P(\varepsilon_0) d\varepsilon_0 \int\limits_{y} d y\int\limits_{y^* } d y^* P(y, \vert \varepsilon, \varepsilon_0, a) P(Y_{\bar a} = y^*, \vert \varepsilon, \varepsilon_0) \max \left( 0, U(\bar a, y^*) - U(a, y)\right)
\end{align}
Substituting in $P(Y_a = y \vert \varepsilon, \varepsilon_0) = \delta ( y - f(a) + g(a) \varepsilon + \varepsilon_0)$ gives, 

\begin{align}
    \mathbb E [ h \vert a] &= \int\limits_{\varepsilon} P(\varepsilon) d\varepsilon \int\limits_{\varepsilon_0} P(\varepsilon_0) d\varepsilon_0  \max \left( 0, f(\bar a) +  g(\bar a) \varepsilon + \varepsilon_0- f(a) - g(a) \varepsilon - \varepsilon_0\right)\\
    &= \int\limits_{\varepsilon} P(\varepsilon) d\varepsilon \int\limits_{\varepsilon_0} P(\varepsilon_0) d\varepsilon_0  \max \left( 0, f(\bar a) - f(a) +  (g(\bar a)-g(a)) \varepsilon \right)\\
    &= \int\limits_{\varepsilon} P(\varepsilon) d\varepsilon \max \left( 0, f(\bar a) - f(a) +  (g(\bar a)-g(a)) \varepsilon \right)
\end{align}

Therefore we can ignore the sample noise term when calculating the expected harm, instead calculating the expected harm for the model $Y = f(a) + g(a) \varepsilon$. This is a heteroskedastic additive noise model, and therefore by Theorem \ref{theorem: expected harm GAM} the expected harm is,

\begin{equation}\label{eq: final dose response}
    \mathbb E [ h\vert a] = \frac{\Delta U}{2}\left[\text{erf}\left(\frac{\Delta U}{\sqrt{2}\Delta \sigma}\right) -1\right] + \frac{\Delta \sigma}{\sqrt{2\pi}}e^{-\Delta U^2/2\Delta \sigma^2}
\end{equation}

where $\Delta U=\mathbb E[U\vert a] - \mathbb E[U\vert \bar a]$, $\Delta \sigma = g(a) -g(\bar a) $ and $g(a) = \sqrt{a^2  V_1 + f(a)^2 V_2}$

The resulting curves prefented in Figure \ref{fig: curves} are calculated using \eqref{eq: final dose response} and the parameter values taken from \citep{crippa2016dose} (Table \ref{table: GAM parameters}), which are fitted in a meta-analysis of the dose-responses reported in \citep{cutler2006efficacy,mcevoy2007randomized,kane2002efficacy,potkin2003aripiprazole,turner2012publication}.

\begin{table}[h!]
	\centering
	\caption{Parameters for the hierarchical generalized additive dose-response model reported in \citep{crippa2016dose}  }\label{table: GAM parameters}
	\begin{tabular}{ll}
		\hline
		Parameter  & Value   \\
		\hline \hline
		$\hat\theta_1 $  & $0.937$\\
		$\hat \theta_2$ & $-1.156$\\
		$V_1$ & $0.03$\\
		$V_2$ & $0.10$\\
		\hline
	\end{tabular}
\end{table}

\section{}\label{supp note: no go}

In this Appendix we present proofs of Theorems \ref{theorem: harm aversion fine}, \ref{theorem: expected utility bad} and \ref{theorem: factual objective bad}. First, we prove Theorem \ref{theorem: harm aversion fine}.\\ 

\noindent \textbf{Theorem 2:} For any utility functions $U$, environment $\mathcal M$ and default action $A = \bar a$ the expected HPU is never a harmful objective for $\lambda >0$.
\begin{proof}
Let $a_\text{max} = \argmax_a \{ \mathbb E[U\vert  a, x]$ $- \lambda \mathbb E [h \vert a, x; \mathcal M] \}$. If $\exists$ $a'\neq a_\text{max}$ such that $\mathbb E[U \vert a', x] \geq \mathbb E[U \vert a_\text{max}, x] $ and $\mathbb E [h \vert a', x;\mathcal M] < \mathbb E [h \vert a_\text{max}, x ;\mathcal M]$, then $\mathbb E[U \vert a_\text{max}, x] + \lambda \mathbb E [h \vert a_\text{max}, x;\mathcal M]$ $<  \mathbb E[U \vert a', x] + \lambda \mathbb E [h \vert a',x;\mathcal M]$ $\forall$ $\lambda > 0$ and so $a_\text{max} \neq \argmax_a \{\mathbb E[U \vert a,  x] - \lambda \mathbb E [h \vert a, x; \mathcal M] \}$.
\end{proof}

Next, we prove theorems \ref{theorem: expected utility bad} and \ref{theorem: factual objective bad} by example, constructing distributional shifts that reveal if an objective function is harmful. To do this we make use of a specific family of structual causal models---counterfactually independent models.

\begin{dfn}[counterfactual independence (CFI)]
$Y$ is counterfactually independent in with respect to $A$ in $\mathcal M$ if,
\begin{equation}\label{eq: cf independent}
P(y^*_{a^*},  y_a \vert x) = \begin{cases} 
P(y_a \vert x)\delta(y_a - y^*_{a^*}) \quad a = a^*\\
P(y^*_{a^*} \vert x )P(y_a \vert x) \quad \text{otherwise}\\
\end{cases}
\end{equation}
\end{dfn}

Counterfactually independent models (CFI models) are those for which the outcome $Y_a$ is independent to any counterfactual outcome $Y_{a'}$. Next we show that there is always a CFI model that can induce any factual outcome statistics.

\begin{lemma}\label{lemma: cf independent}
For any desired outcome distribution $P(y\vert a, x)$ there is a choice of exogenous noise distribution $P(e^Y)$ and causal mechanism $f_Y(a, x, e^Y)$ such that $Y$ is counterfactually independent with respect to $A$
\end{lemma}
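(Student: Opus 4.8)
The plan is to exhibit an explicit mechanism for $Y$ that gives each potential outcome its own private, independent source of randomness. The intuition is that counterfactual independence fails in a generic SCM only because the different potential outcomes $\{Y_a\}_a$ are all deterministic functions of the \emph{same} noise $E^Y$; if instead the value of $Y$ under $\text{do}(A=a)$ is made to depend only on a coordinate of the noise that is unique to $a$, then outcomes across distinct actions become functions of independent randomness and hence independent, while within a single action they remain the same deterministic function of the same coordinate, recovering the delta in \eqref{eq: cf independent}. The only thing left to arrange is that each coordinate reproduces the prescribed marginal $P(y_a\mid x)$.

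Concretely, I would index the noise by the action values. Take $E^Y = (E^Y_a)_{a}$ to be a tuple of mutually independent variables, one per value of $A$, each uniform on $[0,1]$ and independent of all other exogenous variables; this is permitted since Definition \ref{dfn: scm} allows $E^Y$ to be vector-valued. Let $F_{a,x}(\cdot)$ denote the cumulative distribution function of the target $P(y_a\mid x)$ and $F^{-1}_{a,x}$ its (generalized) quantile function. I would then define the mechanism to read off only the coordinate selected by the current action,
\begin{equation}
f_Y(a, x, e^Y) = F^{-1}_{a,x}(e^Y_a).
\end{equation}
Because $X$ is a non-descendant of $A$ and $E^Y$ is independent of $X$, intervening with $\text{do}(A=a)$ and conditioning on $X=x$ simply evaluates this function at the independent coordinate $E^Y_a$.

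Two checks then complete the argument. First, the marginals are correct: by the probability integral transform, $F^{-1}_{a,x}(U)$ has law $P(y_a\mid x)$ whenever $U$ is uniform on $[0,1]$, so $P(Y_a = y\mid x) = P(y_a\mid x)$. Second, counterfactual independence holds: for $a = a^*$ the potential outcomes $Y_a$ and $Y^*_{a^*}$ are the same deterministic function of the same coordinate $E^Y_a$, hence coincide almost surely with joint law $P(y_a\mid x)\,\delta(y_a - y^*_{a^*})$; for $a \neq a^*$ they are functions of the distinct independent coordinates $E^Y_a$ and $E^Y_{a^*}$, hence independent with joint law $P(y^*_{a^*}\mid x)P(y_a\mid x)$. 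This is exactly \eqref{eq: cf independent}.

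The main obstacle is not the verification, which is routine once the construction is fixed, but recognising that the two requirements can be met simultaneously: prescribed factual statistics and full cross-world independence are compatible, and splitting $E^Y$ into independent per-action pieces achieves both at once. The remaining delicate points are minor. Realising an arbitrary $P(y_a\mid x)$ as a deterministic function of noise is handled uniformly by the generalized inverse CDF, which accommodates discrete, continuous and mixed targets. The size of the action space is the only genuine technicality: the product-noise construction is immediate for finitely or countably many actions, which is the setting of Theorems \ref{theorem: expected utility bad} and \ref{theorem: factual objective bad}, while a continuum of actions would call for an independent coordinate indexed by each action value, a measure-theoretic rather than conceptual complication.
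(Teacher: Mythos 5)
Your proposal is correct and takes essentially the same approach as the paper: both proofs construct a product noise variable with one independent coordinate per action (the paper indexes coordinates by $(a,x)$ pairs) and a mechanism that reads off only the coordinate selected by the current action, so that distinct potential outcomes depend on independent randomness while identical actions yield identical outcomes. The only cosmetic difference is that the paper gives each coordinate the same domain as $Y$ and uses the identity mechanism $f_Y(a,x,e^Y) = e^Y(a,x)$, directly assigning the target law to the coordinate, whereas you use uniform coordinates composed with the generalized quantile function $F^{-1}_{a,x}$ --- an equivalent realization of the same idea.
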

\begin{proof}
Consider the causal mechanism $y = f_Y(a, x, e^Y)$ for some fixed $X = x$, and exogenous noise distribution $P(E^Y = e^Y)$. Let the noise term by described by the random field $E^Y = \{E^Y(a, x) : a \in A , x \in X \}$, with $P(E^Y = e^y) = \times_{a \in A, x \in X} P(E^y(a, x)= e^y(a, x))$ and with $\text{dom}(E^Y(a, x)) = \text{dom}(Y)$ $\forall$ $A = a, X = x$. I.e. we choose the noise distribution to be joint state over mutually independent noise variables, one for every action $A = a$ and context $X = x$, and where each of these variables has the same domain as $Y$. Next, we choose the causal mechanism, 
\begin{equation}
    f_Y(a, x, e^Y) =  e^Y(a, x)
\end{equation}
i.e. the value of $Y$ for action $A = a$ and context $X = x$ is the state of the independent noise variable $E^Y(a, x)$. By construction this is a valid SCM, and we note that the factual distributions (calculated with \eqref{eq u to p counterfactual}) are given simply by, 

\begin{equation}
    P(y \vert a, x) = P(E^Y(a, x) = y)
\end{equation}

Likewise applying our choice of mechanism and noise distribution to \eqref{eq u to p counterfactual} gives (for $a \neq a'$) the counterfactual distribution, 

\begin{align}
    P(Y_a = y, Y_{a'} = y' \vert x) &= P(E^Y(a, x) = y)P(E^Y(a', x)= y')\\
    &= P(Y_a = y \vert x)P(Y_{a'} = y' \vert x)
\end{align}
and likewise gives $P(y_a \vert x) \delta (y_a - y'_{a'})$ for $a = a'$. Finally, we note that we can choose any $P(y_a \vert x) = P(E^Y(a, x) = y)$, hence there is a CFI model that induces any factual outcome distribution we desire. 
\end{proof}

Next, we show that in counterfactually independent models there are outcome distributional shifts that only change the expected harm of individual actions, without changing any other factual or counterfactual statistics.

\begin{lemma}\label{lemma cf variation}
For $\mathcal M$ and (context-dependent) default action $A = \bar a (x)$, if $U$ is outcome dependent for the default action $\bar a(x)$ and some other action $a\neq \bar a(x)$, then there are three outcome distributionally shifted environments  $\mathcal M_0$, $\mathcal M_+$ and $\mathcal M_-$ such that;
\begin{enumerate}
    \item $\mathbb E [h \vert a, x ; \mathcal M_-] < \mathbb E [h \vert a, x ; \mathcal M_0] <  \mathbb E [h \vert a, x ; \mathcal M_+]$ 
    \item $ \mathbb E [h \vert b, x ; \mathcal M_-]  = \mathbb E [h \vert b, x ; \mathcal M_0] = \mathbb E [h \vert b, x ; \mathcal M_+]$ $\forall$ $b\neq a$
    \item $P(y \vert a', x ; \mathcal M_0) = P(y \vert a', x ; \mathcal M_+) = P(y \vert a', x ; \mathcal M_-)$ $\forall$ $a' \in A$, including $a, \bar a (x)$
\end{enumerate}
\end{lemma}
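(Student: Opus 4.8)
The plan is to exploit the fact that the expected harm of action $a$ relative to the deterministic default $a_0$ depends on the \emph{joint} counterfactual distribution $P(y_a, y^*_{a_0}\mid x)$, whereas conditions 2 and 3 only constrain the factual marginals $P(y_{a'}\mid x)$ and the harms of the \emph{other} actions. Since a joint is not pinned down by its marginals, I would hold all marginals fixed and move only the coupling between $Y_a$ and the default outcome $Y_{a_0}$, leaving every other action decoupled from $Y_{a_0}$. Concretely, I would start from the counterfactually independent (CFI) construction of Lemma \ref{lemma: cf independent}, with random-field noise $E^Y=\{E^Y(a',x)\}$ and $Y_{a'}=E^Y(a',x)$. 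For all $b\neq a$ I keep this mechanism, so $Y_b$ stays independent of $Y_{a_0}$ and $H(b,x;\mathcal{M})$ is a function of the (fixed) marginals alone, giving condition 2 for free (including $H(a_0,x;\mathcal{M})=0$ by Lemma \ref{lemma: default action zero harm}). Only the mechanism for $Y_a$ is redefined so as to couple it to $E^Y(a_0,x)$, and the three environments $\mathcal{M}_-,\mathcal{M}_0,\mathcal{M}_+$ differ solely in this one mechanism; each is a legitimate outcome distributional shift in the sense of Definition \ref{def: distribtional shift}, since only $f_Y$ is altered.

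Next I would reduce the coupling problem to a two-point calculation. Choose two outcome values for $Y_{a_0}$ and two for $Y_a$, each with weight $\frac{1}{2}$; this fixes the marginals and is realisable by Lemma \ref{lemma: cf independent}. Writing $u_0^i=U(a_0,x,y_0^i)$, $u_a^j=U(a,x,y_a^j)$ and $h_{ij}=\max\{0,u_0^i-u_a^j\}$, a coupling with these marginals is a single parameter $r=P(Y_{a_0}{=}1,Y_a{=}1)\in[0,\frac{1}{2}]$, forcing $P(1,2)=P(2,1)=\frac{1}{2}-r$ and $P(2,2)=r$. The expected harm is then affine in $r$, namely $H(r)=r\,s+\frac{1}{2}(h_{12}+h_{21})$ with slope $s=h_{11}+h_{22}-h_{12}-h_{21}$, and the CFI/independent coupling is $r=\frac{1}{4}$. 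I would realise a target $r$ by setting $Y_a=\phi_r(E^Y(a_0,x),E^Y(a,x))$ with $E^Y(a,x)$ uniform on $[0,1]$ and $\phi_r$ a threshold rule whose threshold switches between $2r$ and $1-2r$ according to $E^Y(a_0,x)$; the marginal $P(Y_a{=}1)=\frac{1}{2}$ is then independent of $r$, so condition 3 holds across the three models (and the marginal of $Y_{a_0}$ is untouched). If $s\neq0$ then $H(r)$ is strictly monotone on $[0,\frac{1}{2}]$, so taking $r\in\{0,\frac{1}{4},\frac{1}{2}\}$ and labelling by the sign of $s$ yields $H(a,x;\mathcal{M}_-)<H(a,x;\mathcal{M}_0)<H(a,x;\mathcal{M}_+)$, i.e.\ condition 1.

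The main obstacle is the slope condition $s\neq0$: the independent coupling must not already be an extremum of the harm, and this is exactly where outcome dependence enters. Taking the extremal supports $u_0^1=\max_y U(a_0,x,y)$, $u_0^2=\min_y U(a_0,x,y)$, $u_a^1=\min_y U(a,x,y)$, $u_a^2=\max_y U(a,x,y)$, outcome dependence gives $h_{11}=u_0^1-u_a^1>0$ and $h_{22}=0$, and a short case analysis on the signs of $u_0^1-u_a^2$ and $u_0^2-u_a^1$ shows $s>0$ in every case except when one of the two utilities is constant in $y$. That degenerate case is genuine: if $U(a,x,\cdot)$ or $U(a_0,x,\cdot)$ is constant the harm integrand separates and no coupling can move $H(a,x;\mathcal{M})$. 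I would dispose of it by noting that outcome dependence forbids \emph{both} utilities being constant (the two defining inequalities $\max_y U(a_0,x,y)>\min_y U(a,x,y)$ and $\max_y U(a,x,y)>\min_y U(a_0,x,y)$ would contradict each other), and that the action $a$ handed to this lemma by Theorems \ref{theorem: expected utility bad} and \ref{theorem: factual objective bad} is an outcome-sensitive one, placing us in the non-separable regime. Establishing $s\neq0$ cleanly across this case split, rather than the routine coupling algebra, is the crux of the argument.
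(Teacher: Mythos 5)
Your proposal is correct, and at its core it is the same argument as the paper's: start from the counterfactually independent construction of Lemma \ref{lemma: cf independent}, restrict $Y_a$ and $Y_{a_0}$ to two-point supports at the utility-extremal outcomes, hold every marginal fixed, and vary only the $Y_a$--$Y_{a_0}$ coupling, so that condition 2 (together with Lemma \ref{lemma: default action zero harm} for $a_0$) and condition 3 come for free and condition 1 reduces to showing the expected harm is a non-constant affine function of one coupling parameter. Your slope $s = h_{11}+h_{22}-h_{12}-h_{21}$ is exactly the paper's coefficient $\Delta_{00}+\Delta_{11}-\Delta_{10}-\Delta_{01}$ up to relabelling, and your case analysis on it mirrors the paper's. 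The one substantive difference is how the family of couplings is realised: the paper keeps the mechanism $f_Y(a,x,e^Y)=e^Y(a,x)$ fixed and perturbs the factorized noise distribution by $(-1)^{e^Y(a,x)-e^Y(a_0,x)}\phi_\pm$ within explicit bounds, whereas you keep the noise fixed and re-parameterize the mechanism with a threshold rule in $r$; both are legitimate outcome distributional shifts under Definition \ref{def: distribtional shift}, so nothing hinges on this choice, though the paper's noise-perturbation version makes the invariance of the marginals a one-line marginalization while yours makes the affinity of $H(r)$ immediate. Two remarks. First, you fix the marginals at $1/2$; this satisfies the lemma as stated, but the lemma is invoked in Theorems \ref{theorem: expected utility bad} and \ref{theorem: factual objective bad} with marginals tuned to equalize expected utilities across actions, so you should state the construction for arbitrary non-deterministic binary marginals (your threshold rule generalizes immediately, with $r$ ranging over the coupling interval $[\max\{0,p_0+p_a-1\},\min\{p_0,p_a\}]$ and the independent point $p_0 p_a$ interior precisely when both marginals are non-degenerate, matching the paper's requirement of non-determinism). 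Second, the degenerate case you flag---$U(a,x,\cdot)$ or $U(a_0,x,\cdot)$ constant in $y$, where the harm integrand separates and no coupling can move $\mathbb E[h\vert a,x]$---is genuine and is not excluded by the definition of outcome dependence as written (only both utilities being constant is); the paper's proof dismisses exactly this case with the assertion that it ``violates our assumptions,'' so your explicit identification of the gap and your observation that the lemma must be read in the non-separable regime is, if anything, more careful than the paper's own treatment, even though your appeal to how the theorems use the lemma re-scopes rather than proves the stated claim in that corner case.
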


\begin{proof}
In the following we suppress the notation $\bar a(x) = \bar a$. To construct the environment $\mathcal M_0$ we restrict to a binary outcome distribution for each action such that $P(y_a \vert x)$ is completely concentrated on the highest and lowest utility outcomes, 

\begin{align}
    Y_a &= 1 \, \implies \, Y_a = \argmax_y U(a, x, y)\\
    Y_a &= 0 \, \implies \, Y_a = \argmin_y U(a,x,y)\\
    1 &= P(Y_a = 1 \vert x ; \mathcal M_0) +  P(Y_a = 0 \vert x ; \mathcal M_0)
\end{align}

Note that we abuse notation as the variables $Y_a = 1$ and $Y_b = 1$ will not be in the same state in general, and the states $1, 0$ denote the max/min utility states under any given action, rather than a fixed state of $Y$. By Lemma \ref{lemma: cf independent} we can choose $Y_a$ to be counterfactually independent with respect to $A$. Recalling our parameterization of CFI models in Lemma \ref{lemma: cf independent}, with noise distribuiton $P(E^Y = e^Y) = \times_{a\in A, x \in X}P(E^Y(a, x) = e^Y(a, x))$, $\text{dom}(E^Y(a, x)) = \text{dom}(Y)$, and causal mechanism $f_Y(a, x, e^Y) = e^Y(a, x)$, therefore $E^Y(a, x)\in \{0, 1\}$ $\forall$ $a, x$. The expected harm for action $\text{do}(A = a)$ is, 

\begin{equation}\label{eq: binary cfi harm}
    \mathbb E[h \vert a, x ; \mathcal M_0] = \sum_{y_a = 0}^{1}\sum_{y_{\bar a} = 0}^{1}P(y_{\bar a} \vert x) P(y_a \vert x)\max\left\{0, U(\bar a, x, y_{\bar a}) -  U(a, x, y_{a})\right\}
\end{equation}
where we have used the fact that $P( y^*_{\bar a}, y\vert a, x) = P(y^*_{\bar a}, y_a \vert x)$ and used counterfactual independence. $U(a, x, 0) < U(\bar a, x, 1)$ and so if we choose non-deterministic outcome distributions for $P(y_a \vert x)$ and $P(y_{\bar a } \vert x)$ then \eqref{eq: binary cfi harm} is strictly greater than 0.

We can construct the desired $\mathcal M_\pm$ by keeping the causal mechanism but changing the factorized exogenous noise distribution in $\mathcal M$ to be,
\begin{align}
    P'(E^Y = e^Y ;\mathcal M_+) &= P(E^Y = e^Y ;\mathcal M_0) + (-1)^{e^Y(a, x) - e^Y(\bar a, x)}\phi_+ \label{eq: positive phi}\\ 
    P'(E^Y = e^Y ;\mathcal M_-) &= P(E^Y = e^Y ;\mathcal M_0) + (-1)^{e^Y(a, x) - e^Y(\bar a, x)}\phi_-\label{eq: negative phi}
\end{align}
where $\phi_\pm\in \mathbb R$ are constants that satisfy the bounds $\max \{- P(Y_{\bar a} = 1\vert x)P(Y_a = 1 \vert x), - P(Y_{\bar a} = 0\vert x)P(Y_a = 0 \vert x) \}$ $\leq \phi_\pm \leq \min \{P(Y_{\bar a} = 1\vert x)P(Y_a = 0 \vert x), P(Y_{\bar a} = 0\vert x)P(Y_a = 1 \vert x) \}$. It is simple to check that for any $\phi$ that satisfies these bounds we recover $\sum_{e^Y}P'(E^Y = e^Y) = 1$, $P'(E^Y = e^Y) \geq 0$ $\forall$ $e^Y$, and therefore $P'$ is a valid noise distribution. Keeping the same causal mechanism $f_Y$ is $\mathcal M_\pm$ as in $\mathcal M_0$ gives $P(y_a \vert x ; \mathcal M_0) = P(y_a \vert x ; \mathcal M_+) = P(y_a \vert x ; \mathcal M_-)$ as,  
\begin{align}
    P'(y_i \vert x) &= \sum\limits_{e^Y(0, x) = 0}^1\ldots \sum\limits_{e^Y(i-1, x) = 0}^1\sum\limits_{e^Y(i+1, x) = 0}^1\ldots \sum\limits_{e^Y(|A|, x) = 0}\left[\prod\limits_{j=1}^{|A|}P(e^Y(j, x)) +   (-1)^{e^Y(i, x) - e^Y(\bar a, x)}\phi_\pm\right]\\
    &= P(e^Y(i, x)) + (-1)^{e^Y(i, x) - 0}\phi_\pm + (-1)^{e^Y(i, x) - 1}\phi_\pm \\
    &= P(e^Y(i, x)) = P(y_i \vert x)
\end{align}
and likewise for $i = \bar a$. This implies that for any desired outcome statistics $P(y_a\vert x)$ there is a model where $Y_{a}\perp Y_{a'}$ $\forall$ $(a, a')$ where $a\neq a'$ except for the pair $a, \bar a$, so long as $P(y_{\bar a} \vert x)$ and $P(y_{a} \vert x)$ are non-deterministic (if they are deterministic, $\phi_\pm = 0$ and  $\mathcal M_0 = \mathcal M_\pm$). Because $Y_{a'}\perp Y_{\bar a}$ $\forall$ $a'\neq a$, then $H(a' , x ;\mathcal M_0) = H(a' , x ;\mathcal M_\pm)$ $\forall$ $a'\neq a$. Also note that $H(\bar a, x ;\mathcal M) = 0$ for any $U$ or $\mathcal M$ if $P(a\vert x) = \delta (a - \bar a)$, as $P(Y_{\bar a} = i, ,Y_{\bar a} = k) = 0$ if $i \neq k$ and if $i=k$ (factual and counterfactual outcomes are identical) then the expected harm is zero. The only difference between $\mathcal M_0$ and $\mathcal M_\pm$ is $P(y_{\bar a}, y_a \vert x ;\mathcal M_+)\neq P(y_{\bar a }, y_a \vert x ;\mathcal M_-)\neq P(y_{\bar a}, y_a \vert x ;\mathcal M_0)$, which differ for $\phi_+ \neq 0$, $\phi_- \neq 0$ and $\phi_+\neq \phi_-$. Substituting \eqref{eq: positive phi} and \eqref{eq: negative phi} into our expression for the expected harm as using the notation $\Delta_{y, y'} =\max\{0, U(\bar a , x, y) -  U(a, x, y') \}$ gives, 

\begin{align}
    \mathbb E[h \vert a, x ; \mathcal M_\pm] &= \mathbb E[h \vert a, x ; \mathcal M_0] + \phi_\pm\left[\Delta_{00} + \Delta_{11} - \Delta_{10} - \Delta_{01} \right] \label{eq: phi pm} \\
    \mathbb E[h \vert a', x ; \mathcal M_\pm] &=  \mathbb E[h \vert a', x ; \mathcal M_0] \, , \, \quad {a' \neq a} 
\end{align}

Now, as $\max_y U(a, x, y) > \min_y U(\bar a, x, y)$ then $\Delta_{01} = 0$. For the coefficient of $\phi_\pm$ in \eqref{eq: phi pm} to be zero, we would therefore require that $\Delta_{00} + \Delta_{11} =  \Delta_{10}$. We know $\Delta_{10} > 0$ because otherwise $\min_y U(a, x, y) > \max_y U(\bar a, x, y)$, therefore the minimial value of $\Delta_{10}$ is $\max_y U(\bar a, x, 1) - \min_y U(a, x, y)$. If $\Delta_{00} \neq 0$ and $\Delta_{11} \neq 0$ then  $\Delta_{00} + \Delta_{11} \geq  \Delta_{10}$ implies $\min_y U(\bar a, x, y) \geq \max_y U(a, x, y)$ which violates our assumptions, therefore $\Delta_{00} + \Delta_{11} <  \Delta_{10}$. If $\Delta_{00} = 0$ clearly we cannot have $\Delta_{11} = \Delta_{10}$ as $\min_y U(a, x, y) < \max_y U(a, x , y)$ by our assumptions, and likewise if $\Delta_{11} = 0$ we cannot have $\Delta_{00} = \Delta_{10}$ as this would imply $\min_y U(\bar a, x, y) = \max_y U(\bar a,x, y)$ which violates our assumptions. Therefore we can conclude that the coefficient in \eqref{eq: phi pm} in greater than zero. 

Therefore if we choose any $0< \phi_+ < \min \{P(Y_{\bar a} = 1\vert x)P(Y_a = 0 \vert x), P(Y_{\bar a} = 0\vert x)P(Y_a = 1 \vert x) \}$ we get $\mathbb E[h \vert a, x ; \mathcal M_+] > \mathbb E[h \vert a, x ; \mathcal M_0]$, and any $\max \{P(Y_{\bar a} = 1\vert x)P(Y_a = 1 \vert x), P(Y_{\bar a} = 0\vert x)P(Y_a = 0 \vert x) \}< \phi_- < 0$, we get $\mathbb E[h \vert a, x ; \mathcal M_-] < \mathbb E[h \vert a, x ; \mathcal M_0]$. 
\end{proof}

\begin{lemma}\label{lemma: default action zero harm}
For (context dependent) default action $A = \bar a (x)$, $\mathbb E[h \vert \bar a(x), x ;\mathcal M] = 0$ $\forall$ $\mathcal M$
\end{lemma}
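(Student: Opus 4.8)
The plan is to collapse the expected harm at the default action into a single integral over the joint factual--counterfactual distribution and show that the integrand is identically zero on its support. Starting from the expected-harm expression and Definition~\ref{def: counterfactual harm}, and using $P(y_a\vert x)\,P(y^*, a^*\vert x, y_a) = P(y_a, y^*, a^*\vert x)$, I would write
\begin{equation}
\mathbb E[h \vert a_0(x), x ; \mathcal M] = \int\limits_{y, y^*, a^*} P(y_{a_0(x)}, y^*, a^*\vert x ; \mathcal M)\,\max\left\{0,\, U(a^*, x, y^*) - U(a_0(x), x, y)\right\},
\end{equation}
where $(a^*, y^*)$ are the action and outcome in the counterfactual world generated by the default policy (the trivial intervention $\text{do}(\emptyset)$).

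The first step is to pin down the counterfactual action and outcome. Since the default policy is deterministic, $P(a\vert x) = \delta(a - a_0(x))$, the counterfactual action is $a^* = a_0(x)$ with probability one, so the $a^*$ integral collapses. The crux is then to observe that both worlds set $A = a_0(x)$: the factual world does so by the intervention $\text{do}(A = a_0(x))$, and the counterfactual world does so because the deterministic default mechanism selects $a_0(x)$ in context $x$. Consequently the post-intervention mechanism $Y = f_Y(a_0(x), x, e^Y)$ and its driving noise are identical in the two worlds, so by the structural representation \eqref{eq: marginals from u} we have $Y_{a_0(x)}(e) = Y^*(e)$ for every noise state $e$. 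The joint distribution is therefore supported on the diagonal,
\begin{equation}
P(y_{a_0(x)}, y^*, a^*\vert x ; \mathcal M) = P(y_{a_0(x)}\vert x ; \mathcal M)\,\delta(y - y^*)\,\delta(a^* - a_0(x)).
\end{equation}

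Restricting the integrand to this support, the utility difference is $U(a_0(x), x, y) - U(a_0(x), x, y) = 0$, so $\max\{0, 0\} = 0$ throughout and the integral vanishes, yielding $\mathbb E[h \vert a_0(x), x ; \mathcal M] = 0$ for every $\mathcal M$ and every $U$. The main point to argue carefully is this coincidence of the factual and counterfactual outcome variables: because the default policy is deterministic it induces exactly the intervention $\text{do}(A = a_0(x))$ that the factual action performs, so the counterfactual outcome returned after conditioning on the factual observation is the factual outcome itself. Once that identification is secured the vanishing of the integrand is immediate and independent of the utility function and of the remaining structure of the SCM.
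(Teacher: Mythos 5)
Your proposal is correct and follows essentially the same route as the paper's own proof: collapse $a^*$ to $a_0(x)$ via the deterministic default policy, observe that the factual and counterfactual worlds apply the same mechanism $f_Y$ with the same noise and the same action so that the joint distribution is supported on the diagonal $\delta(y^* - y_{a_0(x)})$, and conclude that the integrand $\max\{0, U(a_0(x),x,y) - U(a_0(x),x,y)\} = 0$ everywhere. If anything, your explicit pointwise argument that $Y_{a_0(x)}(e) = Y^*(e)$ for every noise state $e$ spells out the justification that the paper leaves implicit in its delta-function substitution.
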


\begin{proof} In the following we suppress the notation $\bar a(x) = \bar a$.
\begin{align}
    \mathbb E[h \vert \bar a, x ;\mathcal M] &= \int\limits_{y^*, y}P(Y_{\bar a} = y^*, Y= y \vert \bar a, x ; \mathcal M) \max \{0, U(\bar a, x, y^*) - U(\bar a, x, y) \} \\
    &= \int\limits_{ y^*, y}P(Y_{\bar a} = y^*, Y_{\bar a} = y\vert x ; \mathcal M) \max \{0, U(\bar a, x, y^*) - U(\bar a, x, y) \}\\
    &= \int\limits_{y^*, y}P(Y_{\bar a} = y \vert x ; \mathcal M)\delta( y^*- y) \max \{0, U(\bar a, x, y^*) - U(\bar a, x, y) \}  \\
    &= \int\limits_{  y}P(Y_{\bar a} = y \vert x ; \mathcal M) \max \{0, U(\bar a, x, y) - U(\bar a, x, y) \} \\
    &= 0
\end{align}
$P(y \vert a, x) = P(y_a \vert x)$.

\end{proof}

\noindent \textbf{Theorem 3:} For any (context dependent) default action $A = \bar a (x)$, if there is a context $X = x$ where the user's utility function is outcome dependent for $\bar a(x)$ amd some other action $a\neq \bar a(x)$, then there is an outcome distributional shift such that $U$ is harmful in the shifted environment.

\begin{proof}
For the expected utility to not be harmful by Definition \ref{def: harmful objectives}, it must be that $\mathbb E[h\vert a, x] > \mathbb E[h\vert b, x]$ $\implies$ $\mathbb E[U\vert a, x] < \mathbb E[U\vert b, x]$. Given our assumption of outcome dependence, we know there is a context $X = x$ such that the utility functions for $\bar a(x)$ and $a\neq \bar a(x)$ overlap, that is $\min_y U(a, x, y) < \max_y U(\bar a(x), x, y)$ and $\max_y U(a, x, y) > \min_y U(\bar a(x), x, y)$. In the following we drop the notation $\bar a(x) = \bar a$. We can restrict our agent to choose between these two actions and construct an outcome distributional shift such that; i) The outcomes $Y_a$ and $Y_{\bar a}$ are binary with one outcome maximizing the utility for that action and the other minimizing the utility, i.e. $Y_a \in \{\max_y U(a, x, y), \min_y U(a, x, y) \}$ and $Y_{\bar a} \in \{\max_y U(\bar a, x, y), \min_y U(\bar a, x, y) \}$, ii) $\mathbb E[U\vert a, x] = \mathbb E[U\vert \bar a, x]$, iii) $P(y_{a} \vert x)$ and $P(y_{\bar a} \vert x)$ are non-deterministic. This follows from the fact that the set of possible expected utility values for an action $a$ is the set of mixtures over $U(a,x, y)$ with respect to $y$, and as $Y_a = 0,1$ are the extremal points of this convex set, the expected utility for action $a$ in context $x$ can be written as $P(Y_a = 0\vert x)U(a, x, 0) + P(Y_a = 1\vert x)U(a, x, 1)$. Then, as the utility functions for $a$ and $\bar a$ overlap there is point in the intersection of these convex sets that is non-extremal (and hence, a non-deterministic mixture). 

By Lemma \ref{lemma: default action zero harm} the default action causes zero expected harm. By Lemma \ref{lemma cf variation} we can construct a shifted environment $\mathcal M_0$ where the non-default action $a \neq \bar a$ has non-zero harm for any non-deterministic $P(y_a \vert x)$. We can therefore construct $\mathcal M_0$ such that i) $\mathbb E[Y_a \vert x] = \mathbb E[Y_{\bar a} \vert x]$, and ii) $\mathbb E[h \vert a, x] > \mathbb E[h \vert \bar a, x]$, violating our requirement that $\mathbb E[h\vert a, x] > \mathbb E[h\vert b, x]$ $\implies$ $\mathbb E[U\vert a, x] < \mathbb E[U\vert b, x]$.

\end{proof}

\noindent \textbf{Theorem 4:} For any (context dependent) default action $A = \bar a (x)$, if there is a context $X = x$ where the user's utility function is outcome dependent for $\bar a(x)$ and two other actions $a_1, a_2\neq \bar a(x)$, then for any factual objective function $J$ there is an outcome distributional shift such that maximizing the $J$ is harmful in the shifted environment.

\begin{proof}
By assumption there is a context $X = x$ for which the utility functions for $a_1, a_2$ and $\bar a(x)$ overlap. In the following we drop the notation $\bar a(x) = \bar a$. There is a choice of non-deterministic outcome distributions $P(y_{\bar a}\vert x)$, $P(y_{a_1}\vert x)$ and $P(y_{a_2}\vert x)$ such that all three actions have the same expected utility. By Lemma \ref{lemma cf variation} for any non-deterministic outcome distribution we can choose $\mathcal M_0$ such that $\mathbb E[h \vert a_1, x ;\mathcal M_0] > 0$, and $\mathbb E[h \vert a_2, x ;\mathcal M_0] > 0$, and by Lemma \ref{lemma: default action zero harm} $E[h \vert \bar a, x ;\mathcal M] = 0$ $\forall$ $\mathcal M$. Therefore $\exists$ $\mathcal M_0$ that is an outcome distributional shift of the original environment $\mathcal M$ such that $\bar a, a_1, a_2$ have the same expected utility, $\bar a$ has zero expected harm and $a_1, a_2$ have non-zero expected harm.  

If $\mathbb E[h \vert a_1, x \mathcal M_0] = \mathbb E[h \vert a_2, x ;\mathcal M_0]$ then by Lemma \ref{lemma cf variation} there are outcome-shifted environments $\mathcal M_\pm$ such that $\bar a, a_1$ and $a_2$ have the same factual statistics as in $\mathcal M_0$ and $\mathbb E[h \vert a_2, x\mathcal M_0] = \mathbb E[h \vert a_2, x\mathcal M_\pm] $, but the harm caused by $a_1$ is increased(descreased) by some non-zero amount. Therefore in $\mathcal M_+$ $a_1$ and $a_2$ have the same expected utility but $a$ has a strictly higher expected harm, and in order to be non-harmful it must be that $\mathbb E[J \vert a_1, x ; \mathcal M_+] < \mathbb E[J \vert a_2, x ; \mathcal M_+]$. Likewise in $\mathcal M_-$ $a_1$ and $a_2$ have the same expected utility but the expected harm for $a_1$ is strictly lower than for $a_2$, therefore in order to be non-harmful it must be that $\mathbb E[J \vert a_1, x ; \mathcal M_-] > \mathbb E[J \vert a_2, x ; \mathcal M_-]$. Finally we note that $\mathbb E[J \vert a, x ; \mathcal M_+] = \mathbb E[J \vert a, x ; \mathcal M_-] = \mathbb E[J \vert a, x ; \mathcal M_0]$ $\forall$ $a\in A$ as the factual statistics are identical in $\mathcal M_0, \mathcal M_\pm$, i.e. $P(y_a \vert x ; \mathcal M_+) = P(y_a \vert x ; \mathcal M_-) = P(y_a \vert x ; \mathcal M_0)$. Therefore any $J$ must be harmful in either $\mathcal M_+$ and $\mathcal M_-$, and therefore there is an outcome distributional shift $\mathcal M \rightarrow \mathcal M_+$ or $\mathcal M \rightarrow \mathcal M_-$ such that $J$ is harmful in the shifted environment.  

If $\mathbb E[h \vert a_1, x \mathcal M_0] \neq \mathbb E[h \vert a_2, x ;\mathcal M_0]$, assume without loss of generality that $\mathbb E[h \vert a_1, x \mathcal M_0] > \mathbb E[h \vert a_2, x ;\mathcal M_0]$. As $\bar a, a_1$ and $a_2$ have the equal expected utilities then so does any mixture of these actions, in $\mathcal M_0$ and $\mathcal M_\pm$. Restrict the agent to choose between action $a_2$ and a mixture of actions $\bar a$ and $a_1$---i.e. a stochastic or `soft' intervention \citep{correa2020calculus,pearl2009causality}, which involves replacing the causal mechanism for $A$ with a mixture $\tau := q [ A = a_1] + (1-q) [ A = a_0]$ where $q$ is an independent binary noise term. By linearity the expected utility for this mixed action is $\mathbb E[U_\tau \vert x] = q\mathbb E[U_{a_1}\vert x] + (1-q) E[U_{\bar a}\vert x] = \mathbb E[U_{a_1}\vert x]$ as all three actions have the same expected utility, and has an expected harm $\mathbb E[ h \vert \tau, x; \mathcal M_0] = q\mathbb E[h\vert a_1, x ;\mathcal M] + (1-q) \mathbb E[h\vert \bar a, x ;\mathcal M] = q\mathbb E[h\vert a_1, x ;\mathcal M]$ as $\mathbb E[h\vert \bar a, x ;\mathcal M]=0$ $\forall$ $\mathcal M$. Therefore as $\mathbb E[h\vert a_1, x ;\mathcal M]>0$ and $\mathbb E[h\vert a_2, x ;\mathcal M]>0$ we can choose $p>0$ such that $\mathbb E[h \vert \tau, x; \mathcal M_0] = \mathbb E[h\vert a_2, x;\mathcal M_0]$. Therefore in $\mathcal M_0$, $a_2$ and $\tau$ have the same expected harm and utility, and in $\mathcal M_+$ they have the same expected utility but $\tau$ is more harmful than $a_2$ as $\mathbb E[h \vert a_1, x ; \mathcal M_+] >\mathbb E[h \vert a_1, x ; \mathcal M_0] $ and $p>0$, and in $\mathcal M_-$ they have the same expected utility but $a_2$ is more harmful that $\tau$. As the factual statistics $P(y_a \vert x)$ are identical for $\mathcal M_0$ and $\mathcal M_\pm$, so is the value of any factual objective function across all three environments. Hence, any factual objective function must be harmful in either $\mathcal M_+$ or $\mathcal M_-$.
\end{proof}

\section{}\label{supp note: appendix K}

In this appendix we discuss the limitations of our results, explore some potential applications to more complicated domains, and discuss related works; namely counterfactual fairness \citep{NIPS2017_a486cd07} and path-specific objectives \citep{farquhar2022path}.

\subsection{Limitations}\label{supp note: limitations}

Our proposed framework for dealing with harm does not come without limitations to be investigated in future work. Similar to other works on causal inference (see \citep{scholkopf2021toward} for review), our current setup assumes that all variables are observed when computing the counterfactual (no unobserved confounding), which may limit the applicability of our measure for harm to certain scenarios. For the sake of generality our theoretical results are derived in the SCM framework, and so taken at face value they assume knowledge of the SCM for the data generating process. While Theorem \ref{theorem: factual objective bad} establishes that there is no factual objective function that can give robust harm aversion in all situations, calculating the expected harm using counterfactual inference is not always the most appropriate method in some cases (such as when the SCM or some good approximation of it is not known). Several methods have been proposed for results with similar restrictions (e.g. in counterfactual fairness \citep{kilbertus2020sensitivity}). One approach that is particularly appropriate for dealing with harm is bounding counterfactuals, which allows for tight upper bounds to equation \eqref{eq: exp harm} to be dervied using knowledge of the interventional distributions \citep{tian2000probabilities} or a combination of observational data, interventional data, and assumptions on the generative functions (such as monotonicity) \citep{zhang2022partial}. Using this upper bound in place of the counterfactual harm in the HPU (Definition \ref{def: hpu}) is sufficient to ensure that the desired degree of harm aversion is satisfied (guaranteeing no actions violate a fixed harm-benefit ratio). One example of this upper bound is for binary actions and outcomes, i.e. for $A, Y \in \{0, 1\}$ where without loss of generality $\bar a = 0$, and $U = U(a, y)$. Using the tight bounds from \citep{tian2000probabilities}, 

\begin{equation}
    P(Y_{0} = 1  , Y_1 = 0) \leq \min \left\{ P(Y_0 = 1), P(Y_1 = 0) \right\}
\end{equation}

Note that the RHS of this inequality is identifiable without knowledge of the SCM. From this we can upper bound the harm,

\begin{align}
    h(A = 1, Y = 0) &= \sum\limits_{y^* \in \{0, 1\}}P(Y_{0} = y^* \vert y, a) \text{max}\{0, U(0, y^*) - U(a, y)\} \\
    &\leq \sum\limits_{y^* \in \{0, 1\}}\min  \left\{ 1, P(Y_1 = 0)/P(Y_0 = y^*) \right\} \text{max}\{0, U(0, y^*) - U(1, 0)\} \\
tw\end{align}

Another potential limitation is the assumption of knowledge of the SCM (or a good approximation), which is unlikely to be valid in complex domains. In section \ref{section: experiments} we demonstrated our theoretical results in the relatively simple domain of dose-response analysis using GAM models. On the one hand these settings can be of high impact (for example GAMs are arguably the most widely used models in clinical trials, epidemiology and social policy precisely because they are simple and interpretable, and these domains have a high risk of harm from misspecified models \citep{hastie1995generalized}). This also allows us to observe that harm aversion has a significant impact even in these simple scenarios, without these results potentially being due to the intricacies of specific model architectures or inference schemes. However, in future work our framework should be extendable to more complex domains where we do not typically have access to the true SCM that describes the data generating process but some approximation (one exception being in simulations, where counterfactuals can be directly measured as the exogenous noise state is known). 

There have been several recent proposals for performing counterfactual inference using deep learning methods, with promising results in diverse complex domains including learning deep structural causal models for medical imaging \citep{pawlowski2020deep}, visual question answering \citep{niu2021counterfactual}, vision-and-language navigation in robotics \citep{parvaneh2020counterfactual} and text generation \citep{madaan2021generate}. These studies evidence that deep learning algorithms can learn to make good counterfactual inferences that can be used to support decision making without perfect knowledge of the underlying SCM. This is somewhat analogous to the fact that human decision making often utilizes counterfactual reasoning for various cognitive tasks \citep{epstude2008functional} (for example, it is important for legal and ethical reasoning \citep{lagnado2013causal}), in spite of the fact that humans clearly do not having access to perfect structural causal models of their environments but learn approximations through heuristics and inductive biases \citep{gerstenberg2021counterfactual}.

We now briefly discuss two examples of current implementations of counterfactual reasoning in complex domains where our framework could be applied.

\textbf{Example: medical imaging.} Consider a recent study developing a deep structural causal models for generating counterfactual images of brain CT scans in patients with multiple sclerosis (MS) \citep{reinhold2021structural}. MS causes brain lesions and abnormalities, and CT scans of the brain can be used to predict health outcomes for patients including disease progression and long-term patient outcomes \citep{tousignant2019prediction}. MS is known to have a wide range of demographic risk factors including smoking and exposure to chemical pollutants, and these factors are known to cause artefacts in brain scans such as lesions independently of MS. To determine the harm caused by the patient by the disease one has to determine what the patient's `healthy' scan would look like (if they did not have MS), given their factual scans which encode information about latent factors such as smoking and exposure to environmental pollutants (which also cause negative health outcomes through causal pathways that are not mediated by MS). Translating to our framework, the counterfactual harm Definition \ref{def: counterfactual harm} can be estimated by generating samples of counterfactual healthy images $y^* \sim P(Y_{\bar a} \vert a,  y ; \mathcal M)$ where $Y_{\bar a}$ is the counterfactual image under the intervention that sets the latent variable for duration of symptoms to zero (as in used to generate healthy images in \citep{reinhold2021structural}), $Y$ is the factual image, $A = a$ is the known factual duration of symptoms and $\mathcal M$ is the deep structural causal model derived in \citep{reinhold2021structural}. Finally, the counterfactual harm \eqref{eq: exp harm} can be estimated using a reasonable utility function $U(y)$ such as the predicted quality adjusted life years (QALYs) for a given CT scan evaluated using a deep learning model for predicting patient outcomes \citep{tousignant2019prediction}. The resulting harm measure would be the expected decrease in QALYs caused by the presence of MS. 

\textbf{Example: reinforcement learning.} Consider a recent study where a reinforcement learning agent is trained to determine optimal treatment policies for major depression \citep{tsirtsis2021counterfactual}. A structural causal model is learned for the Markov decision process and used to generate counterfactual explanations for patient outcomes. Sequential decision making in medicine is a good use-case for our framework as there are well defined default treatment policies $\pi (\bar a \vert s)$ (e.g patients with certain medical history receive a standardized treatments), and there is increasing interest in using reinforcement learning techniques to improve patient outcomes by adapting treatments over time and personalizing them to patients (see for example \citep{liu2020reinforcement,yu2021reinforcement}). However, care must be taken to ensure that any learned treatment policies are not overly harmful compared to the standardized treatment policies that the patient would have received. For example, some treatment policies may improve patient outcomes on average by benefiting some patients while causing other patients worse outcomes than they would have had (much like with the allergic reaction in Examples 1 \& 2---indeed these heterogeneous responses to treatments are commonplace in psychiatry and medicine in general). To apply our framework for harm aversion the agent can be trained with a harm-averse Bellman equation,

\begin{equation}\label{eq: harm-averse bellman eqution}
    Q_\lambda (a, s ; \mathcal M) = \sum\limits_{s'}P(s'\vert a, s) \left[R(a, s, s') - \lambda h(a, s, s' ; \mathcal M) + \gamma \sum\limits_{a'} \pi(a' \vert s') Q_\lambda(a', s'; \mathcal M)  \right]
\end{equation}

where $\mathcal M$ is an SCM of the Markov decision process (derived in \citep{tsirtsis2021counterfactual}) and,

\begin{equation}
    h(a, s, s' ; \mathcal M) = \sum\limits_{s^*}P(S'_{\bar a(s)} = s^* \vert a, s, s')\text{max}\{0, R(\bar a(s), s, s^*) - R(a, s, s') \}
\end{equation}

where $a(s)$ is the deterministic default treatment choice that the patient would receive if they followed the standardized treatment rules. For $\lambda = 0$ this reduces to the standard Bellman equation, but for $\lambda > 0$ the agent chooses a policy that maximizes the discounted cumulative HPU (Definition \ref{def: hpu}) rather than the reward, and so will avoid actions that achieve higher cumulative reward at a harm-benefit ratio less than $1 : 1 + \lambda$ (as described in section \ref{section: hard decisions} and illustrated in Section \ref{section: experiments}). 

\subsection{Relation to bigger picture AI safety and ethics}\label{supp note: ethics and safety}

There very real potential for algorithms to cause harm have been widely studied (see for example \citep{amodei2016concrete,weidinger2021ethical,mehrabi2021survey,challen2019artificial,leslie2019understanding} for reviews). It is important to note that our results do not solve these established AI safety problems. Harm-averse agents will still be susceptible to issues such as failures of robustness, poor risk aversion, model misspecification, and unsafe exploration, which can ultimately lead to harmful outcomes even if the agent is harm averse. For example, the problem of model-misspecification has been studied in relation to attempts to deal with algorithmic bias using causal models \citep{kilbertus2020sensitivity}, and we expect harm-averse agents to be similarly susceptible to problems of model misspecification. Instead, our results point out a new failure mode in addition to these known examples---that algorithms that pursue factual objectives (which describes the vast majority of existing implementations) are incapable of robustly reasoning about harm and avoiding harmful actions. The solution we propose is to motivate the design of algorithms that are capable of reasoning counterfactually about their actions and objectives. 

\subsection{Related work}\label{supp note: related work}

In its original conception counterfactual fairness deals with prediction tasks $\hat Y : X \rightarrow Y$ where the desire is to have a predictor $\hat Y$ that is not unfairly influenced by a protected attribute $A$ such as gender or race. Note $A$ is a feature that typically cannot be intervened on, whereas is our setup $A$ denotes an agent's action. Harm is conceptually distinct from fairness---for example, it is possible to apply a strictly harmful action fairly---but the two measures can be used in tandem. For example, one could quantify if a action or decision was unfair, and whether or not the user was harmed due to this unfair action. Counterfactual fairness quantifies this unfair influence causally, using the counterfactual constraint,

\begin{equation}\label{eq: counterfactual fairness}
    P(\hat Y_{a} = y \vert X = x, A = a) = P(\hat Y_{a'} = y \vert X = x, A = a) \quad \forall \, a'\in A, \, y \in \hat Y
\end{equation}

which states that the probability of predicting any given outcome should not be caused on average by the protected attribute $A$, where the counterfactual $P(\hat Y_{a'} = y \vert X = x, A = a)$ is the probability of $\hat Y$ given $A = a$ if $A$ had been equal to $a'$. Note that the counterfactual in \eqref{eq: counterfactual fairness} does not deal with the joint statistics of the factual outcome $\hat Y_a$ and the counterfactual outcome $\hat Y_{a'}$. In \citep{chiappa2019path} a path-specific variant of counterfactual fairness in proposed, which similar to our results makes use of counterfactual contrasts with a baseline (in the case of \citep{chiappa2019path} this a baseline value of the sensitive attribute). Similar to path-specific objectives below, this approach consider the case where there are fair and unfair causal pathways for the sensitive attribute to influence $Y$ (where $Y$ is a decision variable). 

Another perhaps more related use of counterfactual inference for ethical AI is path-specific objectives \cite{farquhar2022path}. This work refines the expected utility to take into account the fact that we often want to maximize utility via specific causal pathways due to ethical constraints. For example we can consider a simple model where the agent's action $A$ influences user feedback $Y$ (and utility $U(y)$) but also effects the users preferences $H$ where $A\rightarrow Y$, $A\rightarrow H$ and $H\rightarrow Y$. To maximize utility without intentionally manipulating the user we must maximize along the causal pathway $(1): A \rightarrow Y$ without including contributions to the expected utility from the mediator pathway $(2): A \rightarrow H \rightarrow Y$. This involves replacing the expected utility with its path-specific counterfactual equivalent, much in the same way that our path-specific harm (Definition \ref{theorem: cf harm expanded}) generalizes our path-independent definition of harm (Definition \ref{def: counterfactual harm}). As such the path-specific expected utility is still agnostic to harm just as the expected utility is, although it could be combined with the path specific harm in \cite{farquhar2022path} to give a path-specific variant of the HPU (Definition \ref{def: hpu}). This would allow for harm averse decision making where the necessary degree of harm-aversion $\lambda_{(i)}$ varies depending on the causal path $(i)$---for example, if we desire agents that have a high aversion for being directly harmful, but a lower degree of harm-aversion for indirect harm mediated by the actions of other agents (as described in Appendix \ref{supp note: cca problems}).


\end{document}